\documentclass[twoside,11pt]{article}

\usepackage[T1]{fontenc}
\usepackage{custom_no_natbib}
\usepackage{microtype}
\usepackagewithoutnatbib{jmlr2e}
\usepackage{custom_citations}
\usepackage{custom_commands}

\bibliography{references}

\usepackage{nicefrac}
\usepackage{caption}
\usepackage{lastpage}

\usepackage[dvipsnames]{xcolor}
\definecolor{urlblue}{rgb}{0,0.08,0.45}

\jmlrheading{22}{2021}{1-\pageref{LastPage}}{11/20}{5/21}{20-1260}{James T. Wilson, Viacheslav Borovitskiy, Alexander Terenin, Peter Mostowsky, and Marc Peter Deisenroth}
\ShortHeadings{Pathwise Conditioning of Gaussian Processes}
{Wilson, Borovitskiy, Terenin, Mostowsky, and Deisenroth}
\firstpageno{1}

\begin{document}

\title{Pathwise Conditioning of Gaussian Processes}

\author{\name James T. Wilson\textsuperscript{\ensuremath{*}}
\email j.wilson17@imperial.ac.uk \\
\addr Imperial College London
\AND
\name Viacheslav Borovitskiy\textsuperscript{\ensuremath{*}}
\email viacheslav.borovitskiy@gmail.com \\
\addr St. Petersburg State University and\newline St. Petersburg Department of Steklov Mathematical Institute of Russian Academy of Sciences
\AND
\name Alexander Terenin\textsuperscript{\ensuremath{*}}
\email a.terenin17@imperial.ac.uk \\
\addr Imperial College London
\AND
\name Peter Mostowsky\textsuperscript{\ensuremath{*}}
\email pmostowsky@gmail.com \\
\addr St. Petersburg State University
\AND
\name Marc Peter Deisenroth
\email m.deisenroth@ucl.ac.uk \\
\addr Centre for Artificial Intelligence, University College London
}

\editor{Kilian Weinberger}
\maketitle

\begin{table}[b!]
\vspace*{-1.5ex}
\footnoterule
\footnotesize\textsuperscript{\ensuremath{*}}Equal contribution.
\end{table}

\begin{abstract}As Gaussian processes are used to answer increasingly complex questions, analytic solutions become scarcer and scarcer. 
Monte Carlo methods act as a convenient bridge for connecting intractable mathematical expressions with actionable estimates via sampling.
Conventional approaches for simulating Gaussian process posteriors view samples as draws from marginal distributions of process values at finite sets of input locations. 
This distribution-centric characterization leads to generative strategies that scale cubically in the size of the desired random vector.
These methods are prohibitively expensive in cases where we would, ideally, like to draw high-dimensional vectors or even continuous sample paths.
In this work, we investigate a different line of reasoning: rather than focusing on distributions, we articulate Gaussian conditionals at the level of random variables.
We show how this \emph{pathwise} interpretation of conditioning gives rise to a general family of approximations that lend themselves to efficiently sampling Gaussian process posteriors. 
Starting from first principles, we derive these methods and analyze the approximation errors they introduce. 
We, then, ground these results by exploring the practical implications of pathwise conditioning in various applied settings, such as global optimization and reinforcement learning.
\end{abstract}

\begin{keywords}
Gaussian processes, approximate posteriors, efficient sampling.
\end{keywords}

\section{Introduction}
\label{sec:introduction}

In machine learning, the narrative of Gaussian processes (GPs) is dominated by talk of distributions \cite{rasmussen06}. This view is often helpful and convenient: a Gaussian process is a random function; however, seeing as we may trivially marginalize out arbitrary subsets of this function, we can simply focus on its behavior at a finite number of input locations. When dealing with regression and classification problems, this reduction simplifies discourse and expedites implementation by allowing us to work with joint distributions at training and test locations instead of random functions.

Model-based learning and prediction generally service broader goals. For example, when making decisions in the face of uncertainty, models enable us to simulate the consequences of our actions. 
Decision-making, then, amounts to optimizing the expectation of a simulated quantity of interest, such as a cost or a reward. 
Be it for purposes of safety or for balancing trade-offs between long-term and short-term goals, it is crucial that these simulations faithfully portray both knowledge and uncertainty. 
Gaussian processes are known to make accurate, well-calibrated predictions and, therefore, stand as the model-of-choice in fields such as Bayesian optimization \cite{shahriari2015taking}, uncertainty quantification \cite{bect2012sequential}, and model-based reinforcement learning \cite{Deisenroth2015}.

Unfortunately, marginal distributions and simulations do not always go hand in hand. 
When the quantity of interest is a function of a process value $f(\v{x}_{*})$ at an individual input location $\v{x}_{*}$, its expectation can sometimes be obtained analytically. 
Conversely, when this quantity is a function of process values $\v{f}_{*} = f(\m{X}_{*})$ at multiple locations $\m{X}_{*}$, its expectation is generally intractable. 
Rather than solving these integrals directly in terms of marginal distributions $p(\v{f}_{*})$, we therefore estimate them by averaging over many simulations of $\v{f}_{*}$. Drawing $\v{f}_{*}$ from $p(\v{f}_{*})$ takes $\c{O}(*^{3})$ time, where $* = \vert \m{X}_{*} \vert$ is the number of input locations. 
Hence, distribution-based approaches to sampling $\v{f}_{*}$ quickly become untenable as this number increases. 
In these cases, we may be better off thinking about GPs from a perspective that naturally lends itself to sampling

In the early 1970s, one such view surfaced in the then nascent field of geostatistics \cite{journel1978mining, chiles12}. Instead of emphasizing the statistical properties of Gaussian random variables, "conditioning by Kriging" encourages us to think in terms of the variables themselves. We study the broader implications of this paradigm shift to develop a general framework for conditioning Gaussian processes at the level of random functions. Formulating conditioning in terms of sample paths, rather than distributions, allows us to separate out the effect of the prior from that of the data. By leveraging this property, we can use \emph{pathwise conditioning} to efficiently approximate function draws from GP posteriors. As we will see, working with sample paths enables us to simulate process values $\v{f}_{*}$ in $\c{O}(*)$ time and brings with it a host of additional benefits.

The structure of the remaining text is as follows. Section~\ref{sec:mvn_conditioning} and Section~\ref{sec:gp_conditioning} introduce pathwise conditioning of Gaussian random vectors and processes, respectively. Section~\ref{sec:priors} surveys strategies for approximating function draws from GP priors, while Section~\ref{sec:updates} discusses methods for mapping from prior to posterior random variables. Section~\ref{sec:error} studies the behavior of errors introduced by different approximation techniques, and Section~\ref{sec:applications} complements this theory with a bit of empiricism by exploring a number of examples. 
Section~\ref{sec:conclusion} concludes.

\paragraph{Notation}

By way of example, we denote matrices as $\m{A}$ and vectors as $\v{a}$. 
We write $\v{x} = \v{a} \oplus \v{b}$ for the direct sum (i.e. concatenation) of vectors $\v{a}$ and $\v{b}$. 
Throughout, we use $\vert \. \vert$ to denote the cardinality of sets and dimensionality of vectors. 
When dealing with covariance matrices $\m{\Sigma} = \Cov(\v{x}, \v{x})$, we use subscripts to identify corresponding blocks. 
For example, $\m{\Sigma}_{\v{a},\v{b}} = \Cov(\v{a}, \v{b})$. 
As shorthand, we denote the evaluation of a function $f : \c{X} \to \mathbb{R}$ at a finite set of locations $\m{X}_{*} \subset \c{X} $ by the vector $\v{f}_{*}$. 
Putting these together, when dealing with random variables $\v{f}_{*} = f(\m{X}_{*})$ and $\v{f}_{n} = f(\m{X}_{n})$, we write $\m{K}_{*, n} = \Cov(\v{f}_{*}, \v{f}_{n})$.

\section{Conditioning Gaussian distributions and random vectors}
\label{sec:mvn_conditioning}
A random vector $\v{x} = (x_{1}, \ldots, x_{n}) \in \R^n$ is said to be Gaussian if there exists a matrix $\m{L}$ and vector $\v{\mu}$ such that 
\[
\label{eqn:location_scale}
\v{x} &\eqd \v{\mu} + \m{L}\v{\zeta}
&
\v{\zeta} &\~\c{N}(\v{0},\m{I}),
\]
where $\c{N}(\v{0},\m{I})$ is the standard (multivariate) normal distribution, the probability density function of which is given below. Each such distribution is uniquely identified by its first two moments: its mean $\v{\mu} = \E(\v{x})$ and its covariance $\m{\Sigma} = \E[(\v{x} - \v{\mu})(\v{x} - \v{\mu})^{\top}]$.
Assuming it exists, the corresponding density function is defined as
\[
p(\v{x})
=
    \c{N}(\v{x} \given \v{\mu}, \m{\Sigma})
= 
    \frac{1}{\sqrt{\left\vert 2\pi \m{\Sigma} \right\vert}}
    \exp\left(-\frac{1}{2}(\v{x} - \v{\mu})^\top \m{\Sigma}^{-1} (\v{x} - \v{\mu})\right).
\label{eqn:gaussian_density}
\]
The representation of $\v{x}$ given by~\eqref{eqn:location_scale} is commonly referred to as its \emph{location-scale} form and stands as the most widely used method for generating Gaussian random vectors. Since $\v{\zeta}$ has identity covariance, any matrix square root of $\m{\Sigma}$, such as its Cholesky factor $\m{L}$ with $\m{\Sigma} = \m{L}\m{L}^{\top}$, may be used to draw $\v{x}$ as prescribed by \eqref{eqn:location_scale}.

Here, we focus on multivariate cases $n > 1$ and investigate different ways of reasoning about random variables $\v{a} \given \v{b} = \v{\beta}$ for non-trivial partitions $\v{x} = \v{a} \oplus \v{b}$.

\subsection{Distributional conditioning}
\label{sec:mvn_conditioning_distributional}

The quintessential approach to deriving the distribution of $\v{a}$ subject to the condition $\v{b} = \v{\beta}$ begins by employing the usual set of matrix identities to factor $p(\v{b})$ from $p(\v{a}, \v{b})$. Applying Bayes' rule, $p(\v{b})$ then cancels out and $p(\v{a} \given \v{b} = \v{\beta})$ is identified as the remaining term---namely, the Gaussian distribution $\c{N}(\v{\mu}_{\v{a} \given \v{\beta}}, \m{\Sigma}_{\v{a}, \v{a} \given \v{\beta}})$ with moments
\[
\label{eqn:mvn_cond_moments}
\v{\mu}_{\v{a} \given \v{\beta}} 
&= 
    \v{\mu}_{\v{a}} 
    + \m{\Sigma}_{\v{a}, \v{b}}\m{\Sigma}_{\v{b}, \v{b}}^{-1}(\v{\beta} - \v{\mu}_{\v{b}})
&
\m{\Sigma}_{\v{a},\v{a} \given \v{\beta}}
&= 
    \m{\Sigma}_{\v{a},\v{a}} - 
    \m{\Sigma}_{\v{a}, \v{b}}^{\vphantom{{-1}}}
    \m{\Sigma}_{\v{b}, \v{b}}^{-1}
    \m{\Sigma}_{\v{b}, \v{a}}^{\vphantom{{-1}}}
.
\]
Having obtained this conditional distribution, we can now generate $\v{a} \given \v{b} = \v{\beta}$ by computing a matrix square root of $\m{\Sigma}_{\v{a},\v{a} \given \v{\beta}}$ and constructing a location-scale transform \eqref{eqn:location_scale}.

Due to their emphasis of conditional distributions, we refer to methods that represent or generate a random variable $\v{a} \given \v{b} = \v{\beta}$ by way of $p(\v{a} \given \v{b} = \v{\beta})$ as being \emph{distributional} in kind.
This approach to conditioning is not only standard, but particularly natural when quantities of interest may be derived analytically from $p(\v{a} \given \v{b} = \v{\beta})$. 
Many quantities, such as expectations of nonlinear functions, cannot be deduced analytically from $p(\v{a} \given \v{b} = \v{\beta})$ alone, however. In these case, we must instead work with realizations of $\v{a} \given \v{b} = \v{\beta}$. Since the cost of obtaining a matrix square root of $\m{\Sigma}_{\v{a},\v{a} \given \v{\beta}}$ scales cubically in $\vert \v{a} \vert$, distributional approaches to evaluating these quantities struggle to accommodate high-dimensional random vectors. To address this issue, we now consider Gaussian conditioning in another light.

\subsection{Pathwise conditioning}
\label{sec:mvn_conditioning_pathwise}
\begin{figure}
    \centering
    \begin{minipage}{0.45\textwidth}
        \centering
        \includegraphics[width=\textwidth]{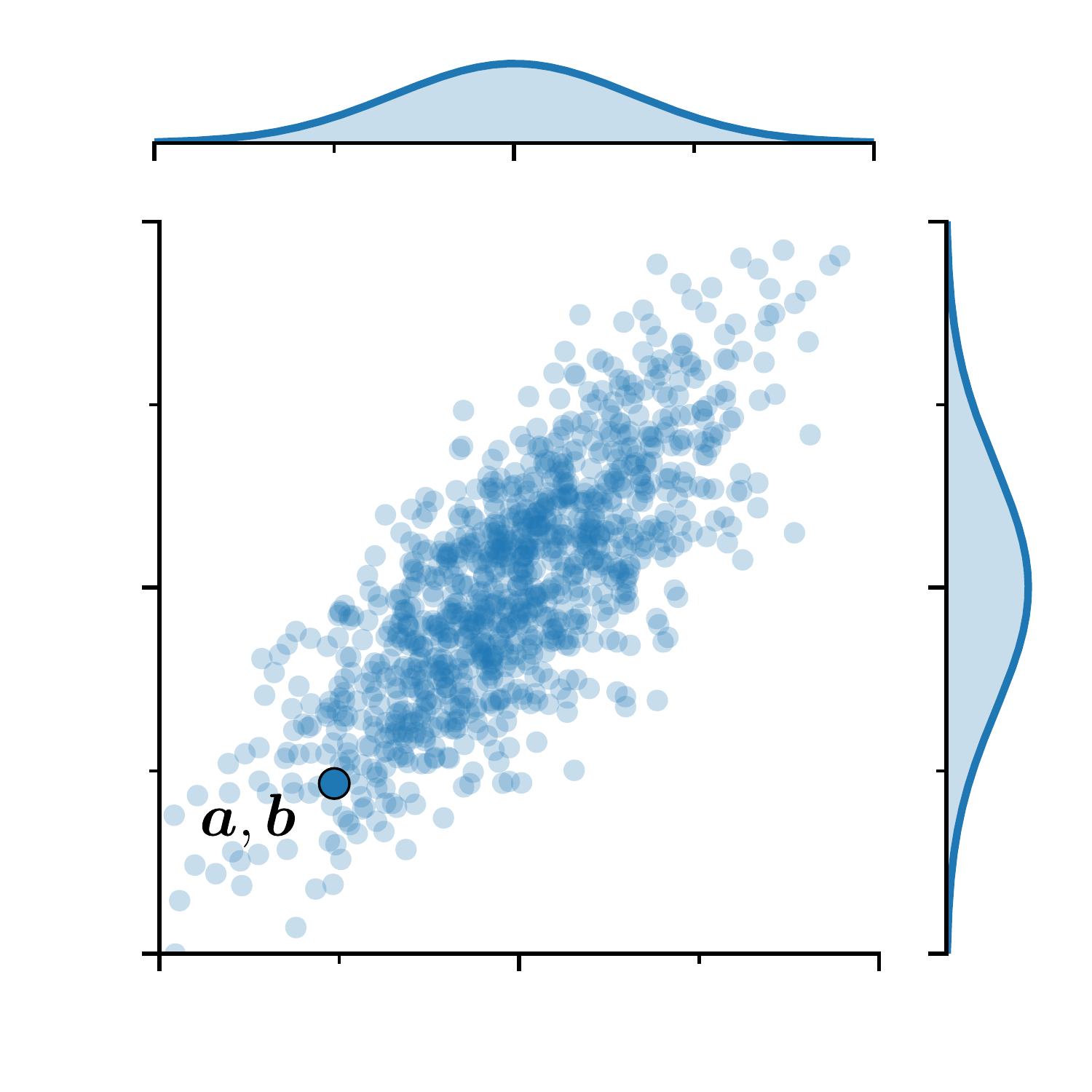}
    \end{minipage}\hfill
    \begin{minipage}{0.45\textwidth}
        \centering
        \includegraphics[width=\textwidth]{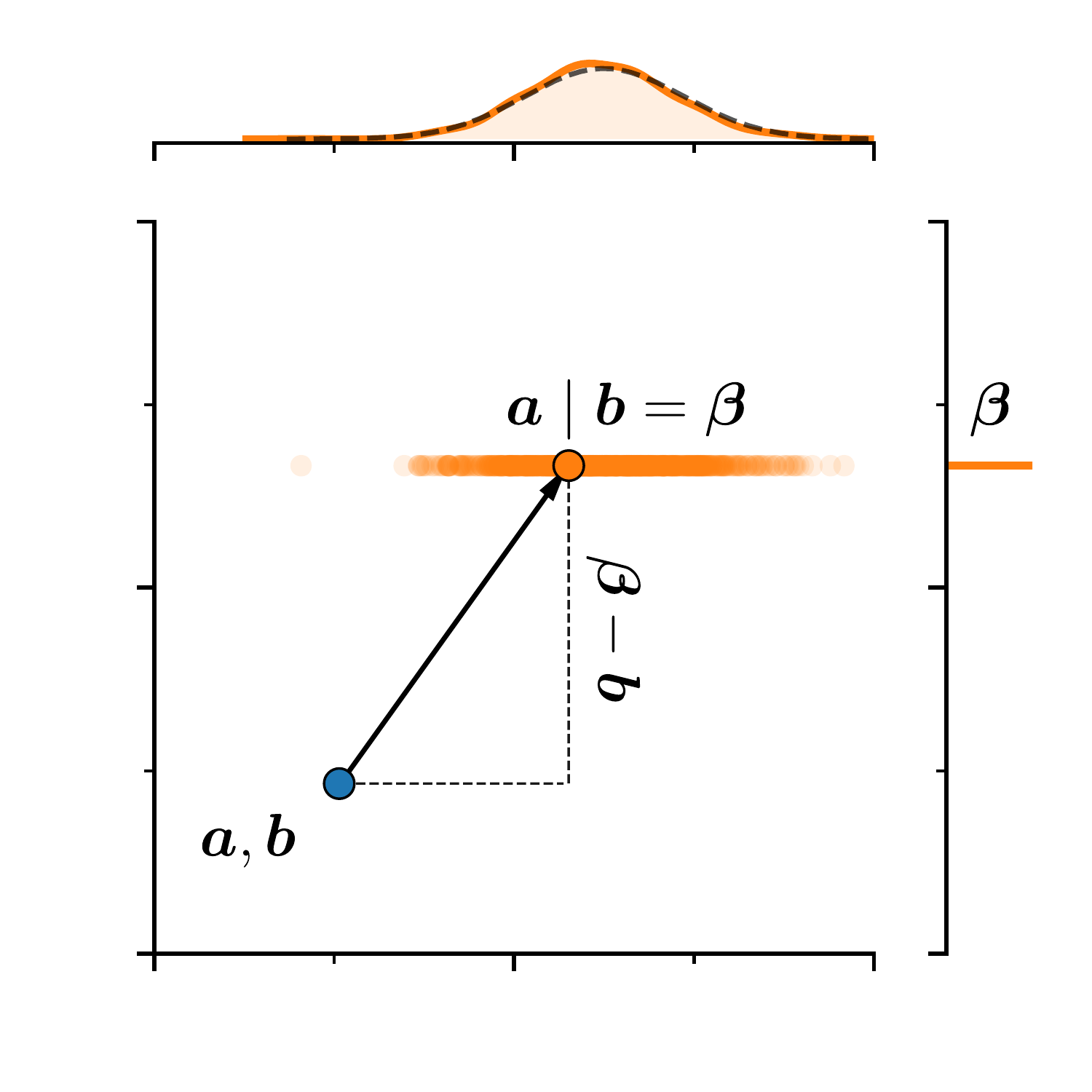}
    \end{minipage}
\caption{Visualization of Matheron's update rule for a bivariate normal distribution with correlation coefficient $\rho = 0.75$. \emph{Left:} Draws from $p(\v{a}, \v{b})$ are shown alongside the marginal distributions of $\v{a}$ and $\v{b}$. \emph{Right:} Theorem~\ref{thm:matheron_finite} is used to update samples shown on the left subject to the condition $\v{b} = \v{\beta}$. This process is illustrated in full for a particular draw. \emph{Top right:} the empirical distribution of the update samples is compared with $p(\v{a} \given \v{b} = \v{\beta})$.}
\label{fig:bvn_update}
\end{figure}

Instead of taking a \emph{distribution-first} stance on Gaussian conditionals, we may think of conditioning directly in terms of random variables. In this \emph{variable-first} paradigm, we will explicitly map samples from the prior to draws from a posterior and let the corresponding relationship between distributions follow implicitly. Throughout this work, we investigate this notion of \emph{pathwise conditioning} through the lens of the following result.

\begin{theorem}[Matheron's Update Rule]
\label{thm:matheron_finite}
Let $\v{a}$ and $\v{b}$ be jointly Gaussian, centered random variables. Then, the random variable $\v{a}$ conditional on $\v{b} = \v{\beta}$ may be expressed as
\[
\label{eqn:matheron_finite}
(\v{a} \given \v{b} = \v{\beta})
    \eqd
    \v{a} + \m{\Sigma}_{\v{a}, \v{b}}^{\vphantom{-1}} \m{\Sigma}_{\v{b}, \v{b}}^{-1}(\v{\beta} - \v{b}).
\]
\end{theorem}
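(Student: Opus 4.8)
The cleanest route, and the one I would take, rests on a single structural observation: the residual $\v{r} = \v{a} - \m{\Sigma}_{\v{a}, \v{b}}\m{\Sigma}_{\v{b}, \v{b}}^{-1}\v{b}$ is \emph{uncorrelated} with $\v{b}$. Indeed, $\Cov(\v{r}, \v{b}) = \m{\Sigma}_{\v{a}, \v{b}} - \m{\Sigma}_{\v{a}, \v{b}}\m{\Sigma}_{\v{b}, \v{b}}^{-1}\m{\Sigma}_{\v{b}, \v{b}} = \v{0}$. Because $\v{a}$ and $\v{b}$ are jointly Gaussian, the pair $(\v{r}, \v{b})$ is a linear image of $\v{a} \oplus \v{b}$ and is therefore jointly Gaussian as well; for jointly Gaussian variables, vanishing correlation upgrades to full statistical independence. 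This exhibits $\v{a} = \v{r} + \m{\Sigma}_{\v{a}, \v{b}}\m{\Sigma}_{\v{b}, \v{b}}^{-1}\v{b}$ as a sum of a part driven by $\v{b}$ and an independent residual.

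Given this independence, conditioning on $\v{b} = \v{\beta}$ leaves the law of $\v{r}$ untouched, i.e. $(\v{r} \given \v{b} = \v{\beta}) \eqd \v{r}$. Substituting the conditioning value into the (now deterministic) summand driven by $\v{b}$ then gives $(\v{a} \given \v{b} = \v{\beta}) \eqd \v{r} + \m{\Sigma}_{\v{a}, \v{b}}\m{\Sigma}_{\v{b}, \v{b}}^{-1}\v{\beta} = \v{a} + \m{\Sigma}_{\v{a}, \v{b}}\m{\Sigma}_{\v{b}, \v{b}}^{-1}(\v{\beta} - \v{b})$, which is exactly \eqref{eqn:matheron_finite}.

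As a sanity check and an alternative, one can bypass the independence argument entirely: since the right-hand side is an affine image of the jointly Gaussian vector $\v{a} \oplus \v{b}$ it is itself Gaussian, so it suffices to match moments against \eqref{eqn:mvn_cond_moments}. Under the centering assumption the mean collapses to $\m{\Sigma}_{\v{a}, \v{b}}\m{\Sigma}_{\v{b}, \v{b}}^{-1}\v{\beta}$, and because the $\v{\beta}$-shift is deterministic the covariance reduces to $\Cov(\v{r})$, which expands (using $\m{\Sigma}_{\v{b}, \v{a}} = \m{\Sigma}_{\v{a}, \v{b}}^{\top}$ and $\m{\Sigma}_{\v{b}, \v{b}}^{-1}\m{\Sigma}_{\v{b}, \v{b}} = \m{I}$) to $\m{\Sigma}_{\v{a}, \v{a}} - \m{\Sigma}_{\v{a}, \v{b}}\m{\Sigma}_{\v{b}, \v{b}}^{-1}\m{\Sigma}_{\v{b}, \v{a}} = \m{\Sigma}_{\v{a}, \v{a} \given \v{\beta}}$. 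The main obstacle — really the only place care is needed — is the independence step: uncorrelatedness implies independence only by virtue of joint Gaussianity, and the whole argument presumes $\m{\Sigma}_{\v{b}, \v{b}}$ is invertible, exactly as already assumed in \eqref{eqn:mvn_cond_moments}. Everything else is routine bilinear bookkeeping.
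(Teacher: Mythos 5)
Your proposal is correct and takes essentially the same approach as the paper---in fact, both of its approaches: your main argument reproduces the paper's derivation via the residual $\v{r} = \v{a} - \m{\Sigma}_{\v{a},\v{b}}^{\vphantom{-1}}\m{\Sigma}_{\v{b},\v{b}}^{-1}\v{b}$ (uncorrelated with $\v{b}$, hence independent by joint Gaussianity, then condition the decomposition), and your moment-matching ``sanity check'' is precisely the paper's first proof of Theorem~\ref{thm:matheron_finite}. The only point the paper treats more carefully is your step $(\v{r} \given \v{b} = \v{\beta}) \eqd \v{r}$ followed by substituting $\v{\beta}$ into the deterministic summand, which it isolates as Lemma~\ref{lem:repr_cond} and proves via disintegration, since conditioning on the null event $\v{b} = \v{\beta}$ formally requires a regular conditional distribution; your informal assertion of that step is correct in substance.
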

\begin{proof}
Comparing the mean and covariance on both sides immediately affirms the result
\[
&\E\del[1]{\v{a} + \m\Sigma_{\v{a}, \v{b}}^{\vphantom{-1}}\m\Sigma_{\v{b}, \v{b}}^{-1}(\v{\beta} - \v{b})}
&
&\Cov\del[1]{\v{a} + \m\Sigma_{\v{a}, \v{b}}^{\vphantom{-1}}\m\Sigma_{\v{b}, \v{b}}^{-1}(\v{\beta} - \v{b})}
\nonumber \\
&\quad = \v{\mu}_{\v{a}}^{\vphantom{-1}} + \m\Sigma_{\v{a}, \v{b}}^{\vphantom{-1}}\m\Sigma_{\v{b}, \v{b}}^{-1}(\v{\beta} - \v{\mu}_{\v{b}}^{\vphantom{-1}})
&
&\quad= \m\Sigma_{\v{a},\v{a}}^{\vphantom{-1}} \!+\! \m\Sigma_{\v{a}, \v{b}}^{\vphantom{-1}} \m\Sigma_{\v{b}, \v{b}}^{-1} \m{\Sigma}_{\v{b},\v{b}}^{\vphantom{-1}} \m\Sigma_{\v{b}, \v{b}}^{-1} \m\Sigma_{\v{b}, \v{a}}^{\vphantom{-1}} \! - \! 2 \m\Sigma_{\v{a}, \v{b}}^{\vphantom{-1}} \m\Sigma_{\v{b}, \v{b}}^{-1} \m\Sigma_{\v{b}, \v{a}}^{\vphantom{-1}}
\\
&\quad=\E(\v{a} \given \v{b} = \v{\beta})
&
&\quad= 
\m\Sigma_{\v{a},\v{a}}^{\vphantom{-1}} \!-\! \m\Sigma_{\v{a}, \v{b}}^{\vphantom{-1}} \m\Sigma_{\v{b}, \v{b}}^{-1} \m\Sigma_{\v{b}, \v{a}}^{\vphantom{-1}} = \Cov(\v{a} \given \v{b} = \v{\beta}) \nonumber.
\]
\end{proof}
This observation leads to a straightforward, alternative recipe for generating $\v{a} \given \v{b} = \v{\beta}$: first, draw $\v{a}, \v{b} \sim p(\v{a}, \v{b})$; then, update this sample according to \eqref{eqn:matheron_finite}.
Compared to the location-scale approach discussed in Section~\ref{sec:mvn_conditioning_distributional}, a key difference is that we now sample \emph{before} conditioning, rather than after. Figure~\ref{fig:bvn_update} visualizes the deterministic process of updating previously generated draws from the prior subject to the condition $\v{b} = \v{\beta}$.

At first glance, Matheron's update rule may seem more like an interesting footnote than a valuable tool. 
Indeed, the conventional strategy for sampling $\v{a}, \v{b}$ (which requires us to take a matrix square root of $\m{\Sigma}$) is more expensive than that for generating $\v{a} \given \v{b} = \v{\beta}$. 
We will discuss this matter in detail in the later sections. For now, however, let us strengthen our intuition by delving deeper into this theorem's function-analytic origins.

\subsection{Deriving pathwise conditioning via conditional expectations}
\label{sec:conditional_expectation}

\newcommand{\Resid}{c}
\newcommand{\resid}{\varsigma}

Here, we overview the precise formalism that gives rise to the pathwise approach to conditioning Gaussian random variables and show how to \emph{derive} this result from first principles.
Throughout this section, we take $\v{a} \in \R^{m}$ and $\v{b} \in \R^{n}$ to be centered random vectors defined on the same probability space.

The core idea is to decompose $\v{a}$ as the sum of two independent terms---one that depends on $\v{b}$ and one that does not---and represent $\v{a}\given\v{b}=\v{\beta}$ by conditioning both terms on $\v{b}=\v{\beta}$. 
We first prove that conditioning this additive decomposition of $\v{a}$ is simple and intuitive.
\begin{lemma}
\label{lem:repr_cond}
Consider three random vectors $\v{a} \in \R^{m}$, $\v{b} \in \R^{n}$, $\v{\Resid} \in \R^{m}$ such that
\[ \label{eqn:cond_exp_repr}
\v{a} \eqd f(\v{b}) + \v{\Resid},
\]
where $f$ is a measurable function of $\v{b}$ and where $\v{b}$ is independent of $\v{\Resid}$. Then,
\[
\label{eqn:cond_exp_repr_pt2}
\del[1]{\v{a} \given \v{b} = \v{\beta}} \eqd f(\v{\beta}) + \v{\Resid}.
\]
\end{lemma}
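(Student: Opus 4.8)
The plan is to reduce the statement to the computation of a regular conditional distribution and then to exploit the independence of $\v{b}$ and $\v{\Resid}$ to carry that computation out. Since $\eqd$ asserts that $\v{a}$ admits the representation $f(\v{b}) + \v{\Resid}$, I read \eqref{eqn:cond_exp_repr} as an equality between the joint law of $(\v{a}, \v{b})$ and that of $(f(\v{b}) + \v{\Resid}, \v{b})$; equivalently, I may pass to a version of the variables for which $\v{a} = f(\v{b}) + \v{\Resid}$ holds almost surely. This is legitimate because the conditional law of $\v{a}$ given $\v{b} = \v{\beta}$ depends only on the joint law of $(\v{a}, \v{b})$. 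It then suffices to identify the conditional law of $f(\v{b}) + \v{\Resid}$ given $\v{b} = \v{\beta}$ as the law of $f(\v{\beta}) + \v{\Resid}$.

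To do this cleanly I would work with test functions. Fix an arbitrary bounded measurable $g : \R^{m} \to \R$; the claim \eqref{eqn:cond_exp_repr_pt2} is equivalent to showing that, for almost every value $\v{\beta}$ of $\v{b}$, we have $\E[g(\v{a}) \given \v{b} = \v{\beta}] = \E[g(f(\v{\beta}) + \v{\Resid})]$, since a probability measure on $\R^{m}$ is determined by the integrals it assigns to such $g$. I would verify this through the defining property of conditional expectation: multiply the candidate right-hand side by an arbitrary bounded measurable $h(\v{b})$, take expectations, and check that the result coincides with $\E[g(\v{a}) h(\v{b})]$.

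The crux is the following identity, which is where independence enters:
\[
\E\del[1]{g(f(\v{b}) + \v{\Resid}) \, h(\v{b})}
=
\E\del[2]{h(\v{b}) \, \E[g(f(\v{b}) + \v{\Resid}) \given \v{b}]}.
\]
Because $\v{b}$ is independent of $\v{\Resid}$, Fubini's theorem allows me to integrate out $\v{\Resid}$ against its own (unconditional) law while holding $\v{b} = \v{\beta}$ fixed, so the inner expectation equals $\E[g(f(\v{\beta}) + \v{\Resid})]$ evaluated at $\v{\beta} = \v{b}$. Substituting this back reproduces exactly the defining relation for $\E[g(\v{a}) \given \v{b}]$, pinning down the conditional law as claimed.

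The main obstacle is purely measure-theoretic rather than computational: one must (i) ensure that regular conditional distributions exist and are essentially unique, which holds here because the variables take values in the Polish spaces $\R^{m}$ and $\R^{n}$, and (ii) justify factorizing the joint law of $(\v{b}, \v{\Resid})$ as a product measure so that Fubini applies — this is precisely the role of the independence hypothesis and cannot be dropped. A slicker alternative that sidesteps some of this bookkeeping is to compute conditional characteristic functions: independence gives $\E[e^{i \v{t}^{\top} \v{a}} \given \v{b} = \v{\beta}] = e^{i \v{t}^{\top} f(\v{\beta})} \E[e^{i \v{t}^{\top} \v{\Resid}}]$, which is manifestly the characteristic function of $f(\v{\beta}) + \v{\Resid}$, whereupon uniqueness of characteristic functions yields the result.
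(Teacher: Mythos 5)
Your proposal is correct and follows essentially the same route as the paper's proof: both verify the defining (disintegration) property of the regular conditional distribution by using the independence of $\v{b}$ and the residual to factor their joint law into a product measure and then integrating out the residual via Fubini--Tonelli. The only differences are cosmetic---the paper tests against indicator functions of measurable rectangles $A \times B$ where you test against bounded measurable functions $g$ and $h$, and your closing characteristic-function argument is a valid alternative shortcut rather than a departure in substance.
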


\begin{proof}
Let $\pi_{\v{x}}$ denote the distribution of a generic random variable $\v{x}$. Further, let $\pi_{\v{a}\given\v{b}}(\. \given \.)$ be the (regular) conditional probability measure given by disintegration\footnote{See discussion and details on disintegration by \textcite{chang1997conditioning, kallenberg2006}.} of $(\v{a},\v{b})$, such that 
\[ 
\label{eqn:matheron_lemma_2}
\int_B
    \pi_{\v{a}\given\v{b}}(A \given\v\beta)
\d \pi_{\v{b}}(\v{\beta})
=
\P\del{\v{a} \in A, \v{b} \in B}
\]
for measurable sets $A \subseteq \R^{m}$, $B \subseteq \R^{n}$. When $\v{a} \given \v{b} = \v{\beta}$ is represented per \eqref{eqn:cond_exp_repr_pt2}, we have
\[
\label{eqn:matheron_lemma_1}
\begin{split}
\int_B
    \P\del{f(\v{\beta}) + \v{\Resid} \in A}
    \d \pi_{\v{b}}(\v{\beta})
&=
    \int_B
    \del{
        \int_{\R^{m}}
            \mathbbold{1}_{\cbr{f(\v{\beta}) + \v{\resid} \in A)}}
        \d \pi_{\v{\Resid}}(\v{\resid})
    }
    \d \pi_{\v{b}}(\v{\beta})
\\ 
    &= \int_{\R^{m} \x \R^{n}} \mathbbold{1}_{\cbr{f(\v{\beta}) + \v{\resid} \in A, \v{\beta} \in B)}}
    \d \pi_{\v{b}, \v{\Resid}}(\v{\beta}, \v{\resid})
\\ 
    &= \P\del{f(\v{b}) + \v{\Resid} \in A, \v{b} \in B}
    = \P\del{\v{a} \in A, \v{b} \in B},
\end{split}
\]
where we have begun by expressing probabilities as integrals of indicator functions, before using Tonelli's theorem and independence to express the iterated integral as the double integral over the joint probability measure $\pi_{\v{b}, \v{\Resid}}(\v{\beta}, \v{\resid})$.
Comparing the left-hand sides of \eqref{eqn:matheron_lemma_2} and \eqref{eqn:matheron_lemma_1} affirms the claim.
\end{proof}

In words, Lemma~\ref{lem:repr_cond} tells us that for suitably chosen functions $f$, the act of conditioning $\v{a}$ on $\v{b} = \v{\beta}$ amounts to adding an random variable $\v{\Resid}$ to a deterministic transformation $f(\v{\beta})$ of the outcome $\v{\beta}$.
For this statement to hold, we require the \emph{residual} $\v{\Resid} = \v{a} - f(\v{b})$ induced by $f$ to be independent of $\v{b}$. Fortunately, such a function $f$ is well-known in the special case of jointly Gaussian random variables---namely, the \emph{conditional expectation} $f : \v{b} \mapsto \E(\v{a}\given\v{b})$.

For square-integrable random variables, the conditional expectation of $\v{a}$ given $\v{b}$ is defined as the (almost surely) unique solution to the minimization problem
\[
\E(\v{a} \given \v{b})
&= 
    \argmin_{f \in \c{F}}
    \E \norm{\v{a} - f(\v{b})}^2,
\label{eqn:conditional_expectation_argmin}
\]
where $\c{F}$ denotes the set of all Borel-measurable functions $f:\R^{n}\to\R^{m}$ \cite[Chapter 6]{kallenberg2006}. 
Put simply, $\E(\v{a} \given \v{b})$ is the measurable function of $\v{b}$ that best predicts $\v{a}$ in the sense of minimizing the mean-square error \eqref{eqn:conditional_expectation_argmin}.
This characterization of the conditional expectation is equivalent to defining it as the orthogonal projection of $\v{a}$ onto the $\sigma$-algebra generated by $\v{b}$, denoted $\sigma(\v{b})$. Consequently, a necessary and sufficient condition for $\E(\v{a} \given \v{b}) \in \c{F}$ to uniquely solve \eqref{eqn:conditional_expectation_argmin} is that the residual $\v{\Resid} = \v{a} - \E(\v{a} \given \v{b})$ be orthogonal to all
$\sigma(\v{b})$-measurable random variables \cite[50]{luenberger1997optimization}. Here, \emph{orthogonality} can be understood as the absence of correlation, which (for jointly Gaussian random variables) implies independence. As a result, we may satisfy the assumptions of Lemma~\ref{lem:repr_cond} by writing
\[
    \v{a} = \E(\v{a} \given \v{b}) + \v{\Resid},
\]
such that $\v{a}$ decomposes into a function of $\v{b}$ and an independent variable $\v{\Resid} = \v{a} - \E(\v{a}\given\v{b})$.

As a final remark, we may also use these principles to concisely derive the conditional expectation for jointly Gaussian random variables. For now, suppose that the conditional expectation is a linear function of $\v{b}$, i.e. that $\E(\v{a} \given \v{b}) = \m{S} \v{b}$ for some matrix $\m{S} \in \R^{m \times n}$. 
To satisfy the orthogonality condition of \eqref{eqn:conditional_expectation_argmin}, we require $\Cov(\v{a}-\m{S}\v{b}, \v{b}) = \m{0}$, implying that $\m{\Sigma}_{\v{a},\v{b}} - \m{S}\m{\Sigma}_{\v{b},\v{b}} = \m{0}$. 
Rearranging terms and solving for $\m{S}$ gives $\m{S} = \m{\Sigma}_{\v{a},\v{b}}^{\vphantom{-1}}\m{\Sigma}_{\v{b},\v{b}}^{-1}\v{b}$.
With this expression in hand, to show that linearity was assumed without loss of generality, write $\v{a} = \m{S} \v{b} + \v{a} - \m{S} \v{b}$, which we may express as as $\v{a} = \m{S} \v{b} + \v{\Resid}$.
Taking the conditional expectation of both sides, we may directly calculate $\E(\v{a} \given \v{b})$ by writing
\[
\E(\v{a} \given \v{b}) 
=
    \E(\m{S} \v{b} + \v{\Resid} \given \v{b})
=
    \underbracket[0.5pt]{\E(\m{S} \v{b} \given \v{b})}_{\smash{\m{S}\v{b}}} + \underbracket[0.5pt]{\E(\v{\Resid})}_{\smash{\v{0}}}
=
    \m{\Sigma}_{\v{a},\v{b}}^{\vphantom{-1}}\m{\Sigma}_{\v{b},\v{b}}^{-1}\v{b},
\]
where we have used linearity of conditional expectation, followed by independence of $\v{\Resid}$ and $\v{b}$ to go from the second to the third expression.
We now revisit Theorem~\ref{thm:matheron_finite}.

{
\renewcommand{\thetheorem}{1}
\begin{theorem}[Matheron's Update Rule]
Let $\v{a}$ and $\v{b}$ be jointly Gaussian, centered random vectors. Then, the random vector $\v{a}$ conditional on $\v{b} = \v{\beta}$ may be expressed as
\[
(\v{a} \given \v{b} = \v{\beta})
    \eqd
    \v{a} + \m{\Sigma}_{\v{a},\v{b}}^{\vphantom{-1}}\m{\Sigma}_{\v{b},\v{b}}^{-1}(\v{\beta} - \v{b}).
\tag{\ref{eqn:matheron_finite}}
\]
\end{theorem}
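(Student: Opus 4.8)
The plan is to assemble the statement from the two ingredients just developed: the closed form for the Gaussian conditional expectation and the conditioning identity of Lemma~\ref{lem:repr_cond}. The idea is to exhibit the additive decomposition $\v{a} = f(\v{b}) + \v{\Resid}$ demanded by that lemma, taking $f$ to be the conditional expectation, and then to read off the update rule simply by evaluating $f$ at the conditioning value $\v{\beta}$.

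Concretely, I would first set $f(\v{b}) = \E(\v{a}\given\v{b}) = \m{\Sigma}_{\v{a},\v{b}}^{\vphantom{-1}}\m{\Sigma}_{\v{b},\v{b}}^{-1}\v{b}$, the linear, Borel-measurable map derived above. Next I would define the residual $\v{\Resid} = \v{a} - \m{\Sigma}_{\v{a},\v{b}}^{\vphantom{-1}}\m{\Sigma}_{\v{b},\v{b}}^{-1}\v{b}$, so that $\v{a} = f(\v{b}) + \v{\Resid}$ holds by construction. The orthogonality characterization of the conditional expectation gives $\Cov(\v{\Resid},\v{b}) = \m{0}$; because $(\v{a},\v{b})$ is jointly Gaussian, the pair $(\v{\Resid},\v{b})$ is a linear image of $(\v{a},\v{b})$ and hence itself jointly Gaussian, so uncorrelatedness strengthens to independence. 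The hypotheses of Lemma~\ref{lem:repr_cond} are then satisfied, and applying it yields $(\v{a}\given\v{b}=\v{\beta}) \eqd f(\v{\beta}) + \v{\Resid} = \m{\Sigma}_{\v{a},\v{b}}^{\vphantom{-1}}\m{\Sigma}_{\v{b},\v{b}}^{-1}\v{\beta} + \v{a} - \m{\Sigma}_{\v{a},\v{b}}^{\vphantom{-1}}\m{\Sigma}_{\v{b},\v{b}}^{-1}\v{b}$; collecting the two terms carrying the factor $\m{\Sigma}_{\v{a},\v{b}}^{\vphantom{-1}}\m{\Sigma}_{\v{b},\v{b}}^{-1}$ recovers \eqref{eqn:matheron_finite}.

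The substantive step, and the only place where joint Gaussianity is genuinely invoked, is the passage from $\Cov(\v{\Resid},\v{b}) = \m{0}$ to the \emph{independence} of $\v{\Resid}$ and $\v{b}$, since Lemma~\ref{lem:repr_cond} demands full independence rather than mere uncorrelatedness. I expect the main obstacle to be making this upgrade airtight: one must note that $(\v{\Resid},\v{b})$ is a linear transformation of the Gaussian vector $(\v{a},\v{b})$ and therefore Gaussian, after which the standard equivalence of zero cross-covariance and independence for jointly Gaussian blocks applies. Everything else is bookkeeping, namely checking measurability of the linear map $f$ and performing the elementary substitution and regrouping that turn $f(\v{\beta}) + \v{\Resid}$ into the claimed form. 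This route also clarifies why the terse moment-matching proof given earlier is already sufficient: the two arguments are the distributional and pathwise faces of the same orthogonal decomposition.
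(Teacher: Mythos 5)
Your proposal is correct and takes essentially the same route as the paper's own proof: both decompose $\v{a}$ as $\E(\v{a} \given \v{b}) = \m{\Sigma}_{\v{a},\v{b}}^{\vphantom{-1}}\m{\Sigma}_{\v{b},\v{b}}^{-1}\v{b}$ plus a residual, upgrade zero cross-covariance to independence via joint Gaussianity, and then apply Lemma~\ref{lem:repr_cond} to condition both terms and regroup. The only difference is cosmetic---you make explicit the linear-image argument showing the residual and $\v{b}$ are jointly Gaussian, which the paper leaves implicit.
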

}
\begin{proof}
With $\smash{\v{\Resid} = \v{a} - \m{\Sigma}_{\v{a},\v{b}}^{\vphantom{-1}}\m{\Sigma}_{\v{b},\v{b}}^{-1}\v{b}}$, begin by writing
\[
\v{a}
=
\E(\v{a} \given \v{b}) + \del{\v{a} - \E(\v{a} \given \v{b})}
=
\smash{
\m{\Sigma}_{\v{a},\v{b}}^{\vphantom{-1}}\m{\Sigma}_{\v{b},\v{b}}^{-1}\v{\v{b}}
+
\v{\Resid}.
}
\]
Since $\v{b}$ and $\v{\Resid}$ are jointly Gaussian but uncorrelated, it follows that they are independent. 
Setting $f(\v{b}) = \m{\Sigma}_{\v{a},\v{b}}^{\vphantom{-1}}\m{\Sigma}_{\v{b},\v{b}}^{-1}\v{\v{b}}$ and using Lemma~\ref{lem:repr_cond} to condition both sides on $\v{b} = \v{\beta}$ gives 
\[
(\v{a} \given\v{b} = \v{\beta})
\eqd
    \m{\Sigma}_{\v{a},\v{b}}^{\vphantom{-1}}\m{\Sigma}_{\v{b},\v{b}}^{-1}\v{\v{\beta}}
    + \del[1]{\v{a} - \m{\Sigma}_{\v{a},\v{b}}^{\vphantom{-1}}\m{\Sigma}_{\v{b},\v{b}}^{-1}\v{b}}
= 
    \v{a} + \m{\Sigma}_{\v{a},\v{b}}^{\vphantom{-1}}\m{\Sigma}_{\v{b},\v{b}}^{-1}(\v{\beta} - \v{b}).
\]
Hence, the claim follows.
\end{proof}

In summary, we have shown that Matheron's update rule (Theorem~\ref{thm:matheron_finite}) is a direct consequence of the fact that a Gaussian random variable $\v{a}$ conditioned on the outcome $\v{\beta}$ of another (jointly) Gaussian random variable $\v{b}$ may be expressed as the sum of two independent terms: the conditional expectation $\E(\v{a} \given \v{b} = \v{\beta})$ evaluated at $\v{\beta}$ and the residual $\v{\Resid} = \v{a} - \E(\v{a} \given \v{b})$.
Rearranging these terms gives \eqref{eqn:matheron_finite}.  

With these ideas in mind, we are now ready to explore this work's primary theme: Matheron's update rule enables us to decompose $\v{a} \given \v{b} = \v{\beta}$ into the prior random variable $\v{a}$ and a data-driven update $\m{\Sigma}_{\v{a},\v{b}}^{\vphantom{-1}}\m{\Sigma}_{\v{b},\v{b}}^{-1}(\v{\beta} - \v{b})$ that explicitly corrects for the error in the coinciding value of $\v{b}$ given the condition $\v{b} = \v{\beta}$. 
Hence, Theorem~\ref{thm:matheron_finite} provides an explicit means of separating out the influence of the prior from that of the data. We now proceed to investigate the implications of pathwise conditioning for Gaussian processes.

\section{Conditioning Gaussian processes and random functions}
\label{sec:gp_conditioning}

A Gaussian process (GP) is a random function $f : \c{X} \to \R$, such that, for any finite collection of points $\m{X} \subset \c{X}$, the random vector $\rv{f} = f(\m{X})$ follows a Gaussian distribution. Such a process is uniquely identified by a mean function $\mu : \c{X} \to \R$ and a positive semi-definite kernel $k: \c{X} \times \c{X} \to \R$. Hence, if $f \sim \c{GP}(\mu, k)$, then $\rv{f} \sim \c{N}(\v{\mu}, \m{K})$ is multivariate normal with mean $\v{\mu} = \mu(\m{X})$ and covariance $\m{K} = k(\m{X}, \m{X})$.

Throughout this section, we investigate different ways of reasoning about the random variable $\v{f}_{*} \given \v{f}_{n} = \v{y}$ for some non-trivial partition $\v{f} = \v{f}_{n} \oplus \v{f}_{*}$. Here, $\v{f}_{n} = f(\m{X}_{n})$ are process values at a set of training locations $\m{X}_{n} \subset \m{X}$ where we would like to introduce a condition $\v{f}_{n} = \v{y}$, while $\v{f}_{*} = f(\m{X}_{*})$ are process values at a set of test locations $\m{X}_{*} \subset \m{X}$ where we would like to obtain a random variable $\v{f}_{*} \given \v{f}_{n} = \v{y}$. Mirroring Section~\ref{sec:mvn_conditioning}, we begin by reviewing distributional conditioning, before examining its pathwise counterpart.

\subsection{Distributional conditioning}
\label{sec:gp_conditioning_distributional}
As in finite-dimensional cases, we may obtain $\v{f}_{*} \given \v{y}$ by first finding its conditional distribution. Since process values $(\v{f}_{n}, \v{f}_{*})$ are defined as jointly Gaussian, this procedure closely resembles that of Section~\ref{sec:mvn_conditioning_distributional}: we factor out the marginal distribution of $\v{f}_{n}$ from the joint distribution $p(\v{f}_{n}, \v{f}_{*})$ and, upon canceling, identify the remaining distribution as $p(\v{f}_{*} \given \v{y})$. Having done so, we find that the conditional distribution is the Gaussian $\c{N}(\v{\mu}_{* \given \v{y}}, \m{K}_{*,* \given \v{y}})$ with moments
\[
\label{eqn:gp_cond_moments}
\v{\mu}_{* \given \v{y}} &= \v{\mu}_{*} + \m{K}_{*,n}^{\vphantom{-1}}\m{K}_{n,n}^{-1}(\v{y} - \v{\mu}_{n})
&
\m{K}_{*,* \given \v{y}}^{\vphantom{-1}} &= 
    \m{K}_{*,*} 
    - \m{K}_{*, n}^{\vphantom{-1}}\m{K}_{n, n}^{-1}\m{K}_{n, *}^{\vphantom{-1}}
.
\]
As before, we may now generate $\v{f}_{*} \given \v{y}$ in $\c{O}(*^{3})$ time using a location-scale transform \eqref{eqn:location_scale}.

This strategy for sampling Gaussian process posteriors is subtly different from the one given in Section~\ref{sec:mvn_conditioning_distributional}. A Gaussian process is a random function, and conditioning on $\v{f}_{n} = \v{y}$ does not change this fact. Unfortunately, (conditional) distributions over infinite-dimensional objects can be difficult to manipulate in practice. Distributional approaches, therefore, focus on finite-dimensional subsets $\v{f} = \v{f}_{n} \oplus \v{f}_{*}$, while marginalizing out the remaining process values. Doing so allows them to perfectly describe the random variable $\v{f}_{*} \given \v{y}$ via its mean and covariance \eqref{eqn:gp_cond_moments}.

When it comes to sampling $\v{f}_{*} \given \v{y}$, however, these approaches have clear limitations. As discussed previously, a key issue is that their $\c{O}(*^{3})$ time complexity restricts them to problems that only require us to jointly simulate process values at a manageable number of test locations (up to several thousand). In some senses, this condition is fairly generous. After all, we are often only asked to generate a handful of process values at a time. Still, other problems effectively require us to realize $f \given \v{y}$ in its entirety. Similar issues arise when $\m{X}_{*}$ is not defined in advance, such as when gradient information is used to adaptively determine the locations at which to jointly sample the posterior. In these cases and more, we would ideally like to sample actual functions that we can efficiently evaluate and automatically differentiate at arbitrary test locations. To this end, we now examine the direct approach to conditioning draws of $f \sim \c{GP}(\mu, k)$.

\subsection{Pathwise Conditioning}
\label{sec:gp_conditioning_pathwise}

\begin{figure}
    \centering
    \includegraphics[width=\textwidth]{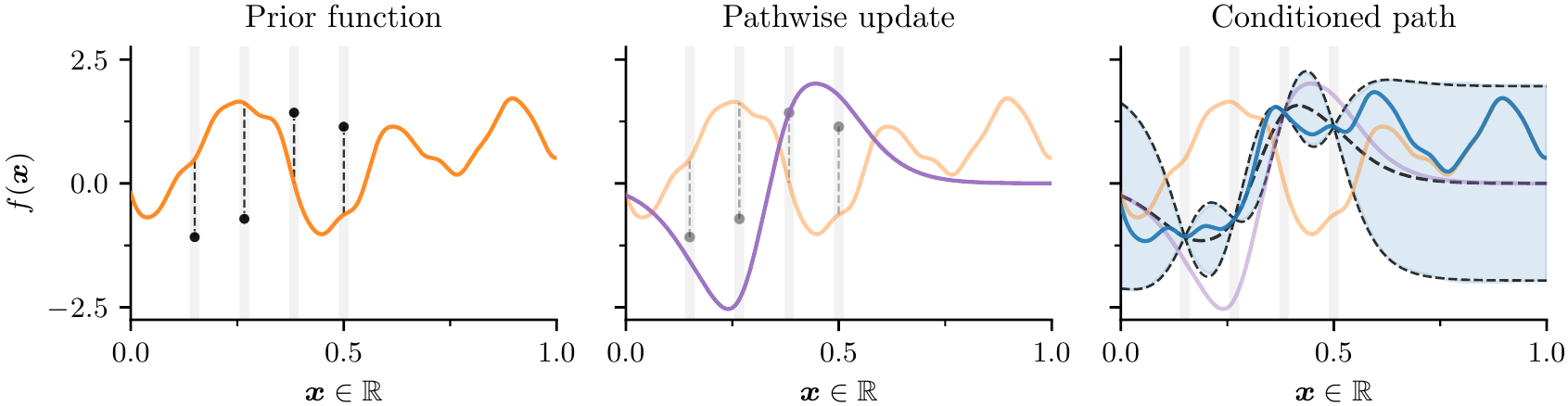}
    \caption{Visual guide for pathwise conditioning of Gaussian processes.
    \emph{Left:} The residual $\v{y} - \v{f}_{n}$ (dashed black) of a draw $f \sim \c{GP}(0, k)$, shown in orange, given observations $\v{y}$ (black).
    \emph{Middle:} A pathwise update (purple) is constructed in accordance with Corollary~\ref{cor:matheron_inf}.
    \emph{Right:} Prior and update are combined to represent conditional (blue). Empirical moments (light blue) of $10^5$ conditioned paths are compared with those of the model (dashed black). The sample average, which matches the posterior mean, has been omitted for clarity.}
    \label{fig:pathwiseUpdate_walkthrough}
\end{figure}

Examining the pathwise update given by Theorem~\ref{thm:matheron_finite}, it is natural to suspect that an analogous statement holds for Gaussian processes. A quick check confirms this hypothesis.

\begin{corollary}
\label{cor:matheron_inf}
For a Gaussian process $f \~ \c{GP}(\mu, k)$ with marginal $\rv{f}_{n} = f(\m{X}_n)$, the process conditioned on $\v{f}_n = \v{y}$ may be expressed as
\[
    \label{eqn:matheron_inf}
    \underbracket[0.5pt]{(f \given \v{y})(\.)\vphantom{\m{K}_{n,n}^{-1}}}_{\f{conditional}}
\overset{\d}{=}
    \underbracket[0.5pt]{f(\.) \vphantom{\m{K}_{n,n}^{-1}}}_{\f{prior}} 
    + 
    \underbracket[0.5pt]{k(\., \m{X}_{n})\m{K}_{n,n}^{-1}(\v{y} - \v{f}_{n})}_{\f{update}}.
\]
\end{corollary}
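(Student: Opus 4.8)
The plan is to reduce this infinite-dimensional statement to the finite-dimensional Theorem~\ref{thm:matheron_finite}, exploiting the fact that a Gaussian process is determined up to equality in distribution by its finite-dimensional marginals. Since both sides of \eqref{eqn:matheron_inf} define random functions on $\c{X}$, it suffices to verify that their evaluations agree in distribution at every finite set of test locations; the Kolmogorov extension theorem then upgrades this to equality of the processes.

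Concretely, I would fix an arbitrary finite collection $\m{X}_{*} \subset \c{X}$ and evaluate both sides there. Writing $\v{f}_{*} = f(\m{X}_{*})$ and using $k(\m{X}_{*}, \m{X}_{n}) = \m{K}_{*,n}$, the right-hand side becomes $\v{f}_{*} + \m{K}_{*,n}^{\vphantom{-1}}\m{K}_{n,n}^{-1}(\v{y} - \v{f}_{n})$. Because $(\v{f}_{*}, \v{f}_{n})$ is jointly Gaussian by the defining property of $f$, I can invoke Theorem~\ref{thm:matheron_finite} with $\v{a} = \v{f}_{*}$, $\v{b} = \v{f}_{n}$, and $\v{\beta} = \v{y}$ to identify this vector in distribution with $\v{f}_{*} \given \v{f}_{n} = \v{y}$. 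The stated centering hypothesis costs nothing here: the update formula is written in terms of the raw variables $\v{f}_{*}, \v{f}_{n}$ and is invariant to their means, since these cancel in both the conditional mean and the Schur-complement covariance.

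The step demanding the most care is the interpretation of the left-hand side's marginals. I would argue that, by definition of the conditioned process, $(f \given \v{y})(\m{X}_{*})$ is the regular conditional distribution of $\v{f}_{*}$ given $\v{f}_{n} = \v{y}$, that is, that conditioning and marginalization commute for a Gaussian process. This consistency follows from disintegration in the spirit of Lemma~\ref{lem:repr_cond} together with the projective structure of the Gaussian finite-dimensional distributions. On the right-hand side no such subtlety arises: the update $k(\., \m{X}_{n})\m{K}_{n,n}^{-1}(\v{y} - \v{f}_{n})$ is an explicit measurable transformation of a single sample path of $f$, so the right-hand side is manifestly a well-defined random function whose marginals are recovered by pointwise evaluation, and the consistency required to apply Kolmogorov extension is automatic. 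Matching the two sides across all $\m{X}_{*}$ then yields the claim.
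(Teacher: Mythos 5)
Your proposal is correct and follows the same route as the paper, whose entire proof reads ``Follows by applying Theorem~\ref{thm:matheron_finite} to an arbitrary set of locations''; you have simply made explicit the details the paper leaves implicit (matching finite-dimensional marginals, Kolmogorov consistency, and the harmlessness of the centering hypothesis). Your observation that the update formula is mean-invariant is a correct and worthwhile clarification, since Theorem~\ref{thm:matheron_finite} is stated for centered vectors while the corollary allows a general mean function $\mu$.
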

\begin{proof}
Follows by applying Theorem \ref{thm:matheron_finite} to an arbitrary set of locations.
\end{proof}
Figure~\ref{fig:pathwiseUpdate_walkthrough} acts a visual guide to Corollary~\ref{cor:matheron_inf}. From left to right, we begin by generating a realization of $f \sim \c{GP}(\mu, k)$ using methods that will soon be introduced in Section~\ref{sec:priors}. Having obtained a sample path, we then use the pathwise update \eqref{eqn:matheron_inf} to define a function $k(\., \m{X}_{n})\m{K}^{-1}_{n,n}(\v{y} - \v{f}_{n})$ to account for the residual $\v{y} - \v{f}_{n}$.
Adding these two functions together produces a draw from a GP posterior, the behavior of which is shown on the right.
Whereas distributionally conditioning on $\v{f}_{n} = \v{y}$ in \eqref{eqn:gp_cond_moments} tells us how the GP's statistic properties change, pathwise conditioning \eqref{eqn:matheron_inf} tells us what happens to individual sample paths.
This paradigm shift echoes the running theme: Gaussian (process) conditionals can be directly viewed in terms of random variables. The power of Corollary~\ref{cor:matheron_inf} is that it impacts \emph{how} we think about Gaussian process posteriors and, therefore, \emph{what} we do with them.

Having said this, there are several hurdles that we must overcome in order to use the pathwise update \eqref{eqn:matheron_inf} in the real world. 
First, we are typically unable to practically sample functions $f \sim \c{GP}(\mu, k)$ from (non-degenerate) Gaussian process priors exactly. 
A Gaussian process can generally be written as a linear combination of elementary \emph{basis functions}. 
When the requisite number of basis functions is infinite, however, evaluating this linear combination is usually impossible. In Section~\ref{sec:priors}, we will therefore investigate different ways of approximating $f(\.)$ using a finite number of operations.

Second, we incur $\c{O}(n^{3})$ time complexity when na\"ively carrying out \eqref{eqn:matheron_inf}, due to the need to solve the linear system of equations $\m{K}_{n,n}\v{v} = \v{y} - \v{f}_{n}$ for a vector $\v{v} \in \mathbb{R}^{n}$ such that
\[
\label{eqn:pathwise_update_canonical}
(f \given \v{y})(\.)
\eqd 
    f(\.) + 
    \underbracket[0.5pt]{
        \sum_{i=1}^{n} v_{i} k(\., \v{x}_{i})
    }_{\mathclap{n\t{-dimensional basis}}}
.
\]
Here, we have re-expressed the matrix-vector product in \eqref{eqn:matheron_inf} as an expansion with respect to the canonical basis functions $k(\., \v{x}_{i})$ centered at training locations $\v{x}_{i} \in \m{X}_{n}$.
For large training sets $\del{\v{x}_i, y_i}_{i=1}^{n}$, direct application of \eqref{eqn:matheron_inf} may prove prohibitively expensive. 
By the same token, the stated pathwise update does not hold when outcomes $\v{y}$ are not defined as realizations of process values $\v{f}_{n}$. 
In Section~\ref{sec:updates}, we will consider various means of resolving these challenges and ones like them.

\subsection{Historical remarks}
\label{sec:historical_remarks}
Prior to continuing, we pause to reflect on the historical developments that have paved the way for this work. In a 2005 tribute to geostatistics pioneer Georges Matheron, \textcite{chiles2005prediction} comment that
\begin{quote}
[Matheron's update rule] is nowhere to be found in Matheron's entire published works, as he merely regarded it as an immediate consequence of the orthogonality of the [conditional expectation] and the [residual process].
\end{quote}
As if to echo this very sentiment, \textcite{doucet10} begins a much appreciated technical note on the subject of Theorem~\ref{thm:matheron_finite} with the remark
\begin{quote}
This note contains no original material and will never be submitted anywhere for publication. However it might be of interest to people working with [Gaussian processes] so I am making it publicly available.
\end{quote}
The presiding opinion, therefore, seems to be that Matheron's update rule is \emph{too simple} to warrant extended study. Indeed, Theorem~\ref{thm:matheron_finite} is exceedingly straightforward to verify. As is often the case, however, this result is harder to discover if one is not already aware of its existence. This dilemma may help to explain why Matheron's update rule is absent from standard machine learning texts. By deriving this result from first principles in Section~\ref{sec:conditional_expectation}, we hope to encourage fellow researchers to explore the strengths (and weaknesses) of the pathwise viewpoint espoused here.

We are not the first to have realized the practical implications of pathwise conditioning for GPs. Corollary~\ref{cor:matheron_inf} is relatively well-known in geostatistics \cite{journel1978mining, defouquet94, emery07, chiles12}. Similarly, \textcite{oliver1996conditional} discusses Matheron's update rule for Gaussian likelihoods (Section~\ref{sec:updates_gaussian}). Along the same lines, closely related ideas were rediscovered in the 1990s with applications to astrophysics. In particular, \textcite{hoffman91} propose the use of spectral approximations to stationary priors (Section~\ref{sec:priors_stationary}) in conjunction with canonical pathwise updates \eqref{eqn:pathwise_update_canonical}.

Nevertheless, these formulae are seldom seen in machine learning. We hope to systematically organize these findings (along with our own) and communicate them to a general audience of theorists and practitioners alike. The following sections therefore catalog various notable approaches to representing Gaussian process priors and pathwise updates.

\section{Sampling functions from Gaussian process priors}
\label{sec:priors}

The pathwise representation of GP posteriors described in the Section~\ref{sec:gp_conditioning_pathwise} allows us to represent $f \given \v{y}$ by transforming a draw of $f \sim \c{GP}(0, k)$. When interpreted as a generative strategy, this approach to sampling can only be deemed \emph{efficient} if the tasks of realizing the prior and performing the update both scale favorably in the total number of locations $\vert\m{X}\vert = \vert\m{X}_{n}\vert + \vert\m{X}_{*}\vert$. Half of the battle is, therefore, to obtain faithful but affordable draws of~$f$.
Fortunately, GP priors often exhibit convenient mathematical properties not present in their posteriors, which can be utilized to sample them efficiently.

We focus on methods for generating random \emph{functions} that we may evaluate at arbitrary locations $\v{x} \in \c{X}$ in $\c{O}(1)$ time and whose marginal distributions approximate those of $f \sim \c{GP}(0, k)$. Conceptually, techniques discussed throughout this section will approximate GP priors as 
random linear combinations of suitably chosen basis functions $\v{\phi} = (\phi_{1}, \ldots, \phi_{\ell})$. Specifically, we will focus on \emph{Bayesian linear models} with Gaussian random weights
\[
\label{eqn:bayesian_linear_model}
\tilde{f}(\.) &= \sum_{i=1}^\ell w_i \phi_i(\.)
&
\v{w} &\sim \c{N}(\v{0}, \m{\Sigma}_{\v{w}}),
\]
where the covariance of weights $\v{w}$ will vary by case. Notice that, for any finite collection of points $\m{X} \subset \c{X}$, the random vector $\v{\tilde{f}} = \tilde{f}(\m{X})$ follows the Gaussian distribution $\c{N}(\v{0}, \m{\Phi}^{\vphantom{\top}} \m{\Sigma}_{\v{w}} \m{\Phi}^\top)$, where $\m{\Phi} = \v{\phi}(\m{X})$ is a $\vert \m{X} \vert \times \ell$ matrix of features. By design then, $\tilde{f}$ is a Gaussian process. \textcite{rasmussen06} refer to \eqref{eqn:bayesian_linear_model}  as the \emph{weight-space} view of GPs.

From this perspective, the task of efficiently sampling the prior $\tilde{f}$ reduces to one of generating random weights $\v{w}$. In practice, $\m{\Sigma}_{\v{w}}$ is typically diagonal, thereby enabling us to sample $\tilde{f}$ in $\c{O}(\ell)$ time. We stress that, for any draw of $\v{w}$, the corresponding realization of $\tilde{f}$ is simply a deterministic function. In particular, we incur $\c{O}(1)$ cost for evaluating $\tilde{f}(\v{x})$ and may readily differentiate this term with respect to $\v{x}$ (or other parameters of interest).

Below, we review popular strategies for obtaining Bayesian linear models such that $\tilde{f} \overset{\d}{\approx} f$. 
Our presentation is intended to communicate different angles for attacking this problem and is by no means exhaustive. To set the scene for these approaches, we begin by recounting some properties of the gold standard: location-scale methods.

\subsection{Location-scale transformations}
\label{sec:priors_exact}

Location-scale methods $\eqref{eqn:location_scale}$ are the most widely used approach for generating Gaussian random vectors. These generative strategies are \emph{exact} (up to machine precision).
Given locations $\m{X}$, we may simulate $\v{f} = f(\m{X})$ in location-scale fashion
\[
\label{eqn:location_scale_gp}
f(\m{X}) &\eqd \m{K}^{\nicefrac{1}{2}} \v{\zeta}
&
\v{\zeta} &\~\c{N}(\v{0},\m{I})
\]
by multiplying a square root covariance matrix $\m{K}^{\nicefrac{1}{2}}$ by a standard normal vector $\v{\zeta}$.

While \eqref{eqn:location_scale_gp} rightfully stands as the method of choice for many problems, it is not without shortcoming. Chief among these issues is the fact that algorithms for obtaining a matrix square root of $\m{K}$ scale cubically in $\vert \m{X} \vert$. In most cases, this limits the use of location-scale approaches to cases where the length of the desired Gaussian random vector is manageable (up to several thousand). 
This overhead can be interpreted to mean that we incur $\c{O}(i^{2})$ cost for realizing the $i$-th element of $\v{f}$, which leads us to our second issue: reusing a draw of $\v{f}_{n}$ to efficiently generate the remainder of $\v{f} = \v{f}_{n} \oplus \v{f}_{*}$ requires us to sample from the conditional distribution
\[
\label{eqn:conditional_iterated}
\v{f}_{*} \given \v{f}_{n}
\sim 
    \c{N}\del{
        \v{\mu}_{*} + \m{K}_{*,n} \m{K}_{n,n}^{-1}(\v{f}_{n} - \v{\mu}_{n}),
        \m{K}_{*,*} - \m{K}_{*,n} \m{K}_{n,n}^{-1} \m{K}_{n,*}
    }.
\]
Despite matching asymptotic costs, iterative approaches to sampling $\v{f}$ are substantially slower than simultaneous ones. In applied settings, however, test locations $\m{X}_{*}$ are often determined adaptively, forcing location-scale-based methods for generating $\v{f}$ to repeatedly compute \eqref{eqn:conditional_iterated}. Further refining this predicament, we arrive at a final challenge: pathwise derivatives.

Differentiation is a linear operation. The gradient of a Gaussian process $f$ with respect to a location $\v{x}$ is, therefore, another Gaussian process $f^\prime$.
By construction, these GPs are correlated. Using gradient information to maneuver along a sample path---for example, to identify its extrema---therefore requires us to re-condition both processes on the realized values of $f(\v{x})$ and $f^\prime(\v{x})$ at each successive step of gradient descent.

Prior to continuing, it is worth noting that the limitations of location-scale methods can be avoided in certain cases. In particular, the otherwise cubic costs for computing a square root in $\eqref{eqn:location_scale_gp}$ can be dramatically reduced by exploiting structural assumptions regarding covariance matrices $\m{K}$. 
Well-known examples of structured matrices include banded and sparse ones in the context of one-dimensional Gaussian processes and Gauss--Markov random fields \cite{rue2005gaussian, durrande19, loper20}, block-Toeplitz Toeplitz-block ones when evaluating stationary product kernels on regularly-spaced grids $\m{X} \subset \c{X}$ \cite{zimmerman1989computationally, wood1994simulation, dietrich97}, and kernel-interpolation-based ones \cite{wilson2015kernel, pleiss2018constant}. When the task at hand permits their usage, these methods are highly effective.

The following sections survey different approaches to overcoming the challenges put forth above by approximating Gaussian process priors as finite-dimensional Bayesian linear models.

\subsection{Stationary covariances}
\label{sec:priors_stationary}
Stationary covariance functions $k(\v{x}, \v{x}') = k(\v{x} - \v{x}')$, such as the Mat\'{e}rn family's limiting squared exponential kernel, give rise to a significant portion of GP priors in use today.
For centered priors $f \sim \c{GP}(0, k)$, stationarity encodes the belief that the relationship between process values $f(\v{x}_{i})$ and $f(\v{x}_{j})$ is solely determined by the difference $\v{x}_{i}-\v{x}_{j}$ between locations $\v{x}_{i}$ and $\v{x}_{j}$. 
Simple but expressive, stationarity is the go-to modeling assumption in many applied settings.

These kernels exhibit a variety of special properties that greatly facilitate the construction of efficient, approximate priors. Here, we restrict attention to kernels admitting a spectral density $\rho$, and focus on the class of estimators formed by discretizing the spectral representation of $k$
\[
\label{eqn:wiener_khinchin}
k(\v{x} - \v{x}^\prime) &= \int_{\R^d} e^{2 \pi i \v{\omega}^\top (\v{x} - \v{x}^\prime)} \rho(\v\omega) \d \v{\omega}
&
\rho(\v{\omega}) &= \int_{\R^d} e^{- 2 \pi i \v{\omega}^\top \v{x}}k(\v{x}) \d\v{x}
.
\]
By the \emph{kernel trick} \cite{scholkopf01}, a kernel $k$ can be written as the inner product in a corresponding reproducing kernel Hilbert space (RKHS) $\c{H}_{k}$ equipped with a feature map $\varphi: \c{X} \-> \c{H}_{k}$. 
In many cases, this inner product can be approximated by
\[
\label{eqn:rkhs_inner_prod_onb}
k(\v{x}, \v{x}')
=
\innerprod{\varphi(\v{x})}{\varphi(\v{x}')}_{\c{H}_{k}}
\approx
\v{\phi}(\v{x})^\top\ \overline{\v{\phi}(\v{x}')},
\]
where $\v{\phi} : \c{X} \-> \mathbb{C}^\ell$ is some finite-dimensional feature map and $\overline{\v{\phi}(\v{x}')}$ denotes the complex conjugate.
Based on this idea, the method of \emph{random Fourier features} \cite{rahimi08} constructs a Monte Carlo estimate to a stationary kernel by representing the right-hand side of \eqref{eqn:rkhs_inner_prod_onb} with $\ell$ complex exponential basis functions $\phi_{j}(\v{x}) = \ell^{\nicefrac{-1}{2}} \exp(2 \pi i\v{\omega}_{j}^\top \v{x})$, whose parameters $\v{\omega}_{j}$ are sampled proportional to the corresponding spectral density $\rho(\v{\omega}_{j})$.\footnote{Using elementary trigonometric identities, we may also derive a related family of basis functions $\phi : \c{X} \to \R^{\ell}$ with $\phi_{j}(\v{x}) = \sqrt{\nicefrac{2}{\ell}} \cos(2 \pi \v{\omega}_{j}^\top\v{x} + \tau_{j})$, where $\tau_j \sim \c{U}(0, 2 \pi)$.}

Given an $\ell$-dimensional basis $\v{\phi} = (\phi_{1}, \ldots, \phi_{\ell})$, we may now proceed to approximate the true prior according to the Bayesian linear model
\[
\label{eqn:bayesian_linear_model_rff}
\tilde{f}(\.) &= \sum_{i=1}^\ell w_i \phi_i(\.)
&
w_i &\~\c{N}(0,1).
\]
Under this approximation, $\tilde{f}$ is a random function satisfying $\v{\tilde{f}}_{n} \sim \c{N}(\v{0}, \m{\Phi}_{n}^{\vphantom{\top}}\m{\Phi}_{n}^\top)$, where $\m{\Phi}_{n} = \v{\phi}(\m{X}_{n})$ is an $n \times \ell$ matrix of features. Per the beginning of this section, then, $\tilde{f}$ is a Gaussian process whose covariance approximates that of $f$.

The random Fourier feature approach is particularly appealing since its position as a Monte Carlo estimator implies that the error introduced by the $\ell$-dimensional basis $\v{\phi}$ decays at the \emph{dimension-free} rate $\ell^{\nicefrac{-1}{2}}$ \cite{sutherland15}. 
This property enables us to balance accuracy and cost by choosing $\ell$ to suite the task at hand.

\subsection{Karhunen--Lo\`{e}ve expansions}
\label{sec:priors_kl}
While exploitation of stationarity is arguably the most common route when constructing approximate priors, it is neither unique nor optimal. A powerful alternative is to utilize the \emph{Karhunen--Lo\`{e}ve expansion} of a Gaussian process prior \cite{castro1986principal, fukunaga2013introduction}.

We begin by considering the family of $\ell$-dimensional Bayesian linear models $\tilde{f}(\.) = \v{\phi}(\.)^\top\v{w}$ consisting of orthonormal basis functions $\phi_{i}: \c{X} \to \mathbb{R}$ on a compact space $\c{X}$. Following standard theory \cite{fukunaga2013introduction}, the \emph{optimal} $\tilde{f}$ for approximating a Gaussian process $f$ (in the sense of minimizing mean square error) is found by truncating its Karhunen--Lo\`{e}ve expansion
\[
\label{eqn:kl_expansion}
f(\.) &= \sum_{i=1}^\infty w_i \phi_i(\.)
&
w_i &\~\c{N}(0,\lambda_i),
\]
where $\phi_i$ and $\lambda_{i}$ are, respectively, the $i$-th eigenfunction and eigenvalue of the covariance operator $\psi \mapsto \int_{\c{X}} \psi(\v{x})k(\v{x},\.) \d \v{x}$, written in decreasing order of $\lambda_{i}$.\footnote{These eigenvalues are well-ordered and countable as consequence of the compactness of $\c{X}$.} Truncated versions of these expansions are used as both bases for constructing optimal approximate GPs \cite{zhu1997gaussian, solin2020hilbert} and modeling tools in their own right \cite{krainski19}. 
Depending on the case, eigenfunctions $\phi_{i}$ are either derived from first principles \cite{krainski19} or obtained by numerical methods \cite{lindgren11,lord14,solin2019know}.

In addition to being optimal, Karhunen--Lo\`{e}ve expansions are exceedingly general. Even when a covariance function $k$ is non-stationary or the domain $\c{X}$ is non-Euclidean---such as when Gaussian processes are used to represent functions on manifolds \cite{borovitskiy2020matern} and graphs \cite{borovitskiy2020graph}---the Karhunen--Lo\`{e}ve expansion often exists. 

Widespread use of truncated eigensystems is largely impeded by their frequent lack of convenient, analytic forms. This issue is compounded by the fact that efficient, numerical methods for obtaining \eqref{eqn:kl_expansion} typically require us to manipulate bespoke mathematical properties of specific kernels. These properties are often closely related to the differential-equation-based perspectives of Gaussian processes introduced in the following section.

\subsection{Stochastic partial differential equations}
\label{sec:priors_spde}

Many Gaussian process priors, such as the Mat\'{e}rn family, can be expressed as solutions of \emph{stochastic partial differential equations} (SPDEs). 
SPDEs are common in fields such as physics, where they describe natural phenomena (such as diffusion and heat transfer); many of which share a deep connection with the squared exponential kernel \cite{grigoryan2009heat}.
Additionally, SPDEs are often the starting point when designing non-stationary GP priors \cite{krainski19}. Below, we detail how the \emph{Galerkin finite element method} \cite{evans10, lindgren11,lord14} can be used to construct Bayesian linear models that approximate GP priors capable of being represented as SPDEs.

Suppose a Gaussian process $f \sim \c{GP}(0, k)$ satisfies $\c{L} f = \c{W}$, where $\c{L}$ is a linear differential operator and $\c{W}$ is a Gaussian white noise process \cite{lifshits2012lectures}. 
Here, we demonstrate how to derive a Gaussian process $\tilde{f}$ that approximately satisfies this SPDE.
To begin, we express $\c{L}f = \c{W}$ in its weak form\footnote{One typically integrates $(\c{L}f)(\v{x})g(\v{x})$ by parts, either by necessity or due to affordances of the basis $\phi_i$. We suppress this to ease notation.}
\[
\label{eqn:gp_spde}
\int_{\c{X}} (\c{L}f)(\v{x}) g(\v{x}) \d\v{x} = \int_{\c{X}} g(\v{x}) \d\c{W}(\v{x}),
\]
where $g$ is an arbitrary element of an appropriate class of test functions. Next, we proceed by approximating both the desired solution $f$ and the test function $g$ with respect to a finite-dimensional basis as $\tilde{f}(\.) = \sum_{i=1}^\ell w_i \phi_i(\.)$ and $\tilde{g}(\.) = \sum_{j=1}^\ell v_j \phi_j(\.)$.
Substituting these terms into \eqref{eqn:gp_spde} and differentiating both sides with respect to the coefficients of $\tilde{g}$, we obtain the following expression for each $j = 1,\ldots, \ell$
\[
\sum_{i=1}^\ell w_i \underbracket[0.5pt]{\int_{\c{X}} (\c{L} \phi_i)(\v{x}) \phi_j(\v{x})\d\v{x}}_{A_{ij}} 
= 
\underbracket[0.5pt]{\int_{\c{X}} \phi_j(\v{x}) \d\c{W}(\v{x})}_{b_j}
.
\]
Defining $\m{M} = \Cov(\v{b})$, where $\Cov(b_{i},b_{j}) = \innerprod{\phi_i}{\phi_j}$ coincides with the finite-element mass matrix, allows us to rearrange this system of random linear equations in matrix-vector form by writing $\m{A}\v{w} = \v{b}$.
The basis coefficients of the random function $\tilde{f}$ are, therefore, distributed as $\v{w} \sim \c{N}\del{\v{0}, \m{A}^{-1}\m{M}\m{A}^{-\top}}$. 
As in the previous sections, $\tilde{f}$ can be seen as the weight-space view of a corresponding Gaussian process.

A popular choice is to employ compactly supported basis functions $\phi_i$ \cite{lindgren11}. The matrices $\m{A}$ and $\m{M}$ are then sparse, and the resulting linear systems can be solved efficiently. 
For example, the family of piecewise linear basis functions is a simple but effective choice for second order differential operators $\c{L}$ \cite{evans10,lord14}.\footnote{A second order differential operator gives rise to a first-order bilinear form when integrated by parts, which matches with piecewise linear basis functions which are once differentiable almost everywhere. For higher-order operators, a piecewise polynomial basis may be used instead.}

\subsection{Discussion}
\label{sec:priors_discussion}

This section has focused on identifying finite-dimensional bases with which to construct Bayesian linear models $\tilde{f}(\.) = \v{\phi}(\.)^\top\v{w}$. These model can be seen as \emph{weight-space} interpretations \cite{rasmussen06} of corresponding Gaussian process priors $\tilde{f} \sim \c{GP}(0, \tilde{k})$ with covariance functions $\tilde{k}(\v{x}, \v{x}^\prime) = \v{\phi}(\v{x})^\top \m{\Sigma}_{\v{w}} \v{\phi}(\v{x}^\prime)$. Since $\v{w}$ and $\v{\tilde{f}}_{n} = \m{\Phi}_{n} \v{w}$ are jointly normal, Theorem~\ref{thm:matheron_finite} implies that we may enforce the condition $ \v{\tilde{f}}_{n} = \v{y}$ by writing\footnote{Practical variants of \eqref{eqn:pathwise_update_weights} avoid inverting $\m{\Phi}_{n}^{\vphantom{\top}}\m{\Phi}_{n}^\top$ by employing, e.g., Gaussian likelihoods (Section~\ref{sec:updates_gaussian}).} 
\[
\label{eqn:pathwise_update_weights}
\v{\phi}(\.)^{\top}(\v{w} \given \v{y})
\eqd
    \v{\phi}(\.)^{\top}
    \del{
        \v{w} + 
        \m{\Phi}_{n}^\top
        (\m{\Phi}_{n}^{\vphantom{\top}}\m{\Phi}_{n}^\top)^{-1}
        (\v{y} - \m{\Phi}_{n}\v{w})
    }.
\]
This result encourages us to approximate posteriors in much the same way as we have priors. After all, if we have chosen a basis $\v{\phi}$ that encodes our prior knowledge for $f$ (such as how smooth we believe this function to be), then it is reasonable to think that $\v{\phi}$ will further enable us to efficiently approximate $f \given \v{y}$. To the extent that this approach may seem like the natural evolution of ideas discussed in this section, we argue for the benefits of \emph{decoupling} the representation of the prior from that of the data.

The trouble with using a finite set of homogeneous basis functions $\v{\phi} = (\phi_{1}, \ldots, \phi_{\ell})$ to represent both the prior and the data is that these two tasks focus on different things. To accurately approximate a prior is to faithfully describe a random function $f$ on a domain $\c{X}$. Consequently, parsimonious approximations $\tilde{f}$ employ global basis functions that vary non-trivially everywhere on $\c{X}$.
This is largely why, e.g., Fourier features are an attractive choice for approximating stationary priors. But what of the data?

Conditioning on observations $\v{y}$ requires us to convey how our understanding of $f$ has changed. In most cases, we choose priors (and likelihoods) that reflect the belief that an observation $y_{i}$ only informs us about the process $f$ in the immediate vicinity of a point $\v{x}_{i}$. Updating $f$ to account for $\v{y}$, therefore, typically focuses on process values corresponding to specific regions of $\c{X}$. Rather than global basis functions, the data is best characterized by local ones that have near-zero values outside of the aforementioned regions. Not coincidentally, the canonical basis functions $k(\., \v{x})$ fit this description perfectly when the chosen prior implies that $y_i$ is only locally informative.

A key property of pathwise conditioning is that it not only provides us with a natural decomposition of GP posteriors---as sums of prior random variables and data-driven updates---but enables us to represent these terms in separate bases. Similar ideas can be found in recent works that explore alternative decompositions of Gaussian processes, such as separation of mean and covariance functions \cite{cheng17, salimbeni18} or decoupling of RKHS subspaces and their orthogonal complements \cite{shi19}. Unlike these works, however, we stress decoupling in the sense of using different classes of basis functions to represent different aspects of GP posteriors. While this type of decoupling is not unique to pathwise approaches \cite{lazaro2009inter, hensman17}, they drastically simplify the process by eliminating the need to analytically solve for sufficient statistics.

\begin{figure}
    \centering
    \includegraphics[width=\textwidth]{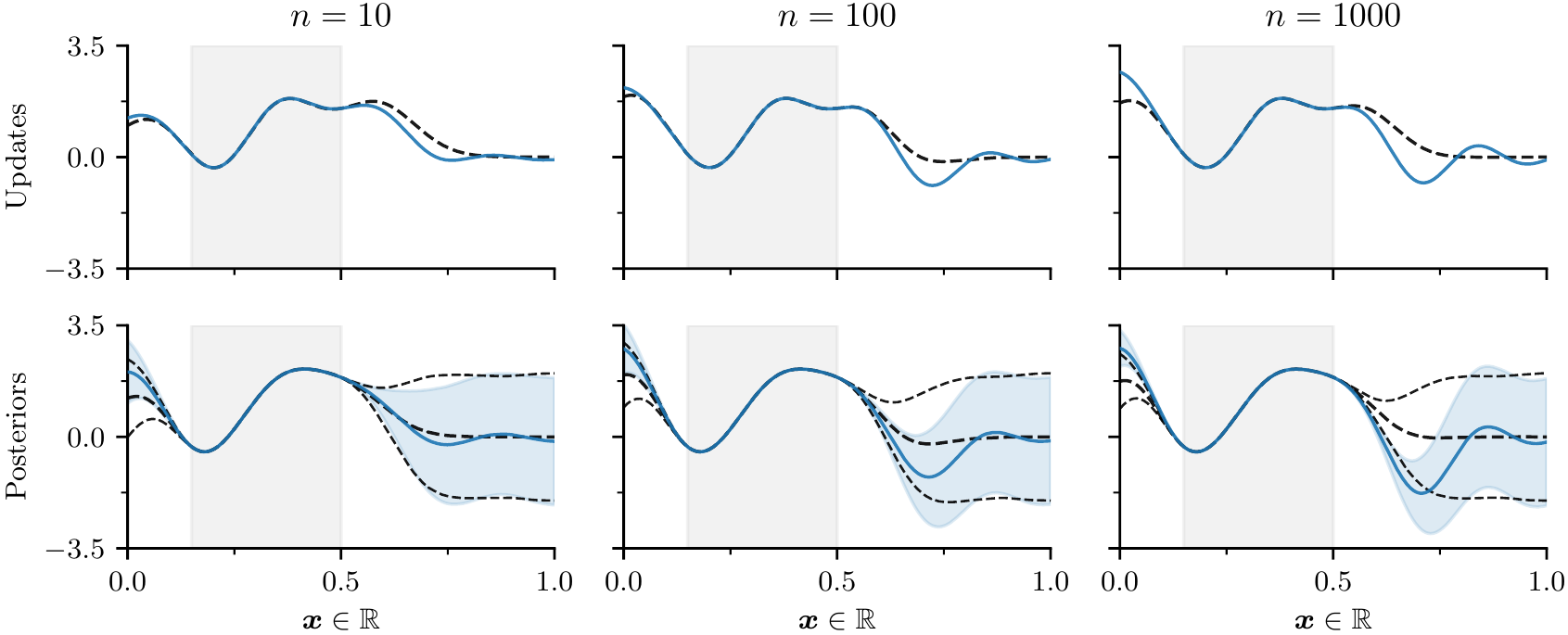}
    \caption{Overview of variance starvation when conditioning on $n \in \{10, 100, 1000\}$ observations of the form $y_{i} \sim \c{N}(f_i, 10^{-5})$ located within the gray shaded region. \emph{Top:} Comparison of pathwise updates to a single draw from an approximate prior $\tilde{f}(\.) = \v{\phi}(\.)^\top\v{w}$, constructed using $\ell = 1000$ Fourier features $\phi$. Updates defined using the same Fourier basis $\v{\phi}(\.)$ and the canonical basis functions $k(\., \m{X})$ are shown in blue and dashed-black, respectively. \emph{Bottom:} Mean and two standard deviations of the empirical posteriors formed by applying the aforementioned updates to $10^5$ draws from the approximate prior.}
    \label{fig:variance_starvation}
\end{figure}

This line of reasoning also helps to explain why finite-dimensional GPs constructed from homogeneous basis functions often produce poorly-calibrated posteriors. For now, we restrict our attention to the issue of \emph{variance starvation} \cite{wang18,mutny18,calandriello19} and return this topic in Section~\ref{sec:updates_discussion}. Figure~\ref{fig:variance_starvation} demonstrates what happens as the number of observations $n = \vert \v{y} \vert$ approaches the number of random Fourier features  $\ell = 1000$ used to approximate a squared exponential kernel. In general, the approximate posteriors produce extrapolations which become increasingly erratic. Note that the rate at which these defects materialize depends upon the choice of kernel and likelihood. In the figure, posteriors yielded by pathwise updates in canonical and Fourier bases (all other things being held equal) diverge as the number of observations $n$ approaches the number of random Fourier features $\ell$. This pattern emerges because the Fourier basis is better at describing stationary priors than non-stationary posteriors. Fourier features excel at capturing the global properties of the prior, but struggle to portray the localized effects of the data.

Of course, different types of data impose different kinds of conditions on the process $f$. We now examine various pathwise updates that enforce prominent types of conditions.

\section{Conditioning via pathwise updates}
\label{sec:updates}

Building off of the foundation prepared in Section~\ref{sec:gp_conditioning}, we now adapt Corollary~\ref{cor:matheron_inf} to accommodate different types of conditions and computational budgets. Throughout this section, we use $\v{\gamma}$ to denote the random variable realized by observations $\v{y}$ under the chosen likelihood.

\subsection{Gaussian updates}
\label{sec:updates_gaussian}
Corollary~\ref{cor:matheron_inf} treats observations $\v{y}$ as a realization of process values $\v{f}_{n} = f(\m{X}_{n})$. Hence, the conditions it imposes manifest as the equality constraint $\v{f}_{n} = \v{y}$. In the real world, however, we seldom observe $\v{f}_{n}$ directly. To account for this nuance, an observation $y$ is modeled by a \emph{likelihood} $p\del{y \given f(\v{x})}$. Viewed from this perspective, the equality constraint $\v{f}_{n} = \v{y}$ correspond to the limit where $p$ contracts to a point mass. Seeing as $y$ usually fails to fully disambiguate the true value of $f(\v{x})$, we typically employ likelihoods that induce weaker conditions than strict equalities.

For regression problems, the most common choice is to employ a Gaussian likelihood $p(y \given f(\v x)) = \c{N}(y \given f(\v x), \sigma^{2})$, the log of which penalizes the squared Euclidean distance of $f(\v{x})$ from $y$. Under the corresponding observation model $\v{\gamma} = \v{f}_{n} + \v{\varepsilon}$ with $\v{\varepsilon} \sim \c{N}(\v{0}, \sigma^{2} \m{I})$, $f$ and $\v{y}$ are jointly Gaussian. By Corollary~\ref{cor:matheron_inf} then, we may condition $f$ on $\v{\gamma} = \v{y}$ by writing
\[
\label{eqn:pathwise_update_gaussian}
\begin{split}
(f \given \v{\gamma} = \v{y})(\.)
&\eqd
    f(\.) 
    + k(\.,\m{X}) (\m{K}_{n,n} + \sigma^{2}\m{I})^{-1}(\v{y} - \v{f}_{n} - \v{\eps})
.
\end{split}
\]
Rather than exactly passing through observations $\v{y}$, the conditioned path $f \given \v{y}$ now smoothly interpolates between them.
In cases where $\v{\gamma}$ is not a Gaussian random variable, additional tools are needed.

\begin{figure}
    \centering
    \includegraphics[width=\textwidth]{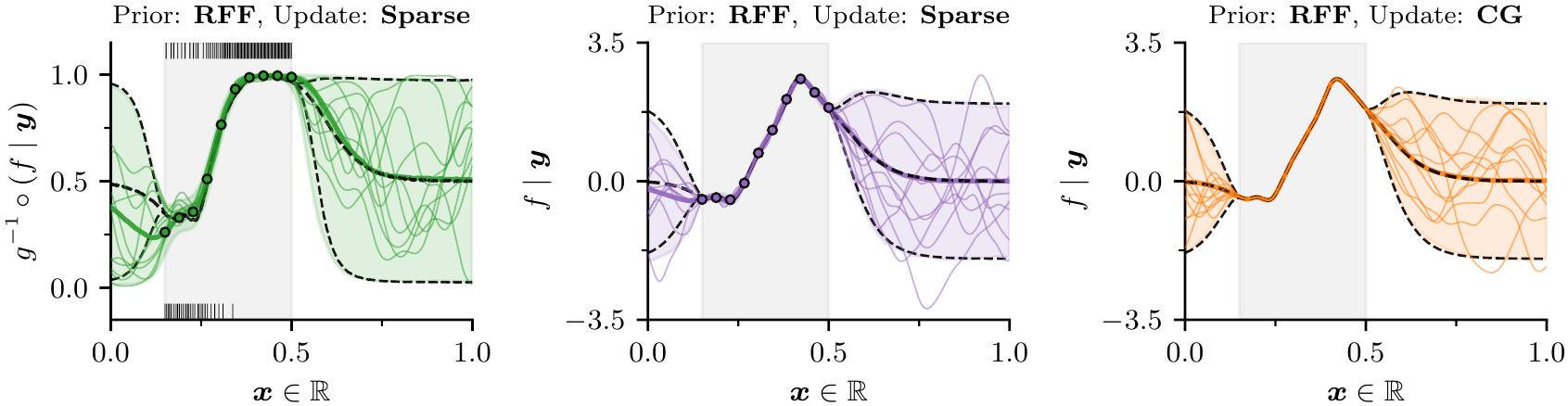}
    \caption{Visual comparison of different pathwise updates. \emph{Left and middle:} Variational inference is used to learn sparse updates at $m=10$ inducing locations $\m{Z}$ (circles). \emph{Right:} preconditioned conjugate gradients is used to iteratively solve for Gaussian updates. In all cases, 1000 observations $\v{y}$ are evenly spaced in the shaded region. Dashed lines denote mean and two standard deviations of ground truth posteriors, colored regions and thicker lines denote those of empirical ones. Middle and right plots illustrate regression with a Gaussian likelihood $\c{N}(y_{i} \given f_{i}, 10^{-3})$. The left plot shows binary classification with a Bernoulli likelihood and probit link function $g$; every tenth label is shown as a small, vertical bar.}
    \label{fig:update_rules}
\end{figure}

\subsection{Non-Gaussian updates}
\label{sec:updates_nongaussian}
In the general setting, where the random variable $\v{\gamma}$ is arbitrarily distributed under the chosen likelihood, $\v{\gamma}$ relates to process values $\v{f}$ by way of the non-conjugate prior
\[
\label{eqn:nonconjugate_prior}
p(\v{\gamma}, \v{f}) 
=
    p\left(\v{\gamma} \given g^{-1}(\v{f})\right)
    \c{N}(\v{f} \given \v{\mu}, \m{K}),
\]
where the \emph{link function} $g: \c{Y} \to \mathbb{R}$ maps from the space of predictions $\c{Y} \subset \R$ to the range of $f$. For binary classification problems, popular choices for $g: [0, 1] \to \mathbb{R}$ include logit and probit functions \cite{rasmussen06}. 
The left column of Figure~\ref{fig:update_rules} illustrates this scenario using methods described below.

Even under a non-conjugate prior \eqref{eqn:nonconjugate_prior}, the conditional expectation $\E(f \given \v{\gamma})$ and the residual $f - \E(f \given \v{\gamma})$ it induces are uncorrelated (see Section~\ref{sec:conditional_expectation}). Since $p(\v{f}, \v{\gamma})$ may not be Gaussian, however, it no longer follows that this lack of correlation implies independence---hence, the pathwise update \eqref{eqn:matheron_inf} may not hold.

Exact Bayesian inference and prediction are typically intractable when dealing with non-conjugate priors. Strategies for circumventing this issue generally approximate the true posterior by introducing an auxiliary random variable $\v{u} \sim q(\v{u})$ such that $f \given \v{u}$ resembles $f \given \v{y}$ according to a chosen measure of similarity \cite{nickisch2008approximations,hensman2015scalable}. 
For practical reasons, $\v{u}$ is typically assumed to be jointly Gaussian with $\v{f}$.\footnote{Note that, in the special case where $p(\v{f}, \v{\gamma})$ is Gaussian, the optimal $q$ is also Gaussian \cite{titsias09b}.} 
Consequently, non-conjugate priors $p(\v{f}, \v{\gamma})$ are replaced by conjugate ones $p(\v{f}, \v{u})$ to aid in the construction of approximate posteriors, whereupon Matheron's update rule holds once more. The following section explores these \emph{sparse} approximations in greater detail.

\subsection{Sparse updates}
\label{sec:updates_inducing}
Approximations to GP posteriors frequently revolve around conditioning a process $f$ on a random variable $\v{u} = (u_{1}, \ldots, u_{m}) \in \mathbb{R}^{m}$. Per the previous section, this may be because the outcome variable $\v{\gamma}$ is non-Gaussian \cite{nickisch2008approximations, titsias2010bayesian, hensman2015scalable}.
Alternatively, the $\c{O}(n^{3})$ cost for directly conditioning on all $n = \vert \v{y} \vert$ observations may be prohibitive \cite{titsias09a,hensman13}. In these cases and more, we would like to infer a distribution $q(\v{u})$ such that $f \given \v{u}$ explains the data. 
Defining (approximate) posteriors in this way not only avoids potential issues arising from non-Gaussianity of $\v{\gamma}$, but associates the computational cost of conditioning with $\v{u}$. As discussed below, this leads to pathwise updates that run in $\c{O}(m^{3})$ time.

Comprehensive treatment of different approaches to learning \emph{inducing distributions} $q(\v{u})$ is beyond the scope of this work. 
In general, however, these procedures operate by finding an approximate posterior $q(\v{f},\v{u})$ within a tractable family of approximating distributions $\c{Q}$.
For reasons that will soon become clear, this family of distributions typically includes an additional set of parameters $\m{Z}$, which help to define the joint distribution $p(\v{f}, \v{u})$. 
To help streamline presentation, we focus on the simplest and most widely used abstraction for inducing variables $\v{u}$: namely, \emph{pseudo-data}.

The noise-free pseudo-data framework \cite{snelson06, quinonero2007approximation, titsias09a} treats each draw of a random vector $\v{u} \sim q(\v{u})$ as a realization of process values $\v{f}_{m} = f(\m{Z})$ at a corresponding set of tunable locations $\m{Z} \in \c{X}^{m}$. This paradigm gets its name from the intuition that the (random) collection of pseudo-data $\del{\v{z}_{j}, u_{j}}_{j=1}^{m}$ mimics the effect of a noise-free data set $\del{\v{x}_{i}, f_{i}}_{i=1}^{n}$ on $f$. By construction, $\v{u}$ is jointly Gaussian with $f$.\footnote{This condition holds when $\v{u}$ relates to $f$ via a linear map \cite{lazaro2009inter}.} 
Appealing to Corollary~\ref{cor:matheron_inf}, we define the \emph{sparse pathwise update} as
\[
\label{eqn:pathwise_update_sparse}
(f \given \v{u})(\.) 
\eqd
    f(\.) +
    \underbracket[0.5pt]
    {\sum_{i=1}^m v_i k(\.,\v{z}_i)}_{\mathclap{m\t{-dimensional basis}}},
\]
where $\v{v} = \m{K}_{m,m}^{-1}\left(\v{u} - \v{f}_{m}\right)$. This formula is identical to the one given by Corollary~\ref{cor:matheron_inf}, save for the fact that we now sample $\v{u} \sim q(\v{u})$ and solve for a linear system involving the $m \times m$ covariance matrix $\m{K}_{m,m} = k(\m{Z}, \m{Z})$ at $\c{O}(m^{3})$ cost. The middle column of Figure~\ref{fig:update_rules} illustrates the sparse update induced by Gaussian $\v{u} \sim \c{N}(\v{\mu}_{\v{u}}, \m{\Sigma}_{\v{u}})$ with learned moments $\v{\mu}_{\v{u}}$ and $\m{\Sigma}_{\v{u}}$.

Just as we can imitate process values $\v{f}_{n}$, we can also emulate (Gaussian) observations $\v{y}$. This intuition leads to the Gaussian pseudo-data family of inducing distributions, whose moments
\[
\label{eqn:variational_family_pseudodata_gaussian}
\v{\mu}_{u} &= 
    \m{K}_{m,m}(\m{K}_{m,m} + \m{\Lambda})^{-1}\v{\tilde{y}}
&
\m{\Sigma}_{\v{u}} &= 
    (\m{K}_{m,m}^{-1} + \m{\Lambda})^{-1}
\]
are parameterized by \emph{pseudo-observations} $\v{\tilde{y}} \in \mathbb{R}^{m}$ and \emph{pseudo-noise} $\v{\tilde{\sigma}} \in \mathbb{R}_{+}^{m}$, where $\m{\Lambda} = \diag(\v{\tilde{\sigma}}^{2})$.
This choice of parameterization is motivated by the observation that, 
given $n \le m$ Gaussian random variables $\v{\gamma} \sim \c{N}(\v{f}_{n}, \sigma^{2}\m{I})$, the family of distributions it generates contains the optimal $q$ despite housing only $\c{O}(m)$ free terms \cite{seeger1999bayesian,opper2009variational}.\footnote{We recover the true posterior by, e.g., taking $(\tilde{y}_{i}, \tilde{\sigma}_{i}) = (y_{i}, \sigma)$ for all $i \le n$ and sending $\tilde{\sigma}_{i} \to \infty$ otherwise.} Using the Gaussian pathwise update \eqref{eqn:pathwise_update_gaussian}, we may express $\v{u}$ itself as
\[
\v{u} 
&\eqd 
    \v{f}_{m} + 
    \m{K}_{m,m}(\m{K}_{m,m} + \m{\Lambda})^{-1}(\v{\tilde{y}} - \v{f}_{m} - \v{\tilde{\eps}})
&
\v{\tilde{\eps}} & \sim \c{N}(\v{0}, \m{\Lambda}).
\]
Here, despite the fact that $\v{f}_{m}$ and $\v{\tilde{\eps}}$ generate $\v{u}$, it remains the case that $\Cov(\v{f}_{m} + \v{\tilde{\eps}}, \v{u}) = \m{0}$.
Substituting this expression into \eqref{eqn:pathwise_update_sparse} and simplifying gives the pathwise update\footnote{This same line of reasoning leads to a \emph{rank-1 pathwise update} for cases where conditions arrive online.}
\[
\label{eqn:pathwise_update_pseudodata_gaussian}
(f \given \v{u})(\.)
\eqd
    f(\.) +
    k(\., \m{Z})
    \del{\m{K}_{m,m}+  \m{\Lambda}}^{-1}
    \del{\v{\tilde{y}} - \v{f}_{m} - \v{\tilde{\eps}}}.
\]
Hence, while sampling $\v{u}$ is more complicated in the Gaussian pseudo-data case, the resulting pathwise update is straightforward. This family of inducing distributions is particularly advantageous in the large $m$ setting, both because it contains only $\c{O}(m)$ free parameters and for reasons discussed in the following section.

In rough analogy to methods discussed in Section~\ref{sec:priors}, we may think of the sparse updates introduced here as using an $m$-dimensional basis $k(\., \m{Z})$ to approximate functions defined in terms of the $n$-dimensional basis $k(\., \m{X}_{n})$. In practice, this basis is often efficient because neighboring training locations give rise to similar basis functions. Kernel basis functions at appropriately chosen sets of $m \ll n$ locations $\m{Z}$ exploit this redundancy to produce a sparser, more cost-efficient representation. \textcite{burt20} study this problem in detail and derive bounds on the quality of variational approximations to GP posteriors as $m \to n$.

\subsection{Iterative solvers}
\label{sec:updates_solvers}
Throughout this section, we have focused on the high-level properties of pathwise updates in relation to various problem settings. We have said little, however, regarding the explicit means of executing such an update. In all cases discussed here, pathwise updates have amounted to solutions to system of linear equations. For example, the update originally featured in Corollary~\ref{cor:matheron_inf} solves the system $\m{K}_{n,n}\v{v} = \v{y} - \v{f}_{n}$ for a vector of coefficients $\v{v}$, which define how the same realization of $f$ changes when subjected to the condition $\v{f}_{n} = \v{y}$. Given a reasonable number of conditions $n$ (up to several thousand), we may obtain $\v{v}$ by first computing the Cholesky factor $\m{L}_{n,n} = \m{K}^{\nicefrac{1}{2}}_{n,n}$ and then solving for a pair of triangular systems $\m{L}_{n,n}\v{\bar{v}} = \v{u} - \v{f}_n$ and $\m{L}_{n,n}^{\top} \v{v} = \v{\bar{v}}$. For large $n$, however, the $\c{O}(n^{3})$ time complexity for carrying out this recipe is typically prohibitive. 

Rather than solving for coefficients $\v{v}$ directly, we may instead employ an \emph{iterative solver} that constructs a sequence of estimates $\v{v}^{(1)},\v{v}^{(2)},\dotsc$ to $\v{v}$, such that $\v{v}^{(j)}$ converges to the true $\v{v}$ as $j$ increases.
Depending on the numerical properties of the linear system in question, it is possible (or even likely) that a high-quality estimate $\v{v}^{(j)}$ will be obtained after only $j \ll n$ iterations. This line of reasoning features prominently in a number of recent works, where iterative solvers have been shown to be highly competitive for purposes of approximating GP posteriors \cite{pleiss2018constant,gardner2018gpytorch,wang2019exact}. The right column of Figure~\ref{fig:update_rules} visualizes an iterative solution to the Gaussian pathwise update \eqref{eqn:pathwise_update_gaussian} obtained using preconditioned conjugate gradients \cite{gardner2018gpytorch}.

In these cases, posterior sampling via pathwise conditioning enjoys an important advantage over distributional approaches: it allows us to solve for linear system of the form $\m{K}^{-1}_{n,n}\v{v}$ rather than working with $\m{K}^{\nicefrac{1}{2}}_{*,*\given n} \v{\zeta}$.
Whereas the former amounts to a standard solve, the latter often requires special considerations \cite{pleiss2020fast} and can be difficult to work with when typical square root decompositions prove impractical \cite{parker12}.

Lastly, we note that these techniques can be combined with sparse approximations for improved scaling in $m$ and faster convergence of iterative solves. As a concrete example, we return to the Gaussian pseudo-data variational family \eqref{eqn:variational_family_pseudodata_gaussian}. By construction, the corresponding pathwise update \eqref{eqn:pathwise_update_pseudodata_gaussian} closely resembles the original Gaussian update \eqref{eqn:pathwise_update_gaussian}. In general, however, pseudo-noise variances $\tilde{\sigma}_{i}^{2}$ are often significantly larger than the true noise variance $\sigma^{2}$. The resulting linear system $(\m{K}_{m,m} + \m{\Lambda})^{-1}\v{v}$ is, therefore, substantially better-conditioned than that of the exact alternative---implying that it can be solved in far fewer iterations.

\subsection{Discussion}
\label{sec:updates_discussion}

In Section~\ref{sec:priors_discussion}, we discussed finite-dimensional approximations of Gaussian process posteriors. There, we explored how the globality of the prior reinforces the use of basis functions $\phi_i: \c{X} \to \R$ that inform us about $f$ over the entire domain $\c{X}$, while the localized effects of the data encourages the use of $\phi_i$ that only tell us about $f$ on subsets of $\c{X}$. This conflict hinders our ability to efficiently represent both the prior and the data (i.e., the posterior) using a single class of basis functions. That discussion ended with a demonstration of what happens when $\v{\phi} = (\phi_{1}, \ldots, \phi_{\ell})$ solely consists of global basis functions, specifically random Fourier features. Most works, however, have focused on the use of canonical basis functions $k(\., \v{x})$, which are typically local. This section, therefore, aims to fill in the gaps.

At the end of Section~\ref{sec:priors_discussion}, we saw how trouble conveying the data in global bases led to approximate posteriors that were starved for variance (Figure~\ref{fig:variance_starvation}). Writing the update rules---for a draw from an approximate prior $\tilde{f}(\.) = \v{\phi}(\.)^\top\v{w}$ subject to the condition $\v{\tilde{f}}_{n} = \v{y}$---in both unified and decoupled bases side-by-side helps to highlight their key differences
\[
\label{eqn:approximate_posteriors_comparison}
&\underbracket[0.5pt]{
    \tilde{f}(\.) +
    \v{\phi}(\.)^\top \m{\Phi}_{n}^\top \del[1]{\m{\Phi}_{n}^{\vphantom{\top}}\m{\Phi}_{n}^\top}^{-1}
    \del[1]{\v{y} - \v{\tilde{f}}_{n}}
    \vphantom{\tilde{f}(\.) + 
    k(\., \m{X}_{n})\m{K}_{n,n}^{-1} (\v{y} - \v{\tilde{f}}_{n})}
    }_{\t{unified approximate posterior}}
&\underbracket[0.5pt]{
    \tilde{f}(\.) + 
    k(\., \m{X}_{n})\m{K}_{n,n}^{-1}
    \del[1]{\v{y} - \v{\tilde{f}}_{n}}
    }_{\t{decoupled approximate posterior}}.
\]
On the right, the cross-covariance term $\v{\phi}(\.)^\top \m{\Phi}_{n}^\top = \v{\phi}(\.)^\top \v{\phi}(\m{X}_{n})$ is replaced by $k(\., \m{X}_{n})$. Seeing as the former is often chosen to approximate the latter in a way that converges when an appropriate limit is taken, for instance in \eqref{eqn:rkhs_inner_prod_onb}, it comes as no surprise that $k(\., \m{X}_{n})$ more accurately represents data. Moreover, the matrix inverse $\del[1]{\m{\Phi}_{n}^{\vphantom{\top}}\m{\Phi}_{n}^\top}^{-1}$ appearing on the left is often ill-conditioned and, therefore, amplifies numerical errors. Finite-dimensional GPs constructed from local basis functions exhibit similar issues, albeit for essentially the opposite reason. Rather than failing to adequately represent the data, local basis functions struggle to reproduce the prior.

Many approaches to approximating Gaussian processes $f \sim \c{GP}(0, k)$ revolve around representing the data in terms of $m$-dimensional canonical bases $k(\., \m{Z})$; for a review, see \textcite{quinonero2007approximation}. Early iterations of this strategy \cite{silverman1985some, wahba1990spline, tipping2000relevance}, typically used $k(\., \m{Z})$ to define degenerate Gaussian processes \cite{rasmussen06}. Here, the term \emph{degenerate} emphasizes the fact that the covariance function 
\[
\label{eqn:degnerate_kernel}
\tilde{k}(\v{x}_{i}, \v{x}_{j})
=
    k(\v{x}_{i}, \m{Z})k(\m{Z},\m{Z})^{-1}k(\m{Z}, \v{x}_{j})
\]
of such a process has a finite number of non-zero eigenvalues. From the weight-space perspective, degenerate GPs are Bayesian linear models $\tilde{f}(\.) = k(\., \m{Z}) \v{w}$, which makes it clear that $\tilde{f}(\.)$ goes to zero as $k(\., \m{Z}) \to \v{0}$. This behavior is particularly troublesome if all $\v{z} \in \m{Z}$ are positioned near training locations $\m{X}_{n}$: since $k(\v{x}_{*}, \m{Z})$ typically vanishes as $\v{x}_{*}$ retreats from $\m{Z}$, both the prior and the posterior collapse to point masses away from the data.

Instead of focusing on the data, one idea is to start by finding a basis $k(\., \m{Z})$ capable of accurately reproducing the prior. Accomplishing this feat will require us to use a relatively large number of basis functions, since $\m{Z}$ will need to effectively cover the (compact) domain $\c{X}$. As mentioned in Section~\ref{sec:priors_exact}, certain kernels produce special kinds of matrices when evaluated on particular sets. Exploiting these special properties---e.g., by taking the Toeplitz matrices formed when evaluating a stationary product kernel $k$ on a regularly spaced grid $\m{Z}$ and embedding them inside of circulant ones \cite{wood1994simulation, dietrich97}---enables us to drastically reduce the cost of expensive matrix operations, such as multiplies, decompositions, and inverses.
Especially when $\c{X}$ is low dimensional, then, we can use the canonical basis to efficiently approximate the prior.

Kernel interpolation methods \cite{wilson2015kernel, pleiss2018constant} take this idea a step further. Given a set of $m$ inducing locations $\m{Z}$, let $\v{\xi}: \c{X} \to \R^{m}$ be a \emph{weight function} \cite{silverman1984spline} mapping locations $\v{x}_{i}$ onto (sparse) weight vectors $\v{\xi}_{i}$ such that $k(\v{x}_{i}, \m{Z}) \approx \v{\xi}_{i}^\top k(\m{Z}, \m{Z})$. By applying this technique to \eqref{eqn:degnerate_kernel}, we can define another Gaussian process $g \sim \c{GP}(0, c)$ with degenerate covariance $c(\v{x}_{i}, \v{x}_{j}) = \v{\xi}_{i}^\top k(\m{Z},\m{Z}) \v{\xi}_{j}^{\vphantom{\top}}$. As a Bayesian linear model, we have $g(\.) = \v{\xi}(\.)^\top \v{g}_{m}$. Notice that process values $\v{g}_{m} = g(\m{Z})$ now play the role of random weights $\v{w}$ and fully determine the behavior of the random function $g$. Assuming $\m{Z}$ was chosen so that $k(\m{Z}, \m{Z})$ admits convenient structure, random vectors $\v{g}_{m} \given \v{y}$ and, hence, random functions $(g \given \v{y})(\.)$ can be obtained cheaply \cite{pleiss2018constant}. When $\m{Z}$ is sufficiently dense in $\c{X}$ (so as to be reasonably close to $\v{x}_{*}$), this strategy provides an alternative means of efficiently sampling from GP posteriors.

\subsection{An empirical study}
\label{sec:updates_empiricalStudy}

\begin{figure}[t!]
    \centering
    \includegraphics[width=\textwidth]{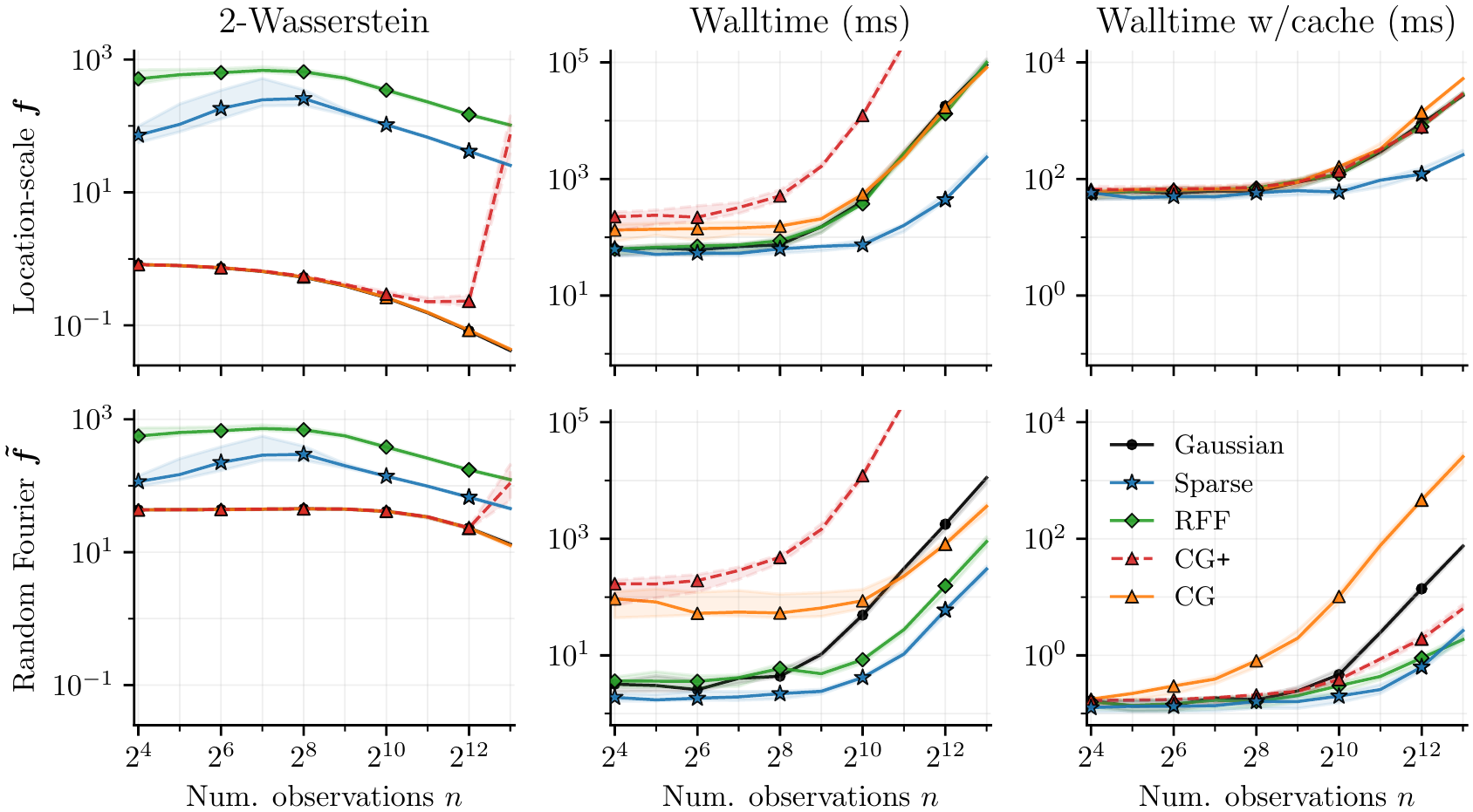}
    \caption{
        Accuracy and cost of different methods for sampling from GP posteriors given $n$ observations. Draws from the prior are generated using either location-scale (\emph{top}) or $\ell = 4096$ random Fourier features (\emph{bottom}). We denote Gaussian updates by black dots, sparse updates by blue stars, CG updates by orange and red triangles, and RFF updates by green diamonds.
        Sparse and RFF updates both utilized $m = \frac{n}{4}$ basis functions. All results are reported as medians and interquartile ranges measured over 32 independent trials.
        \emph{Left:} 2-Wasserstein distances of empirical distributions of $10^{5}$ samples from the ground truth GP posterior. \emph{Middle and right:} Time taken to generate a draw of $(\v{f}_{*} \given \.\,) \in \mathbb{R}^{1024}$ with and without caching of terms that are independent of $\m{X}_{*}$.
    }
    \label{fig:accuracy_vs_cost}
\end{figure}

By now, we have explored a variety of techniques for sampling from GP posteriors. Each of these methods is well suited for a particular type of problem. To help shed light on their respective niches, we conducted a simple controlled experiment. 

Here, our goal is to better understand how different methods balance the tradeoff of cost and accuracy.
We measured cost in terms of runtimes and accuracy in terms of 2-Wasserstein distances between empirical distributions and true posterior (see Section~\ref{sec:error}). 
To eliminate confounding variables, we assumed a known Mat\'{e}rn-$\nicefrac{5}{2}$ prior on random functions $f : \mathbb{R}^{4} \to \mathbb{R}$. All trials began by sampling this prior at $n$ training locations $\m{X}_{n}$ and 1024 test locations $\m{X}_{*}$, using either location-scale transforms or random Fourier features. We then used the various update rules explored in this section to condition on $n$ observations $\v{y} \sim \c{N}\del{\v{f}_{n}, 10^{-3} \m{I}}$.

Sparse updates were constructed using $m = \frac{n}{4}$ inducing variables $\v{u}$, whose distributions $q(\v{u})$ and inducing locations $\m{Z}$ were obtained by minimizing Kullback--Leibler divergences. Conjugate-gradient-based updates were carried out by, first, computing partial pivoted Cholesky decompositions in order to precondition linear systems $(\m{K}_{n,n} + \sigma^{2}\m{I})\v{v} = (\v{y} - \v{f}_{n} - \v{\varepsilon})$. We then iteratively solved for Gaussian pathwise updates using the method of conjugate gradients. Stopping conditions for both the partial pivoted Cholesky decomposition and conjugate gradient solver were chosen to match those of \textcite{gardner2018gpytorch}. Prior to discussing trends in Figure~\ref{fig:accuracy_vs_cost}, we would like to point out that curves associated with Gaussian updates (black) are heavily obscured: in the left column, by CG-based ones (orange and red) and in top middle and top right plots by RFF-based ones (green).

Comparing the rows of Figure~\ref{fig:accuracy_vs_cost}, we see that random Fourier feature (RFF) approximations to priors introduce modest amounts of error in exchange for large cost reductions. These savings are particularly dramatic in cases where test inputs $\m{X}_{*}$ significantly outnumber training locations $\m{X}_{n}$. Echoing discussion in Section~\ref{sec:priors_discussion}, however, $m$-dimensional random Fourier bases struggle to represent the data. All other things being held equal, sparse updates performed in the canonical basis consistently outperform RFF-based ones. These sparse methods are also considerable faster than competing approaches when $m \ll n$.

Direct comparison of sparse and CG updates is difficult, since both methods are sensitive to various design choices. In our experiments, CG-based updates behaved tantamount to exact ones---with two important caveats. First, CG-based updates were initially slower than exact ones but outpace them as $n$ increased. Second, na\"ively computing pathwise updates using CG is highly inefficient when it comes to caching. When repeatedly conditioning on (potentially different realizations of) $\v{\gamma} = \v{y}$, one option is to use CG to precompute the matrix inverse $(\m{K}_{n,n} + \sigma^{2}\m{I})^{-1}$. This CG$\texttt{+}$ variant is significantly more cache-friendly, but also much more susceptible to round-off error---see dashed red curves in Figure~\ref{fig:accuracy_vs_cost}.

These empirical results help to characterize the behaviors of errors introduced by different approximation schemes, but leave many questions unanswered. In order to fill in some of the remaining gaps, we now analyze various types of approximation error in details.

\section{Error analysis}
\label{sec:error}

Over the course of this section, we will analyze the different types of error introduced by pathwise approximations. Speaking about these errors requires us to agree upon a suitable notion of similarity between Gaussian processes. Ultimately, we are interested in understanding how these approximations influence Monte Carlo estimators. We therefore focus on $2$-Wasserstein distances between true and approximate posteriors, since they control downstream Monte Carlo errors.\footnote{$2$-Wasserstein distances majorize $1$-Wasserstein distances, which regulate expectations of Lipschitz functionals by Kantorovich--Rubinstein duality \cite{villani08}.} These distances measure the similarity of Gaussian processes $\tilde{f}$ and $f$ as the expectation of a metric $d\del[1]{\tilde{f}, f}$ under the best possible \emph{coupling} of the two processes. Formally, we have 
\[
W_{2,d}\del[1]{\tilde{f}, f}
=
    \sbr{\inf_{\pi \in \Pi(\tilde{\mu}, \mu)}
    \E_{\pi}d\del[1]{\tilde{f}, f}^2}^{\nicefrac{1}{2}},
\]
where $\Pi(\tilde{\mu}, \mu)$ denotes the set of valid couplings \cite{mallasto2017learning}, i.e. joint measures whose marginals correspond with the Gaussian measures $\tilde{\mu}$ and $\mu$ induced by processes $\tilde{f}$ and $f$, respectively.
Below, we employ $L^2$ and supremum norms as the underlying metrics used to define $2$-Wasserstein distances.

\newcommand{\infLowerSubscript}{\mathop{\mathrm{inf}\vphantom{\mathrm{sup}}}}
For the remainder of this section, we assume that the domain $\c{X}$ is a compact subset of some metric measure space $\c{M}$ and that $\c{X}$ has finite measure.
As a straightforward example, the domain may be a $d$-dimensional hypercube $\c{X} = [a, b]^{d}$ within $\c{M} = \R^d$.

Lastly, let us introduce some additional notation to simplify material presented below. First, we will use $\tilde{f} \given \v{y}$ and $\tilde{f} \given \v{u}$ to denote pathwise conditioning of an approximate prior $\tilde{f}$ via canonical \eqref{eqn:matheron_inf} and sparse \eqref{eqn:pathwise_update_sparse} update rules, respectively. These constructions should not be confused with the approximate posteriors discussed in Sections~\ref{sec:priors_discussion} and \ref{sec:updates_discussion}. Second, we will superscript covariance functions $k$ to convey their corresponding processes. For example, $k^{(\tilde{f})}$ will denote the kernel of the approximation prior $\tilde{f}$. Third and finally, given a set of $n$ training locations $\m{X}_{n} \subset \c{X}$, define the \emph{weight function} $\v{\xi} : \c{X} \to \R^{n}$ as
\[
\v{\xi}(\.) = k(\m{X}_{n}, \m{X}_{n})^{-1} k(\m{X}_n,\.)
.
\]
Variants of this function have been extensively studied in the context of regression; see \textcite{silverman1984spline, sollich2005using} and references contained therein.

\subsection{Posterior approximation errors}
\label{sec:post-error}
This section adapts the results of \textcite{wilson20} to study the error in the decoupled approximate posterior
\[
\label{eqn:gp_posterior_pathwise_decoupled}
(\tilde{f} \given \v{y})(\.) 
\eqd 
    \tilde{f}(\.) + k(\., \m{X}_{n})\m{K}_{n,n}^{-1}(\v{y} - \v{\tilde{f}}) 
= 
    \tilde{f}(\.) + \v{\xi}(\.)^\top (\v{y} - \v{\tilde{f}}) 
\]
formed by updating an $\ell$-dimensional approximate priors $\tilde{f}(\.) = \v{\phi}(\.)^\top\v{w}$ via an $n$-dimensional canonical basis $k(\., \m{X}_{n})$ so as to satisfy the condition imposed by $n$ noise-free observations $\v{y}$.

\begin{proposition}
\label{prop:wasserstein_bound}
Assume that $\c{X} \subset \R^d$ is compact and that the stationary kernel $k$ is sufficiently regular for $f \sim \c{GP}(\mu, k)$ to be almost surely continuous. 
Accordingly, if we define 
$C_{1} 
= 
    \sqrt{2}
    \f{diam}(\c{X})^{\nicefrac{d}{2}}
    \del[1]{1 + 
            \norm[0]{k}_{\c{C}(\c{X}^2)}^2
            \norm[0]{\m{K}_{n,n}^{-1}}_{L(\ell^\infty; \ell^1)}^2
    }^{\nicefrac{1}{2}},$
then we have
\[
\label{eqn:wasserstein_bound}
W_{2,L^2(\c{X})} \del[1]{\tilde{f} \given \v{y}, f \given \v{y}}
=
    \del{
        \inf_{\pi \in \Pi(\tilde{\mu}, \mu)}
        \E_{\pi}\norm{(\tilde{f} \given \v{y}) - (f \given \v{y})}_{L^2(\c{X})}^2
    }^{\nicefrac{1}{2}}
\leq 
    C_{1} W_{2, \c{C}(\c{X})}\del[1]{\tilde{f}, f},
\]
where $W_{2,L^2(\c{X})}$ and $W_{2,\c{C}(\c{X})}$ respectively denote 2-Wasserstein distances over the Lebesgue space $L^2(\c{X})$ and the space of continuous functions $\c{C}(\c{X})$ equipped with the supremum norm, $\norm{\.}_{\c{C}(\c{X}^2)}$ is the supremum norm over continuous functions, and $\norm[0]{\.}_{L(\ell^\infty; \ell^1)}$ is the operator norm between $\ell^\infty$ and $\ell^1$ spaces.
\end{proposition}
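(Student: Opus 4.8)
The plan is to exploit the fact that both the true and the approximate posteriors are produced by applying \emph{the same} affine update operator to their respective priors. Writing $h = \tilde{f} - f$ for the coupled difference process, the decoupled representation~\eqref{eqn:gp_posterior_pathwise_decoupled} yields the pathwise identity
\[
(\tilde{f} \given \v{y})(\.) - (f \given \v{y})(\.) = h(\.) - \v{\xi}(\.)^\top h(\m{X}_{n}),
\]
since the weight function $\v{\xi}(\.) = \m{K}_{n,n}^{-1} k(\m{X}_{n}, \.)$ is built from the \emph{true} kernel and is therefore identical in both updates. Consequently the statement reduces to two pieces: (i) a deterministic bound showing that the linear map $h \mapsto h - \v{\xi}^\top h(\m{X}_{n})$ is bounded from $\del{\c{C}(\c{X}), \norm{\.}_{\c{C}(\c{X})}}$ into $L^2(\c{X})$ with norm at most $C_{1}$, and (ii) a coupling argument that upgrades this pathwise bound to the claimed inequality between Wasserstein distances.

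For step (i), I would control the $L^2(\c{X})$ norm of the right-hand side by splitting through $\norm{a-b}^2 \le 2\norm{a}^2 + 2\norm{b}^2$ into a contribution from $h$ and one from $\v{\xi}^\top h(\m{X}_{n})$. The first integral satisfies $\int_{\c{X}} \abs{h(\v{x})}^2 \d\v{x} \le \abs{\c{X}}\, \norm{h}_{\c{C}(\c{X})}^2$, and since each coordinate extent of $\c{X}$ is at most $\f{diam}(\c{X})$ we may enclose $\c{X}$ in an axis-aligned box to get $\abs{\c{X}} \le \f{diam}(\c{X})^{d}$. For the second integral the crucial move is to bound the integrand pointwise: writing $\v{\xi}(\v{x})^\top h(\m{X}_{n}) = k(\m{X}_{n}, \v{x})^\top \m{K}_{n,n}^{-1} h(\m{X}_{n})$ (using symmetry of $\m{K}_{n,n}^{-1}$) and applying H\"older's inequality in the form of $\ell^\infty$--$\ell^1$ duality gives
\[
\bigl| \v{\xi}(\v{x})^\top h(\m{X}_{n}) \bigr| \le \norm[0]{k(\m{X}_{n}, \v{x})}_{\ell^\infty} \norm[0]{\m{K}_{n,n}^{-1}}_{L(\ell^\infty; \ell^1)} \norm[0]{h(\m{X}_{n})}_{\ell^\infty},
\]
after which $\norm[0]{k(\m{X}_{n}, \v{x})}_{\ell^\infty} \le \norm[0]{k}_{\c{C}(\c{X}^2)}$ and $\norm[0]{h(\m{X}_{n})}_{\ell^\infty} \le \norm{h}_{\c{C}(\c{X})}$. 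Integrating over $\c{X}$, recombining the two contributions, and taking a square root reproduces exactly the constant $C_{1}$.

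For step (ii), I would observe that the update operator $T \colon g \mapsto g(\.) + \v{\xi}(\.)^\top(\v{y} - g(\m{X}_{n}))$ is a fixed, continuous (hence measurable) map common to both processes, so any coupling $\pi \in \Pi(\tilde{\mu}, \mu)$ of the priors pushes forward under $T \times T$ to a coupling of the posteriors. Restricting the infimum defining $W_{2,L^2(\c{X})}(\tilde{f}\given\v{y}, f\given\v{y})$ to this subclass of couplings and inserting the pathwise bound from step (i) under the expectation gives
\[
W_{2,L^2(\c{X})}\del[1]{\tilde{f} \given \v{y}, f \given \v{y}} \le \inf_{\pi\in\Pi(\tilde{\mu},\mu)} \del[1]{\E_\pi \norm[0]{T\tilde{f} - Tf}_{L^2(\c{X})}^2}^{\nicefrac{1}{2}} \le C_{1} \inf_{\pi\in\Pi(\tilde{\mu},\mu)} \del[1]{\E_\pi \norm{h}_{\c{C}(\c{X})}^2}^{\nicefrac{1}{2}},
\]
and the final infimum is by definition $W_{2,\c{C}(\c{X})}(\tilde{f}, f)$.

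The routine part is the delimiter and operator-norm bookkeeping. The step requiring the most care is surfacing \emph{precisely} the $\ell^\infty$--$\ell^1$ operator norm in step (i): one must resist bounding $\v{\xi}^\top h(\m{X}_{n})$ with a Euclidean (spectral) norm, as that would neither match the stated $C_{1}$ nor scale correctly in $n$. The other delicate point is purely measure-theoretic: to make the pushforward argument rigorous I must confirm that both $\tilde{f}$ and $f$ induce Gaussian measures on $\c{C}(\c{X})$ (this is exactly where the almost-sure continuity hypothesis is used) and that $T$ maps $\c{C}(\c{X})$ continuously into $L^2(\c{X})$, so that pushforwards of prior couplings are genuine posterior couplings and the restriction of the infimum is legitimate.
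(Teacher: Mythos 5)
Your proposal is correct and follows essentially the same route as the paper's proof: the same splitting of the pathwise difference via $\norm{a-b}^2 \le 2\norm{a}^2 + 2\norm{b}^2$, the same H\"older ($\ell^1$--$\ell^\infty$) step surfacing the operator norm $\norm[0]{\m{K}_{n,n}^{-1}}_{L(\ell^\infty;\ell^1)}$, and the same lift to Wasserstein distances by integrating the deterministic bound against a coupling of the priors. Your two refinements---making explicit the pushforward argument that turns prior couplings into posterior couplings (which the paper leaves implicit in its definition of the left-hand side), and spelling out the enclosure $\f{vol}(\c{X}) \le \f{diam}(\c{X})^d$ that the paper silently uses to pass from $\f{vol}(\c{X})$ in its proof to $\f{diam}(\c{X})^{d}$ in $C_1$---are welcome but do not change the argument.
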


\begin{proof} We begin by considering the term inside the expectation in \eqref{eqn:wasserstein_bound}. Applying Matheron's rule followed by H\"older's inequality ($p=1$, $q=\infty$), we have
\[
\begin{split}
\label{eqn:prop5_proof_part1}
\abs{(\tilde{f} \given \v{y})(\v{x}) - (f \given \v{y})(\v{x})}^2 
&\leq 
    2\abs{\tilde{f}(\v{x}) - f(\v{x})}^2 
    + 2\abs{\v{\xi}(\v{x})^\top(\v{\tilde{f}}_{n} - \v{f}_{n})}^2
\\
&\leq 
    2\norm{\tilde{f} - f}_{L^\infty(\c{X})}^2
    + 2\norm{\v{\xi}(\v{x})}_{\ell^1}^2
    \norm{\v{\tilde{f}}_{n} - \v{f}_{n}}_{\ell^\infty}^2.
\end{split}
\]
Continuing from the second line, the definition of the operator norm implies that
\[
\label{eqn:prop5_proof_part2}
\begin{split}
\abs{(\tilde{f} \given \v{y})(\v{x}) - (f \given \v{y})(\v{x})}^2
&\leq 
    2\del{1 + 
        \norm{k(\v{x}, \m{X}_{n})}_{\ell^\infty}^{2}
        \norm{\m{K}_{n,n}^{-1}}_{L(\ell^\infty; \ell^1)}^2
    } \norm{\tilde{f} - f}_{L^\infty(\c{X})}^2
\\
&\leq 
    2\del{1 + 
        \norm[1]{k}_{\c{C}(\c{X}^2)}^2 
        \norm{\m{K}_{n, n}^{-1}}_{L(\ell^\infty; \ell^1)}^2
    } \norm{\tilde{f} - f}_{L^\infty(\c{X})}^2
\\
&=
    \underbracket[0.5pt]{2\del{1 + 
        \norm[1]{k}_{\c{C}(\c{X}^2)}^2
        \norm{\m{K}_{n, n}^{-1}}_{L(\ell^\infty; \ell^1)}^2
    }}_{C_{0}} \norm{\tilde{f} - f}_{\c{C}(\c{X})}^2,
\end{split}
\]
where, in the final line, we have used continuity of sample paths to replace  $\norm{\cdot}_{L^\infty(\c{X})}$ with $\norm{\cdot}_{\c{C}(\c{X})}$.
We now lift this bound between sample paths to one on 2-Wasserstein distances by integrating both sides with respect to the optimal coupling $\pi \in \Pi(\tilde{\mu}, \mu)$
\[
\begin{split}
W_{2,L^2(\c{X})}\del[1]{\tilde{f} \given \v{y}, f \given \v{y}}
&=
    \del{\inf_{\pi \in \Pi(\tilde{\mu}, \mu)}
    \E_{\pi}\norm{(\tilde{f} \given \v{y}) - (f \given \v{y})}_{L^2(\c{X})}^2 }^{\nicefrac{1}{2}}
\\
&\leq
    \del{C_{0} \f{vol}(\c{X})
    \inf_{\pi \in \Pi(\tilde{\mu}, \mu)}
    \E_{\pi}\norm{\tilde{f} - f}_{\c{C}(\c{X})}^2}^{\nicefrac{1}{2}}
\\
&\le
    C_{1} W_{2, \c{C}(\c{X})}\del[1]{\tilde{f}, f},
\end{split}
\]
where $\f{vol}(\c{X})$ denotes the Lebesgue measure of $\c{X}$. Hence, the claim follows.\footnote{Note that, since $f$ is sample-continuous and $\c{C}(\c{X})$ is a separable metric space, $W_{2,\c{C}(\c{X})}$ is a proper metric.}
\end{proof}

\begin{proposition}
\label{prop:covariance_error}
With the same assumptions, let $C_{2} = n \del{1 + \norm[0]{\m{K}^{-1}_{n,n}}_{\c{C}(\c{X}^2)}\norm[0]{k}_{\c{C}(\c{X}^2)}}^2$. Then,
\[
\label{eqn:covariance_error}
\mathbb{E}_{\v{\phi}}
\norm[1]{k^{(\tilde{f} \given \v{y})} - k^{(f \given \v{y})}}_{\c{C}(\c{X}^2)} \leq C_{2} \mathbb{E}_{\v{\phi}} \norm[1]{k^{(\tilde{f})} - k}_{\c{C}(\c{X}^2)}
.
\]
Moreover, when $\tilde{f}$ is a random Fourier feature approximation of the prior, it follows that
\[
    \mathbb{E}_{\v{\phi}}
    \norm[1]{k^{(\tilde{f} \given \v{y})} - k^{(f \given \v{y})}}_{\c{C}(\c{X}^2)} \leq \ell^{\nicefrac{-1}{2}}C_{2} C_{3},
\]
where $C_{3}$ is one of several possible constants given by \textcite{sutherland15}.
\end{proposition}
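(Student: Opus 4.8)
The plan is to exploit the fact that the decoupled posterior covariance is a \emph{deterministic linear image} of the prior covariance, so that the posterior kernel error is nothing more than that same linear map applied to the prior kernel error. Write $\Delta = k^{(\tilde{f})} - k$ for the prior kernel discrepancy and recall the weight function $\v{\xi}(\.) = \m{K}_{n,n}^{-1} k(\m{X}_n, \.)$, which is built from the \emph{true} kernel and is therefore deterministic. The decoupled update in \eqref{eqn:gp_posterior_pathwise_decoupled} sends any prior path $g$ to $g(\.) - \v{\xi}(\.)^\top g(\m{X}_n)$ up to a data-dependent deterministic shift that does not affect covariance. Since the same weights $\v{\xi}$ are used whether the prior is $f$ or $\tilde{f}$, applying this map in both arguments of the covariance and subtracting gives the pointwise identity
\[
k^{(\tilde{f} \given \v{y})}(\v{x}, \v{x}') - k^{(f \given \v{y})}(\v{x}, \v{x}')
=
\Delta(\v{x}, \v{x}')
- \v{\xi}(\v{x})^\top \Delta(\m{X}_n, \v{x}')
- \Delta(\v{x}, \m{X}_n) \v{\xi}(\v{x}')
+ \v{\xi}(\v{x})^\top \Delta(\m{X}_n, \m{X}_n) \v{\xi}(\v{x}').
\]

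The second step is to bound each of these four terms pointwise. Every entry of $\Delta$, of the vector $\Delta(\m{X}_n, \v{x}')$, and of the matrix $\Delta(\m{X}_n, \m{X}_n)$ is simply $\Delta$ evaluated at a pair of points in $\c{X}$, hence bounded in absolute value by $\norm{\Delta}_{\c{C}(\c{X}^2)}$. Pulling this scalar out and applying H\"older's inequality ($\ell^1$ against $\ell^\infty$) collapses the display into $\norm{\Delta}_{\c{C}(\c{X}^2)} \del{1 + \norm{\v{\xi}(\v{x})}_{\ell^1}} \del{1 + \norm{\v{\xi}(\v{x}')}_{\ell^1}}$. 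Taking the supremum over $(\v{x}, \v{x}')$ and controlling $\sup_{\v{x}} \norm{\v{\xi}(\v{x})}_{\ell^1}$ in terms of $\norm{\m{K}_{n,n}^{-1}}_{\c{C}(\c{X}^2)}$, $\norm{k}_{\c{C}(\c{X}^2)}$, and the dimension $n$ then yields $\norm{k^{(\tilde{f} \given \v{y})} - k^{(f \given \v{y})}}_{\c{C}(\c{X}^2)} \le C_2 \norm{\Delta}_{\c{C}(\c{X}^2)}$ with $C_2$ in the stated form.

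Because $C_2$ depends only on the true kernel (through $\m{K}_{n,n}$ and $k$) and on $n$, and \emph{never} on the random features $\v{\phi}$, it is a deterministic constant. Taking $\mathbb{E}_{\v{\phi}}$ of both sides of the pointwise bound therefore pulls $C_2$ straight out and establishes the first inequality. For the random Fourier feature case, I would invoke \textcite{sutherland15}, whose uniform guarantee on the expected kernel approximation error gives $\mathbb{E}_{\v{\phi}} \norm{k^{(\tilde{f})} - k}_{\c{C}(\c{X}^2)} \le \ell^{\nicefrac{-1}{2}} C_3$ for a suitable constant $C_3$ (absorbing $\operatorname{diam}(\c{X})$ and the kernel's spectral moments); substituting this into the first claim immediately delivers the second.

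I expect the main obstacle to be the uniform control of $\sup_{\v{x}} \norm{\v{\xi}(\v{x})}_{\ell^1}$ needed to produce exactly the constant $C_2$ with its single factor of $n$: the four-term expansion and the H\"older step are routine, but routing the entrywise kernel bounds through the operator norm of $\m{K}_{n,n}^{-1}$ so that the dimension enters only linearly (rather than through the crude $n^2$ one gets by bounding every entry independently) is the delicate part. Everything downstream---the deterministic nature of $C_2$, the expectation, and the random-feature substitution---is then essentially bookkeeping.
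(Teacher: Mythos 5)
Your proposal is correct and follows essentially the same route as the paper: the paper packages your four-term expansion as a linear operator $M$ sending prior kernels to posterior kernels, bounds it on $\c{C}(\c{X}^2)$ via the same H\"older estimates and the $L(\ell^\infty;\ell^1)$ operator norm of $\m{K}_{n,n}^{-1}$ that you use to control $\sup_{\v{x}}\norm{\v{\xi}(\v{x})}_{\ell^1}$, pulls the $\v{\phi}$-independent constant out of the expectation, and cites the same random-feature bound of \textcite{sutherland15}. The one point you misjudge is in your own favor: your factorized pointwise bound $\norm{\Delta}_{\c{C}(\c{X}^2)}\del{1+\norm{\v{\xi}(\v{x})}_{\ell^1}}\del{1+\norm{\v{\xi}(\v{x}')}_{\ell^1}}$ already yields the constant $\del{1+\norm{\m{K}_{n,n}^{-1}}_{L(\ell^\infty;\ell^1)}\norm{k}_{\c{C}(\c{X}^2)}}^2$ with no factor of $n$ at all (the paper's extra $n$ comes from a cruder estimate of the quadratic term $\v{\xi}(\v{x})^\top c(\m{X}_n,\m{X}_n)\v{\xi}(\v{x}')$, which H\"older applied twice avoids), so the ``delicate'' dimension-counting you flag is not an obstacle and your bound trivially implies the stated one with $C_2$.
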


\begin{proof}
Let $M_k: \c{C}(\c{X}\times\c{X})\to \c{C}(\c{X}\times\c{X})$ be the bounded linear operator given by
\[
(M_k c)(\v{x}, \v{x}') 
&= 
    c(\v{x}, \v{x}') 
    - c(\v{x}, \m{X}_{n}) \v{\xi}(\v{x}')
    - \v{\xi}(\v{x})^\top c(\m{X}_{n}, \v{x}') 
    + \v{\xi}(\v{x})^\top c(\m{X}_{n}, \m{X}_{n}) \v{\xi}(\v{x}').
\]
Henceforth, we omit the subscript from $M_{k}$ to ease notation. Note that, by construction,
\[
k^{(f \given \v{y})}(\v{x}, \v{x}')
&= 
    (M k)(\v{x}, \v{x}')
&
k^{(\tilde{f} \given \v{y})}(\v{x}, \v{x}')
&= 
    (M k^{(\tilde{f})})(\v{x}, \v{x}').
\]
Focusing on the integrand on the left-hand side of \eqref{eqn:covariance_error}, we begin by separating out the operator norm $\norm{M}_{L(\c{C}(\c{X}^2);\c{C}(\c{X}^2))}$ as
\[
\norm{k^{(\tilde{f} \given \v{y})} - k^{(f \given \v{y})}}_{\c{C}(\c{X}^2)}
=
    \norm{M k^{(\tilde{f})} - M k}_{\c{C}(\c{X}^2)}
\le
    \norm{M}_{L(\c{C}(\c{X}^2);\c{C}(\c{X}^2))}
    \norm{k^{(\tilde{f})} - k}_{\c{C}(\c{X}^2)}.
\]
Refining this inequality requires us to upper bound $\norm{M}_{L(\c{C}(\c{X}^2);\c{C}(\c{X}^2))}$. To do so, we write
\[
\label{eqn:M_kc}
\norm{M c}_{\c{C}(\c{X}^2)} 
&\leq 
    \norm{c}_{\c{C}(\c{X}^2)}
    + 2 \norm{c(\., \m{X}_{n})\v{\xi}(\.)}_{\c{C}(\c{X}^2)}
    + \norm{\v{\xi}(\.)^\top c(\m{X}_{n}, \m{X}_{n}) \v{\xi}(\.)}_{\c{C}(\c{X}^2)}
.
\]
We now use H\"{o}lder's inequality ($p = 1$, $q = \infty$) followed by the definition of the operator norm $\norm{\cdot}_{L(\ell^\infty; \ell^1)}$ to bound the second and third terms on the right as
\[
\begin{split}
\norm{c(\., \m{X}_{n})\v{\xi}(\.)}_{\c{C}(\c{X}^2)}
&= 
    \sup_{\v{x}, \v{x}' \in \c{X}}
    \sbr{c(\v{x}, \m{X}_{n})\v{\xi}(\v{x}')}
\\
&\leq 
    \sup_{\v{x}, \v{x}' \in \c{X}} 
    \sbr{
        \norm{c(\v{x}, \m{X}_{n})}_{\ell^\infty}
        \norm{\m{K}_{n,n}^{-1}}_{L(\ell^\infty; \ell^1)}
        \norm{k(\m{X}_{n}, \v{x}')}_{\ell^\infty}
    }
\\
&\leq 
    \norm{c}_{\c{C}(\c{X}^2)}
    \norm{\m{K}^{-1}_{n,n}}_{L(\ell^\infty; \ell^1)}
    \norm{k}_{\c{C}(\c{X}^2)}
\end{split}
\]
and
\[
\norm{\v{\xi}(\.)^\top c(\m{X}_{n}, \m{X}_{n}) \v{\xi}(\.)}_{\c{C}(\c{X}^2)}
\leq 
    n
    \norm{c}_{\c{C}(\c{X}^2)}
    \norm{\m{K}^{-1}_{n,n}}_{L(\ell^\infty; \ell^1)}^2
    \norm{k}_{\c{C}(\c{X}^2)}^2.
\]
Returning to \eqref{eqn:M_kc}, we may now bound $\norm{M c}_{\c{C}(\c{X}^2)}$ by writing
\[
\begin{split}
\norm{M c}_{\c{C}(\c{X}^2)}
&\leq
    \norm{c}_{\c{C}(\c{X}^2)}
    \del[2]{1 
        + 2 \norm{\m{K}^{-1}_{n,n}}_{L(\ell^\infty; \ell^1)}\norm{k}_{\c{C}(\c{X}^2)}
        + n \norm{\m{K}^{-1}_{n,n}}_{L(\ell^\infty; \ell^1)}^2 \norm{k}_{\c{C}(\c{X}^2)}^2
    }
\\
&\leq 
    \norm{c}_{\c{C}(\c{X}^2)}
    \del[2]{
        n 
        \sbr{1 + 
             \norm{\m{K}^{-1}_{n, n}}_{L(\ell^\infty; \ell^1)}
             \norm{k}_{\c{C}(\c{X}^2)}}^2
    },
\end{split}
\]
which immediately implies that
\[
\norm{M}_{L(\c{C}(\c{X}^2);\c{C}(\c{X}^2))} 
=
    \sup_{c \neq 0} \frac{\norm{M c}_{\c{C}(\c{X}^2)}}{\norm{c}_{\c{C}(\c{X}^2)}}
\leq 
    n \sbr{1 + \norm{\m{K}^{-1}_{n,n}}_{L(\ell^\infty; \ell^1)} \norm{k}_{\c{C}(\c{X}^2)}}^2
.
\]
Note that, since this bound is independent of the particular realization of the $\ell$-dimensional random Fourier basis $\v{\phi}$ used to construct the approximate prior $\tilde{f}$, it is constant with respect to the expectation \eqref{eqn:covariance_error}. Finally, \textcite{sutherland15} have shown that there exists a constant $C_3$ such that
\[
\E_{\v{\phi}} \norm[1]{k^{(\tilde{f})} - k}_{\c{C}(\c{X}^2)} 
\leq 
\ell^{\nicefrac{-1}{2}} C_{3}
.
\]
Combining this inequality with the preceding ones gives the result.
\end{proof}

Together, Propositions \ref{prop:wasserstein_bound} and \ref{prop:covariance_error} show that error in the approximate prior $\tilde{f}$ controls the error in the resulting approximate posterior $\tilde{f} \given \v{y}$. 
These bounds are not tight, seeing as constants $C_{1}$ and $C_{2}$ both depend on and may grow with the number of observations $n$. 
Based on this observation, it is tempting to think that the error in $\tilde{f} \given \v{y}$ therefore increases in $n$. 
Empirically, however, the opposite trend is observed: the error in $\tilde{f} \given \v{y}$ actually diminishes as $n$ grows \cite{wilson20}.
To better understand this behavior, we now study the conditions under which a pathwise update may counteract the error introduced by an approximate prior.

\subsection{Contraction of approximate posteriors with noise-free observations}
\label{sec:error_contraction}
This section formalizes the following syllogism: (i) the true posterior $f \given \v{y}$ and the approximate posterior $\tilde{f} \given \v{y}$ have the same mean; (ii) as $n$ increases, both posteriors contract to their respective means; (iii) therefore, as $n$ increases, the error introduced by the approximate prior $\tilde{f}$ washes out.

To begin, let $\v{\phi} : \c{M} \to \mathbb{R}^{\ell}$ be an $\ell$-dimensional feature map on an ambient space $\c{M}$ consisting of linearly independent basis functions $\phi_{i}$. 
We will say that $\tilde{f}$ is a \emph{standard normal Bayesian linear model} if it admits the representation
\[
\tilde{f}(\.) &= \sum_{i=1}^\ell w_i \phi_i(\.) 
&
w_i &\~\c{N}(0,1).
\]
This description includes the Karhunen--Lo\`{e}ve and Fourier feature  approximations described in Section~\ref{sec:priors}. As before, let $\m{\Phi}_{n} = \v{\phi}(\m{X}_{n})$ be an $n \times \ell$ feature matrix and $\c{H}_k$ be the reproducing kernel Hilbert space associated with a kernel $k$. We say that a function $\phi_{i}$ \emph{lies locally} in $\c{H}_k$ for a compact $\c{X} \subseteq \c{M}$ if there exists a function $\psi_{j} \in \c{H}_k$ that agrees with $\phi_{i}$ on $\c{X}$, i.e. $\restr{\phi_{i}}{\c{X}} = \restr{\psi_j}{\c{X}}$.

When $\c{M}$ is a compact metric space, the eigenfunctions $\phi_{i}$ used to construct (truncated) Karhunen--Lo\`eve expansions belong to $\c{H}_{k}$ by construction.
More generally, assessing whether or not $\phi_{i}$ lies locally in $\c{H}_k$ is often straightforward for kernels with known reproducing kernel Hilbert spaces.
As a concrete example, the RKHS of a Mat\'{e}rn-$\nu$ kernel is the Sobolev space of order $\kappa = \nu + \nicefrac{d}{2}$. For integer values of $\kappa$, this is the space of square-integrable functions with $\kappa$ square-integrable weak derivatives. 
Trigonometric basis functions $\phi_{i}(\v{x}) = \cos(2 \pi \v{\omega}_{i}^\top \v{x} + \tau_{i})$ can readily be adapted to satisfy this requirement. Specifically, we may multiply them by a suitably chosen, infinitely-differentiable function that ensures they decay to zero outside of $\c{X}$, such that the resulting basis functions (and their derivatives) are square-integrable.

We are now ready to state and prove the primary claim. In the following, Proposition~\ref{prop:kernel_contraction_var} and Corollary~\ref{cor:kernel_contraction_cov} will demonstrate that $\tilde{f} \given \v{y}$ contracts at the same rate as $f \given \v{y}$. Subsequently, Corollary~\ref{cor:posterior_contraction_wasserstein2} will show that the error in $\tilde{f} \given \v{y}$ vanishes as $n \to \infty$ in any reasonable limit where the variance of the true posterior contracts to zero everywhere on $\c{X}$.

\begin{proposition}
\label{prop:kernel_contraction_var}
Suppose $\c{X} \subseteq \c{M}$ is compact and that each of the $\ell$ basis functions $\phi_{i}$ used to construct the standard normal Bayesian linear model $\tilde{f}$ lies locally in $\c{H}_k$. If the points $\m{X}_{n} \subset \c{X}$ used to condition the approximate posterior $\tilde{f} \given \v{y}$ are chosen such that $f\given\v{y}$ satisfies $\sup_{\v{x} \in \c{X}} k^{(f \given \v{y})}(\v{x},\v{x}) \leq \eps$, then it follows that\footnote{This result holds even when the weights are not assumed i.i.d., albeit with a slightly different constant.}
\[
\sup_{\v{x} \in \c{X}} \abs[1]{k^{(\tilde{f} \given \v{y})}(\v{x},\v{x})} \leq C_{4} \eps,
\]
where we have defined $
C_{4} 
= \ell \max_{i}
       \inf \left\{\norm[0]{\psi_i}_{\c{H}_k}^2: \psi_i|_{\c{X}} = \phi_i|_{\c{X}}, \forall \psi_i \in \c{H}_k \right\}
$.
\end{proposition}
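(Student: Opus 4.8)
The plan is to reduce the diagonal of the approximate posterior kernel to a sum of squared residuals and then control each residual by the true posterior variance using reproducing kernel Hilbert space geometry. First I would reuse the representation established in the proof of Proposition~\ref{prop:covariance_error}, namely $k^{(\tilde{f} \given \v{y})} = M k^{(\tilde{f})}$ with $M$ the bounded linear operator defined there, and expand the approximate prior kernel as $k^{(\tilde{f})}(\v{x}, \v{x}') = \sum_{i=1}^{\ell} \phi_i(\v{x})\phi_i(\v{x}')$. A direct computation shows that $M$ sends each rank-one summand $\phi_i(\v{x})\phi_i(\v{x}')$ to $h_i(\v{x})h_i(\v{x}')$, where $h_i(\v{x}) = \phi_i(\v{x}) - \v{\xi}(\v{x})^{\top}\phi_i(\m{X}_{n})$ is the pathwise residual of $\phi_i$. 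Restricting to the diagonal and summing over $i$ then yields the clean identity
\[
k^{(\tilde{f} \given \v{y})}(\v{x}, \v{x}) = \sum_{i=1}^{\ell} h_i(\v{x})^2 \ge 0,
\]
so the absolute value in the claim is immaterial and it suffices to bound each $h_i(\v{x})^2$.

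The crux is to recognize $h_i$ as an RKHS residual. Writing $V = \f{span}\{k(\., \v{x}_j)\}_{j=1}^{n} \subseteq \c{H}_{k}$ and $P_V$ for the orthogonal projection onto $V$, a short argument with the reproducing property shows that the map $\v{x} \mapsto \v{\xi}(\v{x})^{\top} g(\m{X}_{n})$ coincides with $P_V g$ for every $g \in \c{H}_{k}$, since $P_V g$ is the unique element of $V$ interpolating $g$ on $\m{X}_{n}$. Invoking the local-membership hypothesis to pick $\psi_i \in \c{H}_{k}$ with $\restr{\psi_i}{\c{X}} = \restr{\phi_i}{\c{X}}$, I may replace $\phi_i$ by $\psi_i$ throughout (as both $\v{x}$ and $\m{X}_{n}$ lie in $\c{X}$), obtaining $h_i = (I - P_V)\psi_i$ on $\c{X}$. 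The same identity applied to $k(\.,\v{x})$ gives $\norm{(I - P_V)k(\., \v{x})}_{\c{H}_{k}}^2 = k^{(f \given \v{y})}(\v{x}, \v{x})$, the true posterior variance.

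With these two facts in hand, I would close the estimate with a single application of Cauchy--Schwarz. Because $(I - P_V)\psi_i \perp V$, the reproducing property gives $h_i(\v{x}) = \innerprod{(I - P_V)\psi_i}{(I - P_V)k(\., \v{x})}_{\c{H}_{k}}$, whence
\[
h_i(\v{x})^2 \le \norm{(I - P_V)\psi_i}_{\c{H}_{k}}^2 \norm{(I - P_V)k(\., \v{x})}_{\c{H}_{k}}^2 \le \norm{\psi_i}_{\c{H}_{k}}^2\, k^{(f \given \v{y})}(\v{x}, \v{x}) \le \norm{\psi_i}_{\c{H}_{k}}^2\, \eps,
\]
using that projections are contractions together with the contraction hypothesis $\sup_{\v{x}} k^{(f \given \v{y})}(\v{x}, \v{x}) \le \eps$. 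Summing over $i$, taking the infimum over admissible $\psi_i$, and bounding $\sum_{i} \inf_{\psi_i}\norm{\psi_i}_{\c{H}_{k}}^2 \le \ell \max_{i} \inf_{\psi_i}\norm{\psi_i}_{\c{H}_{k}}^2 = C_{4}$ delivers the claim. I expect the main obstacle to be the identification $h_i = (I - P_V)\psi_i$ and the representer-norm identity for the posterior variance: these are the function-analytic steps that convert the algebraic pathwise residual into a quantity Cauchy--Schwarz can act on, and they are precisely where the local-membership assumption is genuinely used.
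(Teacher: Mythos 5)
Your proof is correct and takes essentially the same route as the paper's: your identity $k^{(\tilde{f} \given \v{y})}(\v{x},\v{x}) = \sum_{i=1}^{\ell} h_i(\v{x})^2$ is the paper's variance computation in explicit form, and your projection-plus-Cauchy--Schwarz step is precisely the paper's dual-norm argument, since $\norm[0]{(I - P_V)k(\., \v{x})}_{\c{H}_k}$ is exactly the power function $\c{P}_{\m{X}}(\v{x}) = \sqrt{k^{(f \given \v{y})}(\v{x},\v{x})}$ appearing there. The only cosmetic differences are that you obtain the residuals $h_i$ via the operator $M$ from Proposition~\ref{prop:covariance_error} rather than directly from the pathwise update, and you phrase the key estimate through the orthogonal projection $P_V$ where the paper invokes the dual representation of the RKHS norm---both of which use the local-membership assumption in the same way, to replace $\phi_i$ by $\psi_i$ before estimating.
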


\begin{proof} 
Recall from \eqref{eqn:gp_posterior_pathwise_decoupled} we can use the weight function $\v{\xi}(\.) = k(\m{X}_{n},\m{X}_{n})^{-1}k(\m{X}_{n}, \.)$ to express the approximate posterior as $(\tilde{f} \given \v{y})(\.) \,\smash{\eqd}\, \v{\phi}(\.)^\top \v{w} - \v{\xi}(\.)^\top(\v{y} - \m{\Phi}_{n} \v{w})$. 
Under this notation, it is clear that we may immediately upper bound the variance of the $\tilde{f} \given \v{y}$ as
\[
\begin{split}
\Var\del{(\tilde{f} \given \v{y})(\.)}
=
    \E\sbr{\del{\v{\phi}(\.)^\top - \v{\xi}(\.)^\top \m{\Phi}_{n}}\v{w}}^2
\leq
    \ell \max_{i} \del{\phi_i(\.) - \v{\xi}(\.)^\top \phi_i(\m{X}_{n})}^2,
\end{split}
\]
where, on the right, we have used the fact that $\E\norm{\v{w}}^2 = \ell$. By further denoting $\c{G} = \{g \in \c{H}_k: \norm{g}_{\c{H}_k} = 1\}$, we may now exploit the dual representation of the RKHS norm to write
\[
\begin{split}
\label{eqn:kernel_interp_estimate}
\abs{\phi_i(\v{x}_*) - \v{\xi}(\v{x}_{*})^\top \phi_i(\m{X}_{n})}
&\leq
    \norm{\phi_i}_{\c{H}_k} \sup_{g \in \c{G}}\abs{g(\v{x}_*) - \v{\xi}(\v{x}_{*})^\top g(\m{X}_{n})}
    \\
&=
    \norm{\phi_i}_{\c{H}_k} \norm{k(\.,\v{x}_*) - 
    \v{\xi}(\v{x}_{*})^\top \m{K}_{n,*}
    }_{\c{H}_k}
    \\
&=
    \norm{\phi_i}_{\c{H}_k} \underbracket[0.5pt]{\sqrt{k(\v{x}_*,\v{x}_*) - \m{K}_{*,n} \m{K}_{n,n}^{-1}\m{K}_{n,*}}}_{\c{P}_{\m{X}}(\v{x}_*)},
\end{split}
\]
where, because $\phi_i$ lies locally in $\c{H}_k$, we may replace it with any $\psi_i \in  \c{H}_k:\  \psi_i|_{\c{X}} = \phi_i|_{\c{X}}$.
Noting that $\c{P}_{\m{X}}(\.) = \sqrt{k^{(f \given \v{y})}(\.,\.)}$ and collecting terms gives the result.
\end{proof}

\begin{corollary}
\label{cor:kernel_contraction_cov}
With the same assumptions, as $\sup_{\v{x}\in\c{X}} k^{(f \given \v{y})}(\v{x},\v{x}) \-> 0$, it follows that 
\[
\sup_{\v{x}, \v{x}' \in \c{X}} \abs[1]{k^{(\tilde{f} \given \v{y})}(\v{x},\v{x}') - k^{(f \given \v{y})}(\v{x},\v{x}')} \-> 0.
\]
\end{corollary}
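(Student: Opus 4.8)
The plan is to control the off-diagonal entries of each posterior kernel by its diagonal entries, so that the variance bound already established in Proposition~\ref{prop:kernel_contraction_var} immediately upgrades to the desired covariance bound. The tool for this is the elementary Cauchy--Schwarz inequality for covariance functions: since both $k^{(f \given \v{y})}$ and $k^{(\tilde{f} \given \v{y})}$ are genuine covariance functions of Gaussian processes (the pathwise update preserves Gaussianity), each satisfies $\abs{k(\v{x}, \v{x}')} \leq \sqrt{k(\v{x},\v{x})}\,\sqrt{k(\v{x}',\v{x}')}$ for all $\v{x}, \v{x}' \in \c{X}$. Thus, whenever a posterior's variance is uniformly small, so is its entire covariance function in supremum norm.

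First I would apply this inequality to the true posterior. Taking the supremum over $\v{x}, \v{x}' \in \c{X}$ and using the hypothesis $\sup_{\v{x}} k^{(f \given \v{y})}(\v{x},\v{x}) \leq \eps$ gives $\sup_{\v{x}, \v{x}'} \abs{k^{(f \given \v{y})}(\v{x},\v{x}')} \leq \eps$. Next I would apply the same inequality to the approximate posterior and invoke Proposition~\ref{prop:kernel_contraction_var}, which bounds its diagonal by $C_{4}\eps$, to obtain $\sup_{\v{x}, \v{x}'} \abs{k^{(\tilde{f} \given \v{y})}(\v{x},\v{x}')} \leq C_{4}\eps$. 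A single application of the triangle inequality then yields $\sup_{\v{x}, \v{x}'} \abs{k^{(\tilde{f} \given \v{y})}(\v{x},\v{x}') - k^{(f \given \v{y})}(\v{x},\v{x}')} \leq (C_{4} + 1)\eps$, and letting the right-hand side vanish as $\sup_{\v{x}} k^{(f \given \v{y})}(\v{x},\v{x}) \to 0$ closes the argument.

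The only point requiring any care is the justification that Cauchy--Schwarz applies to $k^{(\tilde{f} \given \v{y})}$, which amounts to confirming that $\tilde{f} \given \v{y}$ is a Gaussian process with a well-defined, positive semi-definite covariance function; this is immediate, since Matheron's update \eqref{eqn:matheron_inf} expresses it as an affine transformation of the Gaussian prior $\tilde{f}$. Beyond this, the argument is entirely mechanical, introducing no new constants, regularity hypotheses, or limiting arguments beyond those already in force for Proposition~\ref{prop:kernel_contraction_var}. I therefore anticipate no genuine obstacle: the statement is a corollary in the strict sense, obtained simply by coupling the variance bound with the covariance inequality.
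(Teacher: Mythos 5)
Your proposal is correct and matches the paper's own argument: both proofs combine the triangle inequality with the Cauchy--Schwarz bound $\abs[0]{k(\v{x},\v{x}')} \leq \sqrt{k(\v{x},\v{x})}\sqrt{k(\v{x}',\v{x}')}$ to reduce the supremum over pairs to the supremum over the diagonal, then invoke Proposition~\ref{prop:kernel_contraction_var} for the approximate posterior and the hypothesis for the true one. The only differences are cosmetic---you apply Cauchy--Schwarz per kernel before the triangle inequality rather than after, make the constant $(C_4+1)\eps$ explicit, and spell out why $k^{(\tilde{f}\given\v{y})}$ is a valid covariance function, a point the paper leaves implicit.
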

\begin{proof}
Begin by applying the triangle inequality to the above and, subsequently, use the Cauchy-Schwartz inequality to bound $k(\v{x},\v{x}') \leq \sqrt{k(\v{x},\v{x})}\sqrt{k(\v{x}',\v{x}')}$, which gives
\[
\begin{split}
\sup_{\v{x}, \v{x}' \in \c{X}} \abs[1]{k^{(\tilde{f} \given \v{y})}(\v{x},\v{x}') - k^{(f \given \v{y})}(\v{x},\v{x}')}
&\leq
\sup_{\v{x}, \v{x}' \in \c{X}} \abs[1]{k^{(\tilde{f} \given \v{y})}(\v{x},\v{x}')}
+
\sup_{\v{x}, \v{x}' \in \c{X}} \abs[1]{k^{(f \given \v{y})}(\v{x},\v{x}')}
\\
&\leq
\sup_{\v{x} \in \c{X}} \abs[1]{k^{(\tilde{f} \given \v{y})}(\v{x},\v{x})}
+
\sup_{\v{x} \in \c{X}} \abs[1]{k^{(f \given \v{y})}(\v{x},\v{x})}.
\end{split}
\]
In the final expression, convergence of the former term is given by Proposition \ref{prop:kernel_contraction_var}, while the latter goes to zero by assumption.
\end{proof}

\begin{corollary}
\label{cor:posterior_contraction_wasserstein2}
With the same assumptions, as $\sup_{\v{x}\in\c{X}} k^{(f \given \v{y})}(\v{x},\v{x}) \-> 0$, it follows that
\[
W_{2,L^2(\c{X})}(f\given\v{y}, \tilde{f} \given \v{y}) \-> 0
.
\]
\end{corollary}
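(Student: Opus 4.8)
The plan is to exploit the fact, noted in the syllogism preceding Proposition~\ref{prop:kernel_contraction_var}, that $f \given \v{y}$ and $\tilde{f} \given \v{y}$ are Gaussian measures on $L^2(\c{X})$ that share a common mean function. First I would confirm this: taking expectations in \eqref{eqn:gp_posterior_pathwise_decoupled} and using that the standard normal Bayesian linear model $\tilde{f}$ is centered, both posteriors have mean $m(\.) = k(\., \m{X}_{n})\m{K}_{n,n}^{-1}\v{y}$. Once the mean contribution is eliminated, the entire Wasserstein distance is governed by the covariance operators, whose traces I will show vanish.

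The key step is to route through the Dirac mass $\delta_{m}$ concentrated at the shared mean. Since $W_{2,L^2(\c{X})}$ is a genuine metric on measures with finite second moment, the triangle inequality gives
\[
W_{2,L^2(\c{X})}(f \given \v{y}, \tilde{f} \given \v{y})
\leq
    W_{2,L^2(\c{X})}(f \given \v{y}, \delta_{m})
    + W_{2,L^2(\c{X})}(\delta_{m}, \tilde{f} \given \v{y}).
\]
For a Gaussian measure $\c{N}(m, C)$, the only coupling with $\delta_{m}$ is the product, so $W_{2,L^2(\c{X})}(\c{N}(m, C), \delta_{m})^2 = \E\norm{X - m}_{L^2(\c{X})}^2 = \f{tr}(C)$ with $X \sim \c{N}(m, C)$. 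By Mercer's theorem the trace of the covariance operator of a Gaussian process equals the integral of its pointwise variance, $\f{tr}(C) = \int_{\c{X}} k(\v{x}, \v{x}) \d\v{x}$, and finiteness follows from compactness of $\c{X}$ together with continuity of the kernels. Hence the bound reduces to
\[
W_{2,L^2(\c{X})}(f \given \v{y}, \tilde{f} \given \v{y})
\leq
    \del{\int_{\c{X}} k^{(f \given \v{y})}(\v{x}, \v{x}) \d\v{x}}^{\nicefrac{1}{2}}
    + \del{\int_{\c{X}} k^{(\tilde{f} \given \v{y})}(\v{x}, \v{x}) \d\v{x}}^{\nicefrac{1}{2}}.
\]

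Finally, I would bound the two integrals using the pointwise variance estimates already in hand. Writing $\eps = \sup_{\v{x} \in \c{X}} k^{(f \given \v{y})}(\v{x}, \v{x})$, the first integrand is at most $\eps$, while Proposition~\ref{prop:kernel_contraction_var} bounds the second integrand by $C_{4}\eps$; integrating over $\c{X}$ yields $\f{vol}(\c{X})\,\eps$ and $C_{4}\f{vol}(\c{X})\,\eps$, respectively, so that
\[
W_{2,L^2(\c{X})}(f \given \v{y}, \tilde{f} \given \v{y})
\leq
    \del[1]{1 + \sqrt{C_{4}}}\sqrt{\f{vol}(\c{X})\,\eps}
    \-> 0
\]
as $\sup_{\v{x}} k^{(f \given \v{y})}(\v{x}, \v{x}) \-> 0$, which is the claim. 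The main obstacle is not any single estimate but recognising the right decomposition: equality of means is what collapses the Gaussian (Gelbrich/Bures) distance to its trace terms, and the Dirac-mass triangle inequality then converts the problem into the two pointwise-variance bounds supplied by the preceding results. As an alternative to the triangle-inequality route, one could instead invoke the closed-form 2-Wasserstein distance between Gaussian measures \cite{mallasto2017learning} and observe directly that the cross term and the individual trace terms are all controlled by the same vanishing quantities.
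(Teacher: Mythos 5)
Your proposal is correct and takes essentially the same route as the paper: your triangle inequality through the Dirac mass $\delta_{m}$ at the shared mean is, by translation invariance, identical to the paper's step of centering both posteriors and routing through the almost surely zero process, after which both arguments reduce the two remaining terms to $\del[1]{\int_{\c{X}} k^{(f \given \v{y})}(\v{x},\v{x}) \d \v{x}}^{\nicefrac{1}{2}}$ and $\del[1]{\int_{\c{X}} k^{(\tilde{f} \given \v{y})}(\v{x},\v{x}) \d \v{x}}^{\nicefrac{1}{2}}$ and conclude via Proposition~\ref{prop:kernel_contraction_var} together with compactness of $\c{X}$. Your explicit rate $\del[1]{1 + \sqrt{C_{4}}}\sqrt{\f{vol}(\c{X})\,\eps}$ is a minor quantitative sharpening of the paper's qualitative statement (and your appeal to Mercer's theorem is heavier than needed, since Tonelli's theorem alone gives the trace identity), not a different method.
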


\begin{proof}
Since $L^2(\c{X})$ is a normed space and $\E(f \given \v{y}) = \E(\tilde{f} \given \v{y})$, we have that
\[
W_{2,L^2(\c{X})}(f\given\v{y}, \tilde{f} \given \v{y}) 
= 
W_{2,L^2(\c{X})}\del[3]{
    \underbracket[0.5pt]{f(\.) - \v{\xi}(\.)^\top f(\m{X}_{n})}_{\strut \del[0]{f \given \v{y}}_0}
,\,
    \underbracket[0.5pt]{\v{\phi}(\.)^\top \v{w} - \v{\xi}(\.)^\top \m{\Phi}_{n}\v{w}}_{\strut \del[0]{\tilde{f} \given \v{y}}_0}
},
\]
where $\del[0]{f \given \v{y}}_0$ and $\del[0]{\tilde{f} \given \v{y}}_0$ denote centered processes. Now, let $\mathbbold{0}$ be an almost surely zero stochastic process over $\c{X}$. Then, by the triangle inequality,
\[
W_{2,L^2(\c{X})}\del[2]{\del[0]{f \given \v{y}}_0, \del[0]{\tilde{f} \given \v{y}}_0}
\leq
    W_{2,L^2(\c{X})}\del[2]{\del[0]{f \given \v{y}}_0, \mathbbold{0}}
    +
    W_{2,L^2(\c{X})}\del[2]{\del[0]{\tilde{f} \given \v{y}}_0, \mathbbold{0}}.
\]
Expanding the definition of Wasserstein distances $W_{2,L^2(\c{X})}$ before using Tonelli's theorem to change the order of integration gives
\[
W_{2,L^2(\c{X})}\del[2]{\del[0]{f \given \v{y}}_0, \del[0]{\tilde{f} \given \v{y}}_0}
&\leq
\del[2]{\E\norm[1]{
    \del[0]{f \given \v{y}}_0 - \mathbbold{0}}_{L^2(\c{X})}^2}^{\nicefrac{1}{2}}
+
\del[2]{\E\norm[1]{
    \del[0]{\tilde{f} \given \v{y}}_0 - \mathbbold{0}}_{L^2(\c{X})}^2}^{\nicefrac{1}{2}}
    \nonumber
\\
&=
\del[2]{\int_{\c{X}} k^{(f \given \v{y})}(\v{x},\v{x}) \d \v{x}}^{\nicefrac{1}{2}}
+
\del[2]{\int_{\c{X}} k^{(\tilde{f} \given \v{y})}(\v{x},\v{x}) \d \v{x}}^{\nicefrac{1}{2}},
\]
where both terms in the final expression converge to zero by compactness of $\c{X}$ together with Proposition \ref{prop:kernel_contraction_var}.
\end{proof}
Together, these claims demonstrate that the decoupled approximate posterior $\tilde{f} \given \v{y}$, formed by using the canonical basis $k(\., \m{X}_{n})$ to update a well-specified approximate prior $\tilde{f}$, \emph{inherits} the contractive properties of the true posterior $f \given \v{y}$. 

Per the beginning of this section, approximate priors $\tilde{f}$ defined as standard normal Bayesian linear models with basis functions that lie locally in $\c{H}_k$ are well-specified. The following counterexample helps clarify what can happen when $\tilde{f}$ is misspecified. Consider an approximate prior $\tilde{f} \sim \c{GP}(0, \delta)$ equipped with the Kronecker delta kernel $\delta$ such that 
$\Cov\del[1]{\tilde{f}(\v{x}_i), \tilde{f}(\v{x}_j)}= 1$
if $\v{x}_{i} = \v{x}_{j}$ and $0$ otherwise. Given a finite set of test locations $\m{X}_{*} \subset \c{X} \setminus \m{X}_{n}$, let $\m{\Xi} = \v{\xi}(\m{X}_{*})^\top$. Applying the pathwise update \eqref{eqn:pathwise_update_canonical} to $\tilde{f}$, the posterior covariance is then
\[
\label{eqn:posterior_contraction_counterexample}
\Cov\del[1]{\v{\tilde{f}}_{*} \given \v{y}}
= 
    \Cov\del[1]{\v{\tilde{f}}_{*}}
    + \m{\Xi}\Cov\del[1]{\v{\tilde{f}}_{n}}\m{\Xi}^\top
    - 2 \Cov\del[1]{\v{\tilde{f}}_{*}, \v{\tilde{f}}_{n}}\m{\Xi}^\top
=
    \m{I}
    + \m{K}_{*,n}^{\vphantom{-2}}\m{K}_{n,n}^{-2}\m{K}_{n, *}^{\vphantom{-2}}.
\]
Since the second of the two terms on the right is guaranteed non-negative, the variance of the resulting posterior is bounded from below by $1$.
For this choice of $\tilde{f}$, then, the approximation error inherent to $\tilde{f} \given \v{y}$ does not diminish as $n$ increases.\footnote{Contraction of the true posterior is well-studied and has strong ties to the literature on kernel methods. \textcite{kanagawa2018gaussian} reviews these connections in greater detail: there, Theorem 5.4 shows how the \emph{power function} $\c{P}_{\m{X}}$ can be bounded in terms of the fill distance $h(\m{X}_{n}) = \sup_{\v{x}_{*} \in \c{X}} \infLowerSubscript_{\v{x} \in \m{X}_{n}} \norm{\v{x}_{*} - \v{x}}$.}

\subsection{Sparse approximation errors}
\label{sec:error_sparse_noisy}
We now examine the error introduced by using a sparse pathwise update \eqref{eqn:pathwise_update_sparse} to construct an approximate posterior. As notation, we write $f \given \v{u}$ and $\tilde{f} \given \v{u}$ for the approximate posteriors formed by applying the sparse update to the true prior $f$ and to the approximate prior $\tilde{f}$, respectively. Results discussed here mirror those presented by \textcite{wilson20}. Appealing to the triangle inequality, we have 
\[
\begin{split}
\label{eqn:error_sparse_approximation}
W_{2,L^2(\c{X})} \del[1]{\tilde{f} \given \v{u}, f \given \v{y}} 
&\leq
    \underbracket[0.5pt]{
        W_{2,L^2(\c{X})}\del[1]{\tilde{f} \given \v{u}, f \given \v{u}}
    }_{\hspace*{13ex}\mathclap{\t{error in approximate prior}}\hspace*{13ex}} 
    + 
    \underbracket[0.5pt]{
        W_{2,L^2(\c{X})}\del[1]{f \given \v{u}, f \given \v{y}}
    }_{\hspace*{12.5ex}\mathclap{\t{error in sparse update}}\hspace*{12.5ex}}
\\[-0.625ex]
\mathbb{E}_{\v{\phi}}
\norm[1]{k^{(\tilde{f} \given \v{u})} - k^{(f \given \v{y})}}_{\c{C}(\c{X}^2)} 
&\leq
    \overbracket[0.5pt]{
        \mathbb{E}_{\v{\phi}}\norm[1]{k^{(\tilde{f} \given \v{u})} - k^{(f \given \v{u})}}_{\c{C}(\c{X}^2)}
    }_{\smash{\hspace*{26ex}}}
    + 
    \overbracket[0.5pt]{
        \norm[1]{k^{(f \given \v{u})} - k^{(f \given \v{y})}}_{\c{C}(\c{X}^2)} \vphantom{\mathbb{E}_{\v{\phi}}\norm[1]{k^{(\tilde{f} \given \v{u})} - k^{(f \given \v{u})}}_{\c{C}(\c{X}^2)}}
    }_{\smash{\hspace*{25ex}}}
.
\end{split}
\]
From here, any of the previously presented propositions enable us to control the total error. For the first terms on the right, the same arguments as before lead to the same results; however, the constants involved will change, since the sparse update now assumes the role of the canonical one. The latter terms do not involve the approximate prior and are therefore beyond the scope of our present analysis. Note that similar statements hold for the Gaussian pathwise update \eqref{eqn:pathwise_update_gaussian}.

As a final remark, note that we may reduce the total error \eqref{eqn:error_sparse_approximation} by incorporating additional basis functions $k(\., \m{X})$ into the sparse update. Conceptually, the act of \emph{augmenting} a sparse update amounts to replacing $\v{u} \sim q(\v{u})$ with $\v{u}' \sim q(\v{u}') = p(\v{f} \given \v{u})q(\v{u})$, where $\v{f}$ are process values at centers $\m{X}$ \cite{rasmussen05, quinonero2007approximation}. By construction, $q(\v{u})$ and $q(\v{u}')$ induce the same posterior on $f$. However, because the augmented update utilizes additional basis functions, the error in the induced distribution of $\v{\tilde{f}}_{*}$ diminishes. This result follows from the same line of reasoning as before: since $\E\del[1]{\v{f}_{*} \given \v{u}'} = \E\del[1]{\v{\tilde{f}}_{*} \given \v{u}'}$, $f \given \v{u}'$ and $\tilde{f} \given \v{u}'$ contract to the same function as $\vert \v{u}' \vert \to \infty$. Hence, the approximate prior washes out and the total error decreases.

\section{Applications}
\label{sec:applications}

This section examines the practical consequences of pathwise conditioning in terms of a curated set of representative tasks. Throughout, we focus on how pathwise methods for efficiently generating function draws from GP posteriors enable us to overcome common obstacles and open doors for new research. We provide a general framework for pathwise conditioning of Gaussian processes based on GPflow \cite{matthews2017gpflow}.\footnote{Code is available online at \url{https://github.com/j-wilson/GPflowSampling}.}

\subsection{Optimizing black-box functions}
\label{sec:applications_globalOptimization}
\begin{figure*}
\centering
\includegraphics[width=\textwidth]{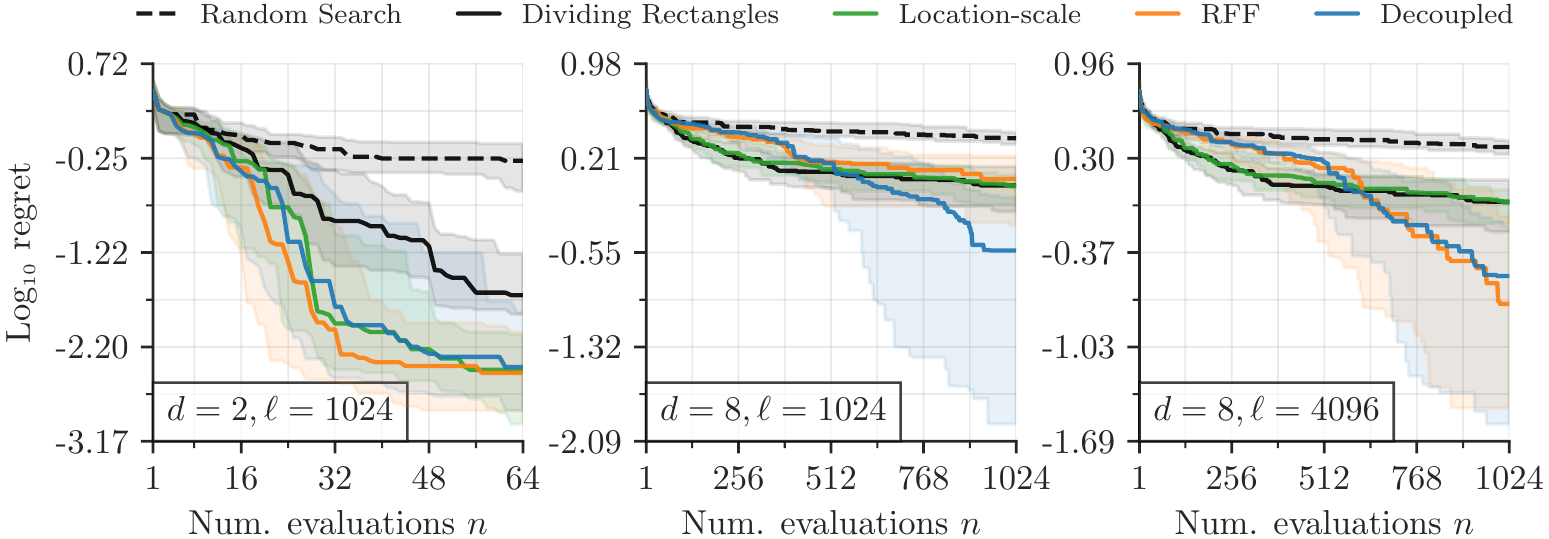}
\caption{Median performances and interquartile ranges of Thompson sampling methods and popular baselines when optimizing function draws from known GP priors on $d = \dim(\c{X})$ dimensional domains. Location-scale Thompson sampling performs well in low-dimensional settings (left), but struggles as $d$ increase due to its inability to efficiently utilize gradient information. RFF posteriors enable us to generate function draws, but demand many more basis functions $b = \ell + n$ than data points $n$ (middle vs. right). Decoupled approaches using canonical basis functions $k(\., \v{x})$ to update RFF priors $\tilde{f}$ avoids these pitfalls and consistently match or outperform competing strategies.}
\label{fig:thompson_sampling_results}
\end{figure*}

Global optimization revolves around the challenge of efficiently identifying a global minimizer
\[
\label{eqn:global_minimizer}
    \v{x}_{\t{min}} &\in \m{X}_{\t{min}} &\m{X}_{\t{min}} = \argmin_{\v{x} \in \c{X}} f(\v{x})
\]
of a black-box function $f: \c{X} \to \mathbb{R}$. Since $f$ is a black box, our understanding of its behavior is limited to a set of observations $\v{y}$ at locations $\m{X}_{n}$. Gaussian processes are a natural and widely used way of representing possible functions $f \given \v{y}$ \cite{movckus1975bayesian,srinivas2009gaussian,frazier2018tutorial}. In these cases, we reason about global minimizers \eqref{eqn:global_minimizer} in terms of a belief over the random set
\[
\label{eqn:global_minimizer_gp}
\m{X}_{\t{min}}^{(f \given \v{y})} = \argmin_{\v{x} \in \c{X}}\,(f \given \v{y})(\v{x}).
\]
Approaches to these problems are often characterized as striking a balance between two competing agendas: the need to learn about the function's global behavior by \emph{exploring} the domain $\c{X}$ and the need to obtain (potentially local) minimizers by \emph{exploiting} what is already known. 

Thompson sampling is a classic decision-making strategy that balances the tradeoff between exploration and exploitation by sampling actions $\v{x} \in \c{X}$ in proportion to the probability that $\v{x} \in \m{X}_{\t{min}}^{(f \given \v{y})}$ \cite{thompson33}. At first glance, this task may seem daunting, since $\m{X}_{\t{min}}^{(f \given \v{y})}$ is random. For a given draw of $f \given \v{y}$, however, $\m{X}_{\t{min}}^{(f \given \v{y})}$ is deterministic. Accordingly, we may Thompson sample an action by generating a function $f \given \v{y}$ and, subsequently, finding a pathwise global minimizer.

Thompson sampling's relative simplicity makes it a natural test bed for evaluating different sampling strategies, while its real-world performance \cite{chapelle11} assures its ongoing relevance in applied settings. A key strength of these methods is that they support embarassingly-parallel batch selection \cite{hernandez2017parallel,kandasamy2018parallelised}. While many GP-based search strategies allow us to choose $\kappa > 1$ queries at a time \cite{snoek2012practical, wilson2018maximizing}, their compute costs tend to scale aggressively in $\kappa$. Especially when evaluations can be carried out in parallel, then, Thompson sampling provides an affordable alternative to comparable approaches.

We considered three different variants of Thompson sampling, corresponding with different approaches to sampling from GP posteriors. The first approach samples random vectors $\v{f}_{*} \given \v{y}$ using location-scale transforms \eqref{eqn:location_scale_gp}; the second approximates posteriors with Bayesian linear models; and, the third updates function draws from $\ell$-dimensional approximate priors $\tilde{f} =\v{\phi}(\.)^\top\v{w}$ using canonical basis functions centered at the $n$ training locations.\footnote{Equation~\eqref{eqn:approximate_posteriors_comparison} highlights the difference between the second and third approaches.} For fair comparison, we allocate $b = \ell + n$ random Fourier basis functions to Bayesian linear models employed by the second approach.

At each round of Thompson sampling, we began by sampling process values $f_i \given \v{y}$ independently on a randomly generated discretization of $\c{X} = [0, 1]^{d}$. Next, we constructed a candidate set $\m{X}_{*}$ using the locations that produces the smallest realizations of $f_{i} \given \v{y}$. Under a location-scale approach, we then jointly sampled process values at $\vert \m{X}_{*} \vert = 2048$ candidates. For both of the alternatives, we instead used $\vert \m{X}_{*} \vert = 32$ candidates to initialize multi-start gradient descent. In all three cases, queries were chosen as minimizers of the resulting vector $\v{f}_{*} \given \v{y}$. Batches of queries were obtained using $\kappa$ independent runs of this algorithm.

To eliminate confounding variables, we experimented with black-box functions drawn from a known Mat\'{e}rn-$\nicefrac{5}{2}$ prior with an isotropic length scale $l = \sqrt{\nicefrac{d}{100}}$ and Gaussian observations $y \sim \c{N}\del{f(\v{x}), 10^{-3}}$. We set $\kappa = d$, but this choice was not found to significantly influence our results. Below, we focus on comparing each Thompson sampling variant's behavior for different amounts of design variables $d$ and basis functions $\ell$.

Figure \ref{fig:thompson_sampling_results} reports key findings based on 32 independent trials; for extended results, see \textcite{wilson20}. First, location-scale methods' inability to use gradient information to efficiently find pathwise minimizers causes its performance to wane as $d$ increases. In contrast, both of the alternative variants of Thompson sampling rely on pathwise-differentiable function draws and, therefore, scale more gracefully in $d$. Second, RFF-based Bayesian linear models struggle to represent posteriors due to variance starvation (Section~\ref{sec:priors_discussion}). As the number of observations $n$ increases relative to the number of basis functions $b = \ell + n$, the function draws they produce come to inadequately characterize the true posterior, causing Thompson sampling to falter. Decoupled approaches to updating $\tilde{f}$ avoid this issue by, e.g., associating the data with the $n$-dimensional canonical basis $k(\., \m{X}_{n})$.

\subsection{Generating boundary-constrained sample paths}
\label{sec:applications_boundary}

\begin{figure*}
\center
\includegraphics[width=\textwidth]{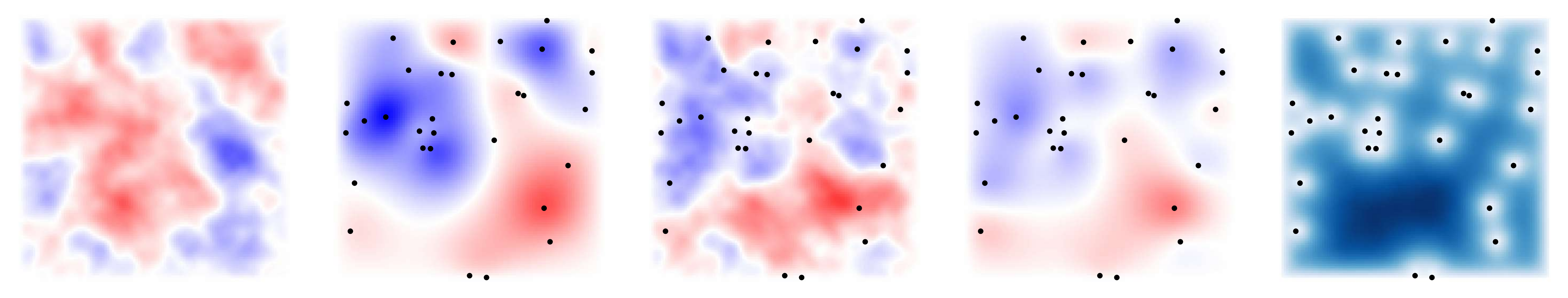}
\includegraphics[width=\textwidth]{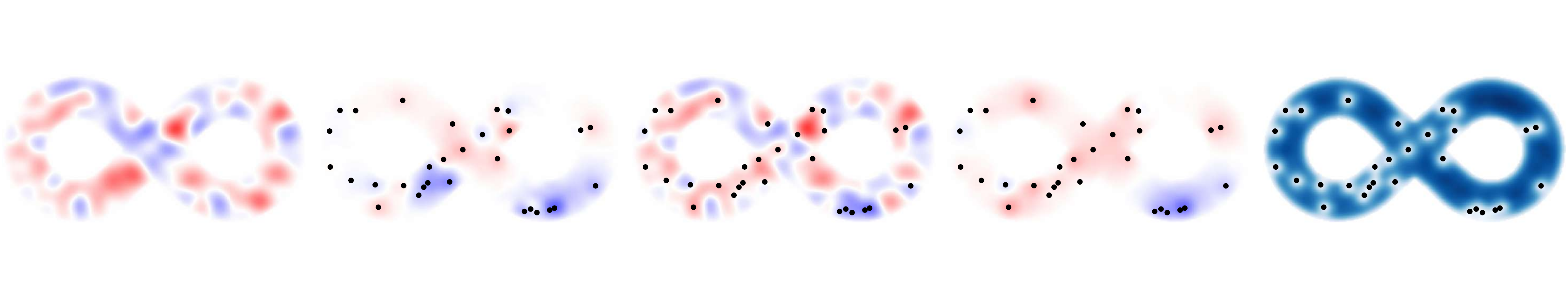}
\caption{Pathwise conditioning of samples from Mat\'{e}rn priors subject to observations $\v{y}$ (black dots) and Dirichlet boundary conditions $f\big|_{\partial \c{X}} = 0$. From left to right, the first three columns show a draw from the prior, a pathwise update, and the corresponding realization of the posterior. The final two columns communicate the empirical mean and standard deviation of the posterior, respectively. \emph{Top:} Illustration of a rectangular domain for which Laplacian eigenpairs are calculated analytically. \emph{Bottom:} A non-trivial domain for which the eigenpairs are approximated numerically.
}
\label{fig:boundary_constrained_samples}
\end{figure*}

This section illustrates how techniques introduced in the preceding sections can be used to efficiently sample Gaussian process posteriors subject to boundary conditions \cite{solin2019know}. \textcite{whittle63} showed that a Mat\'{e}rn GP $f$ defined over $\R^d$ satisfies the stochastic partial differential equation
\[ \label{eqn:matern_spde}
\del{\frac{2 \nu}{\kappa^2} - \Delta}
    ^{\frac{\nu}{2} + \frac{d}{4}}
    f
= \c{W},
\]
where $\c{W}$ is a (rescaled) white noise process, and $\Delta$ is the Laplacian.
Following \textcite{solin2019know} and \textcite{rue2005gaussian}, we restrict \eqref{eqn:matern_spde} onto a (well-behaved) compact domain $\c{X} \subset \R^d$ and impose Dirichlet boundary conditions $\restr{f}{\partial \c{X}} = 0$ to define a boundary-constrained Mat\'{e}rn Gaussian process over $\c{X}$. \textcite{solin2019know} demonstrate that such a prior admits the Karhunen--Lo\`{e}ve expansion
\[
f(\.)
&= \sum_{i=1}^\infty w_i \phi_i(\.)
&
& w_i \~\c{N}\del[3]{0, \frac{\sigma^2}{C_{\nu}} \del{\frac{2 \nu}{\kappa^2} + \lambda_i}^{-\nu-\frac{d}{2}}
},
\]
where $\phi_i$ are eigenfunctions of the \emph{boundary-constrained} Laplacian.
We truncate this expansion to obtain the $\ell$-dimensional Bayesian linear model $\tilde{f}$, which we use together with a pathwise update to construct the posterior.

Figure \ref{fig:boundary_constrained_samples} visualizes function draws from boundary-constrained priors and posterior for two choices of boundaries on $\R^2$, a rectangle and the symbol for infinity. Note that eigenfunctions for rectangular regions of Euclidean domains are available analytically, while those of the infinity symbol are obtained numerically by solving a Helmholtz equation. Examining this figure, we see that the sample paths respect the Dirichlet boundary condition $\restr{f}{\partial \c{X}} = 0$. Karhunen--Lo\`{e}ve expansions enable boundary-constrained GPs, an important class of non-stationary priors, to be used within the pathwise conditioning framework.

\subsection{Simulating dynamical systems}
\label{sec:applications_dynamicalSystems}
\begin{figure*}
\center
\includegraphics[width=\textwidth]{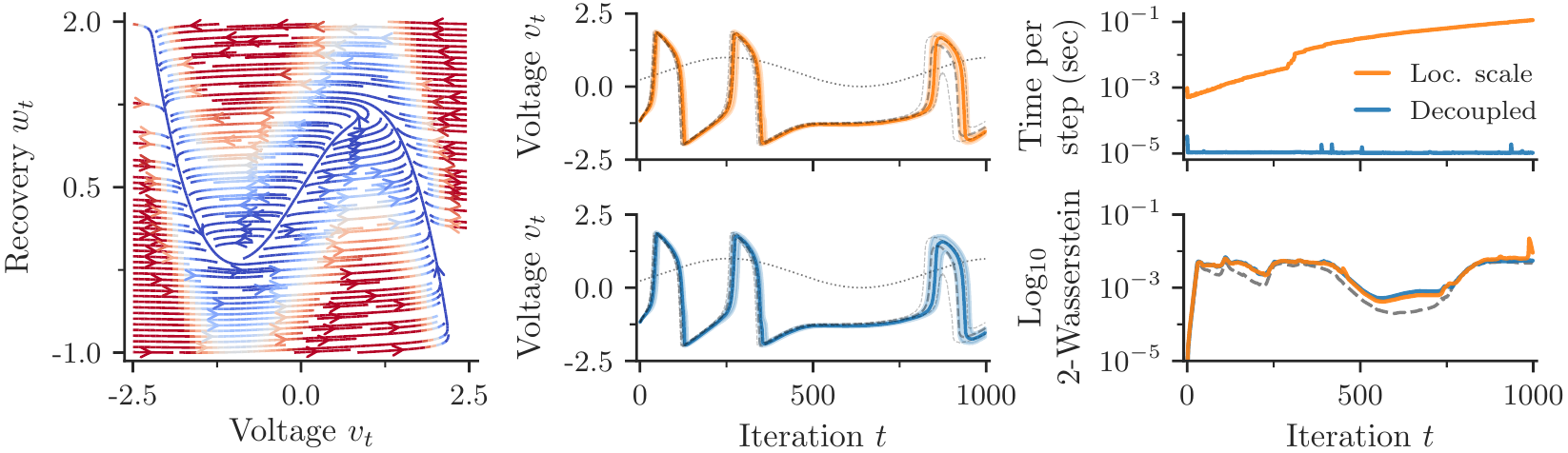}
\caption{Model-based simulations of a stochastic FitzHugh--Nagumo neuron. \emph{Left:} Phase portrait of the true drift function subject to a fixed current $a = 0.5$. 
\emph{Middle:} Empirical medians and interquartile ranges of simulated voltage traces driven by a sinusoidal current (dotted black); ground truth quartiles are shown in dashed gray. 
Trajectories generated via location-scale transforms are summarized on the top in orange, while those produced by decoupled drift functions are portrayed on the bottom in blue.
\emph{Top right:} Comparison of simulation runtimes. \emph{Bottom right:} Sinkhorn
estimates \cite{cuturi2013sinkhorn} to 2-Wasserstein distances between model-based and ground truth state distributions at each step $t$. The noise floor (dashed gray) was found using additional ground truth simulations.}
\label{fig:fitzhugh_nagumo}
\end{figure*}

Gaussian process posteriors are commonly used to simulate complex, real-world phenomena in cases where we are unable to actively collect additional data. These phenomena include dynamical systems that describe how physical states evolve over time. 

We focus on cases where a Gaussian process prior is placed on the \emph{drift} $f: \c{X} \times \c{A} \to \c{X}$ of a time-invariant system, which maps from a state vector $\v{x}_{t} \in \c{X}$ and a control input $\v{a}_{t} \in \c{A}$ to a tangent vector $\v{f}_{t} \in \c{X}$.
Using an Euler--Maruyama scheme to discretize the dynamical system's equations of motion, we obtain the stochastic difference equation (SDE)
\[
\label{eqn:eulerMaruyama}
\v{x}_{t + 1} - \v{x}_{t} 
= \tau f(\v{x}_{t}, \v{a}_{t}) + \sqrt{\tau} \v{\varepsilon}_{t} 
&= \v{y}_{t}
&
\v{\varepsilon}_{t} &\sim \c{N}(\v{0}, \m{\Sigma}_{\v{\varepsilon}}),
\]
where $\tau$ is the chosen step size and $\v{\varepsilon}$ denotes process diffusion.
Together with control inputs $\m{A}_{T} = (\v{a}_{1}, \ldots, \v{a}_{T})$ and diffusion variables $\m{E}_{T} = (\v{\varepsilon}_{1}, \ldots, \v{\varepsilon}_{T})$, each draw of $f$ fully characterizes how an initial state $\v{x}_{1} \sim p(\v{x}_{1})$ evolves over a series of $T$ successive steps.

Since $\v{x}_{t + 1}$ depends on $\v{x}_{t}$, strategies for jointly sampling $\m{X}_{T + 1} = (\v{x}_{1}, \ldots, \v{x}_{T + 1})$ are typically iterative. Under a distributional approach, we generate $\v{x}_{t+1}$ by sampling from the conditional distribution $p(\v{y}_{t} \given \c{D}_{t - 1})$, where $\c{D}_{t - 1}$ denotes the union of the real data $(\v{x}_{i}, \v{y}_{i})_{i=1}^{n}$ and the current trajectory $(\v{x}_{j}, \v{y}_{j})_{j=1}^{t - 1}$. As mentioned in Section~\ref{sec:priors_exact}, we may use low-rank matrix updates to efficiently obtain $p(\v{y}_{t} \given \c{D}_{t - 1})$ from $p(\v{y}_{t} \given \c{D}_{t-2})$ in $\c{O}(t^{2})$ time. Nevertheless, the resulting algorithm suffers from $\c{O}(T^3)$ time complexity. In contrast, approaches based on updating of (approximate) prior function draws scale linearly in $T$.

Many of the same issues were explored by \textcite{ialongo2019overcoming}, who also proposed a linear-time generative strategy for GP-based trajectories. In the language of the present work, this alternative represents the SDE \eqref{eqn:eulerMaruyama} by (i) formulating the unknown drift function as the conditional expectation $\E(f \given \v{u}) = k(\., \m{Z})\m{K}_{m,m}^{-1}\v{u}$ of a sparse Gaussian process $f$ with inducing variables $\v{u} \sim q(\v{u})$ and (ii) defining process diffusion as the sum of the remaining terms $\v{\varepsilon}_{t} \sim \c{N}\del{\v{0}, k^{(f \given \v{u})}(\v{x}_{t}, \v{x}_{t}) + \m{\Sigma}_{\v{\varepsilon}}}$. Similar to the pathwise methods put forth here, this approach avoids inter-state dependencies while unrolling by exploiting the fact each draw of $\v{u}$ realizes an entire drift function.

To better illustrate the practical implications of pathwise approaches to GP-based simulation, we trained a Gaussian process to represent a stochastic variant of the classic FitzHugh--Naguomo model neuron \cite{fitzhugh1961impulses, nagumo1962active}. This model describes a biological neuron in terms of its membrane potential $v_{t}$ and a recovery variable $w_{t}$ that summarizes the state of its ion channels. Written in the form \eqref{eqn:eulerMaruyama}, we have
\[
\label{eqn:fitzhugh_nagumo}
\v{x}_{t + 1} - \v{x}_{t}
=
    \left[\begin{array}{c}
    v_{t + 1} - v_{t} 
    \\ 
    w_{t + 1} - w_{t}
    \end{array}\right]
= 
    \tau \left[\begin{array}{c}
        v_t - \frac{v_t^3}{3} - w_t + a_{t}\\
        \frac{1}{\gamma}(v_t - \beta w_t + \alpha)
    \end{array} \right]
    +
    \sqrt{\tau} \v{\varepsilon}_{t},
\]
where we have chosen $\tau = 0.25\t{ms}$, $\alpha =  0.75$, $\beta = 0.75$, $\gamma = 20$, and $\m{\Sigma}_{\v{\varepsilon}} = 10^{-4}\m{I}$. A two-dimensional phase portrait of this system's drift function given a current injection $a = 0.5$ is shown on the left in Figure~\ref{fig:fitzhugh_nagumo}.

Training data was generated by evaluating \eqref{eqn:fitzhugh_nagumo} for $n = 256$ state-action pairs $(\v{x}_{i}, \v{a}_{i})$, chosen uniformly at random from $\c{X} = [-2.5, 2.5] \times [-1, 2]$ and $\c{A} = [0, 1]$. Changes in each of the state variables were modeled by independent, Mat\'{e}rn-$\nicefrac{5}{2}$ GPs using $m = 32$ inducing variables. Both sparse GPs were trained by minimizing Kullback--Leibler divergences.

At test time, state trajectories were unrolled from steady state for $T = 1000$ steps under the influence of a current injection; see middle column of Figure~\ref{fig:fitzhugh_nagumo}. Drift values $\v{f}_{t}$ were realized using either the $\c{O}(T^{3})$ location-scale technique or the $\c{O}(T)$ pathwise approach. As seen on the right in Figure~\ref{fig:fitzhugh_nagumo}, both strategies are capable of accurately characterizing possible state trajectories. At the same time, their difference in cost is striking: the location-scale method spent 10 hours generating 1000 state trajectories (run in parallel), while the pathwise one spent 20 seconds.

\subsection{Efficiently solving reinforcement learning problems}
\label{sec:applications_rl}

Model-based approaches to autonomously controlling robotic systems often rely on Gaussian processes to infer system dynamics from a limited number of observations \cite{Rasmussen2004, Deisenroth2015, kamthe2017data}. Of these data-efficient methods, we focus on PILCO \cite{Deisenroth2011c}, which is an effective policy search method that uses Gaussian process dynamics models.\footnote{PILCO implementation available separately at \url{https://github.com/j-wilson/GPflowPILCO}.}

Similar to the previous section, we begin by placing a GP prior on the drift function $f: \c{X} \times \c{A} \to \c{X}$ of a black-box dynamical system, now assumed to be deterministic. Rather than being given a sequence of actions $\m{A}_{T}$ and asked to simulate trajectories $\m{X}_{T+1}$, our new goal will be to find parameters $\v{\theta} \in \Theta$ of a deterministic, feedback policy $\pi: \Theta \times \c{X} \to \c{A}$ that maximize the expected cumulative reward
\[
\label{eqn:expected_cumulative_reward}
R(\v{\theta})
= \E_{f,\,\v{x}_{1}}
\sbr{
    \sum_{t=1}^{T}
    r
    \del[2]{\,
        \underbracket[0.5pt]{
            \v{x}_{t} + 
            f\del{\v{x}_{t}, \pi_{\v{\theta}}\del{\v{x}_{t}}}
        }_{\v{x}_{t + 1}}
    \,}}
& = 
    \sum_{t=1}^{T} 
    \E_{\v{x}_{t+1}}\sbr[1]{r(\v{x}_{t+1})}.
\]
For suitably chosen reward functions $r: \c{X} \to \R$, we may optimize $\v{\theta}$ by differentiating \eqref{eqn:expected_cumulative_reward}. The challenge, however, is to evaluate this expectation in the first place. 

The original PILCO algorithm tackles this problem by using moment matching to approximately propagate uncertainty through time. Given a random state $\v{x}_{t} \sim \c{N}(\v{\mu}_{t}, \m{\Sigma}_{t, t})$, we begin by supposing that $\v{x}_{t}$ and $\v{a}_{t} = \pi_{\v{\theta}}(\v{x}_{t})$ are jointly normal. Next, we obtain the corresponding optimal Gaussian approximation to $p(\v{x}_{t}, \v{a}_{t})$ by analytically computing the required moments $\E(\v{a}_{t})$, $\Cov(\v{a}_{t}, \v{a}_{t})$, and $\Cov(\v{a}_{t}, \v{x}_{t})$. This step can also be seen as finding the affine approximation to $\pi_{\v{\theta}}$ that best propagates $\c{N}(\v{\mu}_{t}, \m{\Sigma}_{t, t})$. We now use moment matching to propagate this approximate joint distribution through $f$ in order to construct a second Gaussian approximation, this time to $p(\v{x}_{t}, \v{f}_{t})$.\footnote{By appealing to the affine approximation view of moment matching, we obtain the approximate cross-covariance $\Cov(\v{x}_{t}, \v{f}_{t}) \approx \Cov(\v{x}_{t}, \v{s}_{t})\Cov(\v{s}_{t}, \v{s}_{t})^{-1}\Cov(\v{s}_{t}, \v{f}_{t})$ where $\v{s}_{t} = \v{x}_{t} \oplus \v{a}_{t}$.} By interpreting $\v{x}_{t + 1} = \v{x}_{t} + \v{f}_{t}$ as the sum of jointly Gaussian random variables, we compute the corresponding right-hand side term of \eqref{eqn:expected_cumulative_reward} and, finally, proceed to the next time step.
Overall, this strategy works well when $f$ and $\pi_{\v{\theta}}$ are sufficiently regular and $\c{N}(\v{\mu}_{t}, \m{\Sigma}_{t,t})$ is sufficiently peaked that maps from $\v{x}_{t}$ to $\v{f}_{t}$ are nearly affine in a ball around $\v{\mu}_{t}$ whose radius is dictated by $\m{\Sigma}_{t,t}$.

\begin{figure*}
\center
\includegraphics[width=\textwidth]{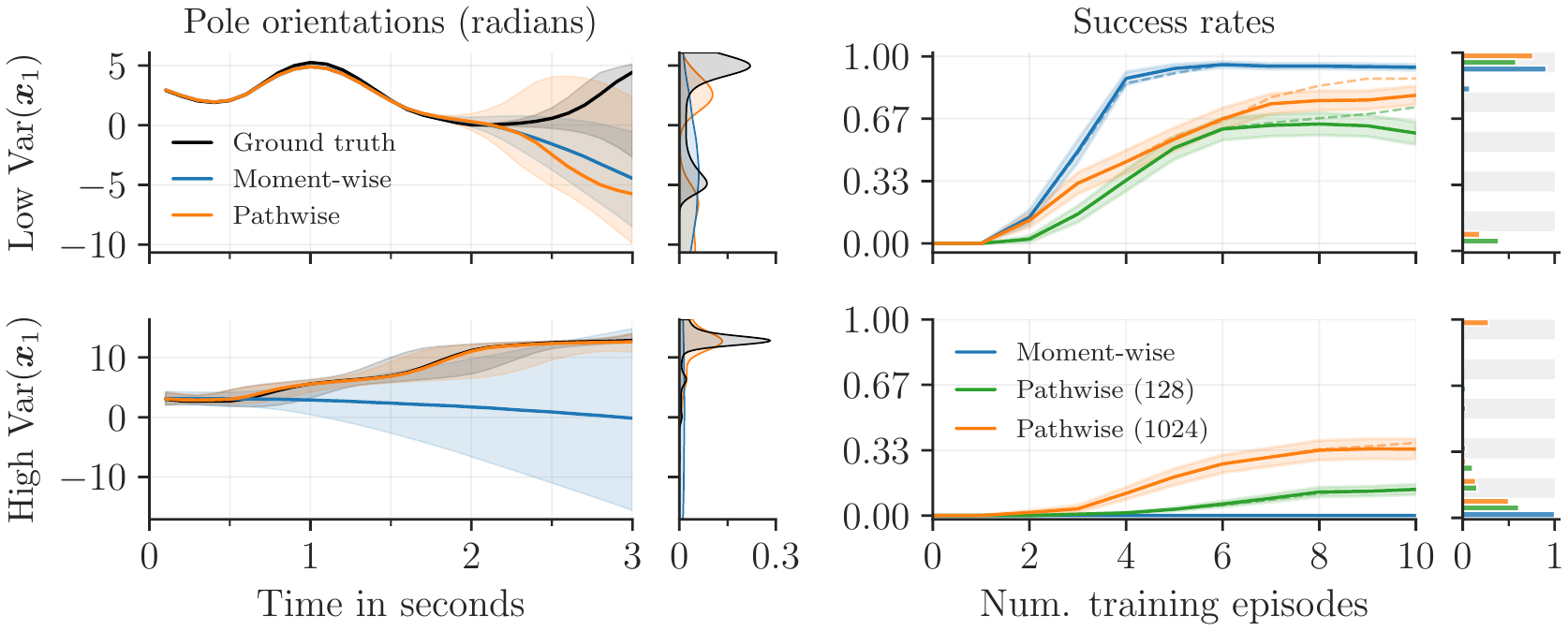}
\caption{Behavior and performance of PILCO algorithms applied to different versions of cart-pole. Marginal distributions of terminal values are shown immediately to the right of each plot. In top and bottom rows, initial state $\v{x}_{1}$ is nearly deterministic and highly randomized, respectively. \emph{Left:} Medians and interquartile ranges of simulated pole orientations. \emph{Right:} Means and standard errors of success rates (estimated separately by unrolling the true system 100 times); dashed lines represent average performances of incumbent policies. On the bottom right, Pathwise $(s)$ indicates that $s$ samples were used during training.
}
\label{fig:cartpole}
\end{figure*}

Here, we are interested in comparing the behavior of moment-based and path-based approaches to optimizing \eqref{eqn:expected_cumulative_reward}. 
To shed light on how these approaches fare in the context of typical learning problems, we experimented with both methods on the \emph{cart-pole} task \cite{barto1983neuronlike}, which consists of moving a cart horizontally along a track in order to swing up and balance a pole, upside down, at a target location.
State vectors $\v{x} = [x_{0}, \dot{x}_{0}, x_{1}, \dot{x}_{1}]^{\top}$ define the position of the cart $x_{0}$, angle of the pole $x_{1}$, and time derivatives thereof; while, actions $\v{a} \in \c{A} = [-10, 10]\,\t{N}$ represent the lateral forces applied to the cart.

We follow \textcite{Deisenroth2015} by using a $0.5 \t{m}$ long, $0.5 \t{kg}$ pole and a $0.5 \t{kg}$ cart with a $0.1 \t{Ns/m}$ friction coefficient. Each episode ran for a length of 3 seconds, discretized at $0.1 \t{s}$ intervals during which time actions were held constant, i.e., zero-order hold control. We set the goal state to $\v{x}_{\t{goal}} = \v{0}$ and define rewards according to a Gaussian function
\[
r(\v{x}) = \exp\bigg(
    \!-\frac{1}{2}
    \underbracket[0.5pt]{(\v{x} - \v{x}_{\t{goal}})^\top
    \m{\Lambda}^{-1}
    (\v{x} - \v{x}_{\t{goal}})}_{\substack{\t{sq. Euclidean distance}\\ \t{between pendulum tip and goal}}}
\bigg),
\]
whose precision matrix $\m{\Lambda}^{-1}$ was chosen such that the bracket term is proportional to the squared Euclidean distance between (the Cartesian coordinates of) the tip of the pole in states $\v{x}$ and $\v{x}_{\t{goal}}$. Along the same lines, an episode was considered successful if the tip of the pole was within $0.1 \t{m}$ of the goal for 10 or more consecutive time steps. Depending on the particular experiment, states were initialized in one of two ways: (i) the standard case $\v{x}_{1} \sim \c{N}\del{[0, \pi, 0, 0]^\top, 0.01 \m{I}}$ or (ii) a challenge variant $\v{x}_{1} \sim \c{N}\del{[0, \pi, 0, 0]^\top, \diag(1, 1, \pi, \pi)}$.

In all cases, system dynamics were represented by a set of independent sparse GPs with squared exponential kernels, each of which predicted a single component of the tangent vector $\v{f} = f(\v{x}, \v{a})$. Upon collecting an additional episode of training data, these GPs were trained from scratch using L-BFGS \cite{liu1989limited} with $m = \min(n, 256)$ inducing variables, whose corresponding inducing locations $\m{Z}$ were initialized via $k$-means.

We defined policies as kernel regressors with inverse link functions $g^{-1} : \R \to [-10, 10]$
\[
\pi_{\v{\theta}}(\.)
&= g^{-1}\del[4]{\sum_{i=1}^{30} w_{i} k(\., \v{x}_{i})}
&
g^{-1}(\.) &= 20 \Phi(\.) - 10,
\]
where $k$ denotes a squared exponential kernel and $\Phi: \R \to [0, 1]$ is the standard normal CDF. Policy parameters $\v\theta$ consisted of centers $(\v{x}_{1}, \ldots, \v{x}_{30})$, weights $\v{w}$, and length scales $\v{l}$. Following \textcite{Deisenroth2015}, policies were initialized once after collecting a random initial episode and subsequently fine-tuned. At each round, $\v{\theta}$ was updated 5000 times using ADAM \cite{kingma2014adam} with gradient norms clipped to one and an initial learning rate $0.01$ that decreased by a factor of ten after every third of training. Pathwise approaches propagated uncertainty by unrolling a separate draw of $\v{x}_{1}$ along each realization of $f$, both of which were resampled prior to each update of $\v{\theta}$.

In line with previous findings, moment-wise PILCO consistently solves the standard cart-pole task within a few episodes \cite{Deisenroth2015}. As initial state distributions become increasingly diffuse, however, moment matching struggles to accurately propagate uncertainty. As seen in the bottom row of Figure~\ref{fig:cartpole}, this inability prevents moment-wise PILCO from learning meaningful policies for the challenge variant of cart-pole. Pathwise alternatives do not experience this issue, but they are not without their own shortcomings. We now discuss relative merits of both approaches to propagating uncertainty. 

Pathwise uncertainty propagation is significantly faster than moment matching, enabling us to simulate (tens of) thousands of trajectories in the time it takes to complete a single forward pass of moment matching. As Monte Carlo methods, pathwise estimates of \eqref{eqn:expected_cumulative_reward} allow us to easily achieve the desired balance of accuracy and cost by controlling the sample size. Here, the use of sampling conveys additional benefits. First, it frees us from the restrictive class of moment matchable models by eliminating the need for closed-form integration. Second, it drastically simplifies implementation and allows us to fully take advantage of modern hardware and software, such as GPUs and automatic differentiation.

On the other hand, we observe that moment-wise uncertainty propagation  sometimes improves performance. By locally linearizing the functions it permeates, moment matching implicitly favors simpler, smoother dynamics $f$ and policies $\pi_{\v{\theta}}$ (see Figure~\ref{fig:cartpole}). Perhaps for this very reason, moment-wise PILCO was found to train more robustly. In particular, its pathwise counterpart was more susceptible to catastrophic forgetting: after solving the problem during the previous round of training, policies trained via pathwise uncertainty propagation were more likely to diverge. To illustrate this behavior, we define the \emph{incumbent} as the policy that achieves highest expected reward under the model $f$. Unlike those of its moment-wise analogue, pathwise PILCO's incumbents (dashed lines) often outperform more recent policies (solid lines) by significant margins; see right side of Figure~\ref{fig:cartpole}. While this issue was easy to reproduce, the relatively abundance of moving pieces makes it difficult to pinpoint precisely why it occurs. 

Many of the challenges highlighted above are common in reinforcement learning, where generic solutions are often outperformed by skillfully tuned, bespoke alternatives. Nevertheless, we hope that the ease and flexibility of pathwise approaches to simulating posteriors will allow Gaussian processes to be applied to a wide range of problems where data-efficiency and uncertainty calibration are paramount.

\subsection{Evaluating deep Gaussian processes}
\label{sec:applications_models}

When applying Gaussian process methods to novel problems, we are often faced with a natural dilemma: many phenomena of interest are definitively non-Gaussian. In order to leverage Gaussian processes to model these phenomena, we typically resort to nonlinearly transforming $f$. Seeing as Gaussian random variables pushed forward through nonlinear functions seldom admit convenient analytic expressions, we are forced to trade tractability for expressivity.

This issue has recently come to the fore in the context of deep Gaussian processes \cite{damianou2013deep}, which represent function priors as compositions
\[
\label{eqn:deep_gp}
f(\.) = 
    \del[1]{f^{(T)} \circ \ldots  \circ f^{(2)} \circ f^{(1)} }
    \del[1]{\.},
\]
where $f^{(t)} \sim \c{GP}\del{\mu^{(t)}, k^{(t)}}$ for $t=1,\ldots,T$.
Following \textcite{Salimbeni2017}, sample-based methods have become the standard approach for evaluating and training these models. 
When a composition \eqref{eqn:deep_gp} consists of independent layers made up of independent, scalar-valued GPs (or linear combinations thereof), $f(\v{x})$ can be efficiently sampled without resorting to expensive matrix operations. When these assumptions are violated, however, sample-based evaluations of deep GPs quickly becomes expensive. One such example was implicitly touched on in preceding sections: Gaussian process models of time-varying stochastic differential equations can be seen as continuous-time analogues of certain deep GPs \cite{hegde2018deep}. In these cases, dependencies between successive evaluations of a GP-based drift function $f^{(t)}(\.) = f(t, \.)$ cause location-scale based evaluations to grind to halt (see Section~\ref{sec:applications_dynamicalSystems}).

Similar issues arise when sampling from compositions of multioutput GPs \cite{van2020framework}. The remainder of this section focuses on the particular case of deep convolutional GPs \cite{blomqvist2019deep, dutordoir2020bayesian}. Here, a deep GP is defined in close analogy to a convolutional neural network \cite{van2017convolutional}: each layer consists of a set of independent maps that are convolved over local subsets (patches) of an image $\rv{x}_{t} \in \mathbb{R}^{ c_{t} \times h_{t} \times w_{t}}$. For a convolutional neural network, these \emph{patch response functions} are affine transformations followed by nonlinearities; while, for a convolutional Gaussian process, they are draws from GP posteriors.

Since each of the $c_{t}$ independent patch response functions produces $h_{t} \times w_{t}$ output features, the covariance of the Gaussian random variables $\rv{x}_{t} = f^{(t - 1)} * \v{x}_{t-1}$ is a block diagonal square matrix of order $c_{t} \times h_{t} \times w_{t}$. Location-scale approaches to jointly sampling these feature maps incur $\c{O}\left(c_{t} \times h_{t}^{3} \times w_{t}^{3}\right)$ cost when computing matrix square roots.\footnote{This cost is separately incurred by each input to each layer, see \textcite{dutordoir2020bayesian}.} Rather than sampling each layer at the current set of inputs, pathwise strategies sample entire models. Said again, pathwise approaches operate by drawing deterministic models from (approximations to) deep GP posteriors.\footnote{Here, we have assumed the use of approximate priors akin to those discussed in Section~\ref{sec:priors}.} By doing so, these methods allow us to evaluate individual layers in $\c{O}\left(c_{t} \times h_{t} \times w_{t}\right)$ time.

\begin{figure*}
\center
\includegraphics[width=\textwidth]{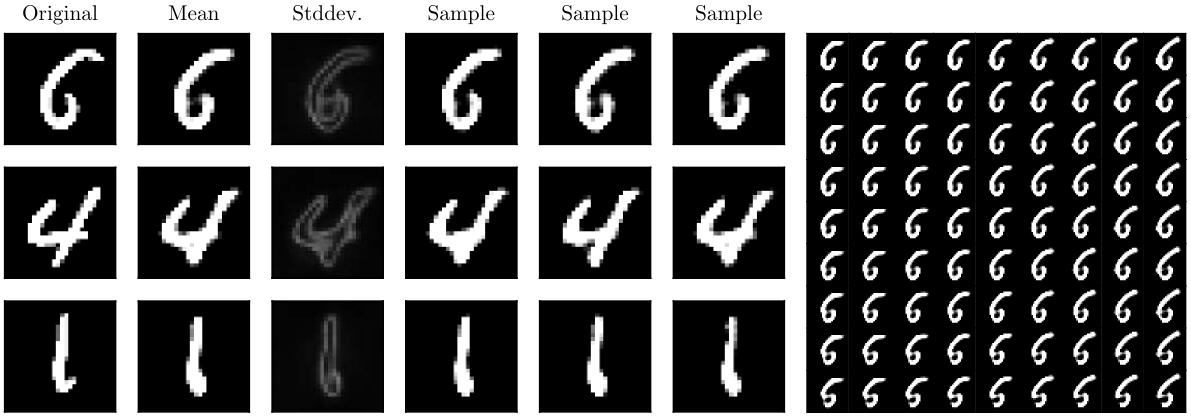}
\caption{Reconstructions of MNIST digits by a deep convolutional GP trained to act as an autoencoder. \emph{Left:} Mean and standard deviations of the (non-Gaussian) distribution over the reconstructions of randomly chosen test images are shown alongside three independently generated samples. \emph{Right:} A 2-dimensional projection of a 25-dimensional latent space is found by performing SVD on the Jacobian of the mean response of the first decoder layer given an encoding of first image shown on the left. Reconstructions using the mean of each decoder layer are shown for a local walk in this 2-dimensional projected space.}
\label{fig:mnist_autoencoder}
\end{figure*}

As an illustrative example, we trained a deep GP to act as an autoencoder for the MNIST dataset \cite{lecun-mnisthandwrittendigit-2010}. For the encoder, we employed a sequence of three convolutional layers, each with 384 inducing patches $\m{Z} \in \mathbb{R}^{c_{t-1} \times 3 \times 3}$ shared between $c_{t} \in (32, 32, 1)$ independent GPs. Strides and padding were chosen to produce a 25-dimensional encoding of a 784-dimensional image. Analogously, we defined the decoder using three transposed convolutional layers, each with 384 inducing patches $\m{Z} \in \mathbb{R}^{c_{t-1} \times 3 \times 3}$ shared between $c_{t} \in (32, 32, 32)$ independent GPs. We then used a final decoder layer, consisting of a single convolutional GP (with the same general outline as above), to resolve penultimate feature maps $\mathbb{R}^{32 \times 28 \times 28}$ into image reconstructions $\mathbb{R}^{1 \times 28 \times 28}$. In all cases, we employ residual connections by using bilinear interpolation to define identity mean functions. Following \textcite{Salimbeni2017}, we initialized inducing patches $\m{Z}$ using $k$-means and inducing distributions to be nearly deterministic. 

Model evaluations were performed by using the sparse update \eqref{eqn:pathwise_update_sparse} together with functions drawn from approximate priors constructed using $\ell = 256$ random Fourier features. We associate each input image with a single draw of the model. Running on a single GPU, the model outlined above was jointly trained in just over 40 minutes using $10^{4}$ steps of gradient descent with a batch size of 128. Figure~\ref{fig:mnist_autoencoder} visualizes the behavior of reconstructions for a randomly chosen set of test images. While this GP-based autoencoder performs fairly well, there is an abundance of open questions regarding deep Gaussian processes in the wild. We hope that the ability to efficiently sample and evaluate draws of composite functions \eqref{eqn:deep_gp} will enable future works to further explore this space.

\section{Conclusion}
\label{sec:conclusion}
Throughout this work, we have used Matheron's update rule (Theorem~\ref{thm:matheron_finite}) as the driving force for looking at Gaussian processes in a different light. This simple equivalence, namely
\[
(\v{a} \given \v{b} = \v{\beta}) 
\eqd 
    \v{a} + \m{\Sigma}_{\v{a},\v{b}}^{\vphantom{-1}} \m{\Sigma}_{\v{b},\v{b}}^{-1}(\v{\beta} - \v{b}),
    \tag{\ref{eqn:matheron_finite}}
\]
allows us to think about GP posteriors at the level of sample paths. Doing so not only helps to clarify existing ideas, but enables us to envision new ones. As it turns out, many of these ideas are intimately practical. 

We have repeatedly stressed how pathwise conditioning enables us to separate Gaussian process priors from data-drive updates. We may then leverage these objects' mathematical properties to construct efficient approximators. As a rule, however, the patterns at play in both cases are fundamentally different: priors typically admit convenient global trends, whereas data often exerts localized influences. Fully exploiting these properties requires us to use different representations, such as different bases, for each of these terms. Decomposing GP posteriors into global and local components makes this particularly easy.

Pathwise and distributional conditioning are complementary viewpoints that lead to complementary methods. In cases where quantities of interest are readily obtained by working with (finite-dimensional) marginals, distributions act as a natural lens for viewing Gaussian process posteriors. 
On the other hand, when a problem involves arbitrarily many random variables, random functions provide a more direct path to efficient solutions.

All said and done, pathwise conditioning is a powerful tool for both reasoning about and working with GPs. Methods that fit this mold are generally straightforward to use and can easy be tailored to take advantage of a given task's properties. We have done our best to overview key ingredients for efficiently sampling from Gaussian process posteriors and look forward to learning more about related ideas alongside you, the reader.

\acks
We are grateful to Prof. Mikhail Lifshits for his helpful comments regarding the theoretical part of this work.
J.T.W. was supported the EPSRC Centre for Doctoral Training in High Performance Embedded and Distributed Systems, reference EP/L016796/1.
V.B. and P.M. were supported by "Native towns", a social investment program of PJSC Gazprom Neft and by the Ministry of Science and Higher Education of the Russian Federation, agreements N\textsuperscript{\underline{o}} 075-15-2019-1619 and N\textsuperscript{\underline{o}} 075-15-2019-1620.
A.T. was supported by the Department of Mathematics at Imperial College London.

\printbibliography

\end{document}